\def\eqref#1{Equation~(\ref{#1})}
\def\1{\bm{1}}
\def\vv{{\bm{v}}}
\def\mI{{\bm{I}}}
\DeclareMathAlphabet{\mathsfit}{\encodingdefault}{\sfdefault}{m}{sl}
\SetMathAlphabet{\mathsfit}{bold}{\encodingdefault}{\sfdefault}{bx}{n}
\newcommand{\E}{\mathbb{E}}
\newcommand{\R}{\mathbb{R}}
\newtheorem{theorem}{Theorem}
\newtheorem{corollary}{Corollary}
\newtheorem*{theorem*}{Theorem}
\newtheorem{lemma}{Lemma}
\newtheorem{proposition}{Proposition}
\title{Black-Box Certification with Randomized Smoothing:\\ A Functional Optimization Based Framework}
\author{%
  Dinghuai Zhang$^*$ \\
  Mila\\
  \texttt{dinghuai.zhang@mila.quebec} \\
  % examples of more authors
   \And
  {Mao Ye}$^*$, {Chengyue Gong}\thanks{Equal contributions} \\
  Department of Computer Science \\
   University of Texas at Austin   \\
   \texttt{\{my21, cygong\}@cs.utexas.edu} \\
   \AND
   Zhanxing Zhu \\
   School of Mathematical Sciences\\
   Peking University \\
   \texttt{zhanxing.zhu@pku.edu.cn} \\
   \And
   Qiang Liu \\
   Department of Computer Science \\
   University of Texas at Austin \\
   \texttt{lqiang@cs.utexas.edu} \\
  % \And
  % Coauthor \\
  % Affiliation \\
  % Address \\
  % \texttt{email} \\
}
\begin{document}

\maketitle
\newcommand{\fix}{\marginpar{FIX}}
\newcommand{\new}{\marginpar{NEW}}

\global\long\def\u{\boldsymbol{\mu}}%
\global\long\def\x{\boldsymbol{x}}%
\global\long\def\E{\mathbb{E}}%
\global\long\def\0{\boldsymbol{0}}%
\global\long\def\z{\boldsymbol{z}}%
\global\long\def\R{\mathbb{R}}%
\global\long\def\d{\boldsymbol{\delta}}%
\global\long\def\ff{f_{\pi_{\0}}^{\sharp}}%

\newcommand{\functional}{\text{functional~}}
\def\viz{\emph{viz}\onedot}

\newcommand{\ftrue}{f^\sharp}
\newcommand{\ft}{\ftrue}
\newcommand{\xt}{\boldsymbol{x}_0}
\newcommand{\xtrue}{\boldsymbol{x}_0}
\renewcommand{\x}{\boldsymbol{x}_0}
\renewcommand{\z}{\boldsymbol{z}}
\newcommand{\conv}{*}
\newcommand{\RR}{\R}
\renewcommand{\vec}[1]{\boldsymbol{#1}}
\renewcommand{\vv}[1]{\boldsymbol{#1}}
\newcommand{\red}[1]{\textcolor{red}{#1}}
\newcommand{\pizero}{\pi_{\boldsymbol{0}}}
\newcommand{\fpizero}{{f_{\pizero}^\sharp}}
\newcommand{\normal}{\mathcal{N}}
\newcommand\norm[1]{\left\lVert#1\right\rVert}
\newcommand{\F}{\mathcal{F}}
\newcommand{\B}{\mathcal{B}}
\newcommand{\D}{\mathbb{D}}
\newcommand{\V}{\mathcal{L}_{\pizero}}
\newcommand{\laplace}{\mathrm{Laplace}}

\newcommand{\ant}[1]{~~~~~~~\text{\med{//#1}}}
\newcommand{\Qiang}[1]{\textcolor{red}{#1}\\}
\newcommand{\qiang}[1]{\textcolor{red}{#1}}
\newcommand{\Todo}[1]{\textcolor{red}{TODO: #1}\\}
\newcommand{\todo}[1]{\textcolor{gray}{TODO: #1}}
\newcommand{\med}[1]{\textcolor{magenta}{#1}}
\newcommand{\blue}[1]{\textcolor{blue}{#1}}
\newcommand{\gray}[1]{\textcolor{blue}{#1}}
\newcommand{\green}[1]{\textcolor{green}{#1}}
\newcommand{\myempty}[1]{}

\begin{abstract}

  Randomized classifiers have been shown to provide a promising approach for achieving certified robustness against adversarial attacks in deep learning. However, most existing methods only leverage Gaussian smoothing noise and only work for $\ell_2$ perturbation. We propose a general framework of adversarial certification with non-Gaussian noise and for more general types of attacks, from a unified functional optimization perspective. Our new framework allows us to identify a key trade-off  between accuracy and robustness via designing smoothing distributions and leverage it to design new families of non-Gaussian smoothing distributions that work more efficiently for different $\ell_p$ settings, including $\ell_1$, $\ell_2$ and $\ell_\infty$ attacks. Our proposed methods achieve better certification results than previous works and provide a new perspective on randomized smoothing certification.
\end{abstract}

\section{Introduction}
% Deep neural networks have achieved state-of-the-art performance on many tasks such as image classification \cite{he2016deep} and language modeling \cite{DBLP:conf/naacl/DevlinCLT19}. Nonetheless, modern deep learning models are highly sensitive to small and adversarially crafted perturbations on the inputs \cite{goodfellow2014explaining}, which means that human-imperceptible changes on inputs could cause the model to make dramatically different predictions. 
Although many robust training algorithms have been developed to overcome adversarial attacks \cite{kannan2018adversarial, zhang2019defense, zhai2019adversarially}, most heuristically developed methods can be shown to be broken by more powerful adversaries eventually (\emph{e.g.,} \cite{athalye2018obfuscated, madry2017towards, zhang2019you, wang2019improving}). 
This casts an urgent demand for developing robust classifiers with provable worst-case guarantees. One promising approach for certifiable robustness is the recent \emph{randomized smoothing method}  \cite{lecuyer2018certified, cohen2019certified, salman2019provably, lee2019stratified, li2018second, dvijotham2020a, teng2020ell, jia2020certified}, which constructs smoothed classifiers with certifiable robustness by introducing noise on the inputs. Compared with the other more traditional certification approaches \cite{wong2017provable, DBLP:conf/uai/DvijothamSGMK18, jordan2019provable} that exploit special structures of the neural networks (such as the properties of ReLU), the randomized smoothing 
approaches work more flexibly on general black-box classifiers and is shown to be more scalable and provide tighter bounds on challenging datasets such as ImageNet~\cite{deng2009imagenet}.  

Most existing  methods use Gaussian noise for smoothing. Although appearing to be a natural choice, one of our key observations is that the Gaussian distribution is, in fact a sub-optimal choice in high dimensional spaces even for $\ell_2$ attack. We observe that there is a counter-intuitive phenomenon in high dimensional spaces \cite{vershynin2018high}, that almost all of the probability mass of standard Gaussian distribution concentrates around the sphere surface of a certain radius. This makes tuning the variance of Gaussian distribution an inefficient way to trade off robustness and accuracy for randomized smoothing.
% This is due to a counter-intuitive phenomenon in high dimensional spaces \cite{vershynin2018high} that almost all of the probability mass of standard Gaussian  distribution concentrates around the sphere of radius one (and hence ``soap bubble'' in the title), instead of the center point (which corresponds to the original input). As a result, 
% the variance of the Gaussian noise needs to be sufficiently small 
% to yield a good approximation to the original classifiers (by squeezing the ``soap bubble'' towards the center point), 
% which, however,  makes it difficult to verify due to the small noise.
% Further, for the more challenging $\ell_\infty$ attack, Gaussian smoothing provably degenerates in high dimensions. 

\textbf{Our Contributions}\space\space
We propose a general framework of adversarial certification using non-Gaussian smoothing noises, based on a new \functional optimization perspective. 
Our framework unifies the methods of \cite{cohen2019certified} and \cite{teng2020ell} as special cases, and is applicable to more general smoothing distributions and more types of attacks beyond $\ell_2$-norm setting. 
% Importantly, our new framework reveals 
% a trade-off between accuracy and robustness for guiding better choices of smoothing distributions. 
Leveraging our insight, 
we develop a new family of distributions for better certification results on $\ell_1$, $\ell_2$ and $\ell_\infty$ attacks.
An efficient computational approach is developed to enable our method in practice. 
Empirical results show that our new framework and smoothing distributions  outperform existing approaches for $\ell_1$, $\ell_2$ and $\ell_\infty$ attacking, on datasets such as CIFAR-10 and ImageNet. 
\section{Related Works}

% \textbf{Empirical Defenses}\space\space Since \cite{szegedy2013intriguing} and \cite{goodfellow2014explaining}, many previous works have focused on utilizing small perturbation $\delta$ under certain constraint, \emph{e.g.} in a $\ell_p$ norm ball, to attack a neural network. Adversarial training \cite{madry2017towards} and its variants  \cite{kannan2018adversarial, zhang2019defense, zhai2019adversarially} are the most successful defense methods. However, these empirical defense methods are still easy to be broken and cannot provide provable defenses.

\textbf{Certified Defenses}\space\space
Unlike the empirical defense methods,
once a classifier can guarantee a consistent prediction for input within a local region, 
it is called a certified-robustness classifier.
% \subsection{Adversarial Certification}
% A robust certificate ensures a constant prediction under a given perturbation region.
\emph{Exact} certification methods provide the minimal perturbation condition which leads to a different classification result. 
This line of work focuses on deep neural networks with ReLU-like activation that makes the classifier a piece-wise linear function.
This enables researchers to introduce satisfiability modulo theories \cite{carlini2017provably, ehlers2017formal} 
or mix integer linear programming \cite{cheng2017maximum, dutta2017output}. 
\emph{Sufficient} certification methods take a conservative way and bound the Lipschitz constant or other information of the network \cite{jordan2019provable, wong2017provable, raghunathan2018semidefinite, zhang2018efficient}.
However, 
these certification strategies share a drawback that they are not feasible on large-scale scenarios, \emph{e.g.} large and deep networks and datasets.

\textbf{Randomized Smoothing}\space\space
To mitigate this limitation of previous certifiable defenses, 
improving network robustness via randomness has been recently 
discussed \cite{xie2017mitigating, liu2018towards}.
% In certification community,
\cite{lecuyer2018certified} first introduced randomization with technique in differential privacy. 
\cite{li2018second} improved their work 
with a bound given by R\'enyi divergence.
% using method inspired from information theory. 
In succession,
\cite{cohen2019certified} firstly provided a \emph{tight} bound for \emph{arbitrary} Gaussian smoothed classifiers based on previous theorems found by \cite{li1998some}.
% As more robust classifiers can give better certification performance,
\cite{salman2019provably} combined the empirical and certification robustness, 
by applying adversarial training on randomized smoothed classifiers to achieve a higher certified accuracy.
\cite{lee2019stratified} focused on $\ell_0$ norm perturbation setting, and proposed a discrete smoothing distribution which can be shown perform better than the widely used Gaussian distribution. \cite{teng2020ell} took a similar statistical testing approach with \cite{cohen2019certified}, utilizing Laplacian smoothing to tackle $\ell_1$ certification problem. \cite{jia2020certified} extended the approach of \cite{cohen2019certified} to a top-k setting. \cite{dvijotham2020a} extends the total variant used by ~\cite{cohen2019certified} to $f$-divergences. 
% Similar to \cite{dvijotham2020a}, 
Recent works \cite{yang2020randomized, blum2020random, kumar2020curse} discuss further problems about certification methods.
We also focus on a generalization of randomized smoothing, but with a different view on loosing the constraint on classifier.

Noticeably, \cite{yang2020randomized} also develops analysis on $\ell_1$ setting and provide a thorough theoretical analysis on many kinds of randomized distribution.
 We believe the \cite{yang2020randomized} and ours have different contributions and were developed concurrently. \cite{yang2020randomized} derives the optimal shapes of level sets for $\ell_p$ attacks based on the Wulff Crystal theory, while our work, based on our functional-optimization framework and accuracy-robustness decomposition (Eq.\ref{eq:objective2}), proposes to use distribution that is more concentrated toward the center.
Besides, we also consider a novel distribution using mixed $\ell_2$ and $\ell_\infty$ norm for $\ell_{\infty}$ adversary, which hasn't been studied before and improve the empirical results.

% \vspace{-5pt}

\section{Black-box Certification as Functional Optimization} % not a good name?
\label{sec:black-box}

% We start from introducing background of the adversarial certification problem and the randomized smoothing method. In Section~\ref{sec:certification}, we propose our general framework for adversarial certification using general smoothing noises, from a new \functional optimization perspective. Our framework unifies the method of \cite{cohen2019certified, teng2020ell} as special cases, and reveals a potential trade-off between accuracy and robustness that provides important guidance for  better choices of smoothing distributions in Section~\ref{sec:filling}. 

% \vspace{-5pt}
\subsection{Background}
%\paragraph{Randomized Smoothed Classifier} 
\paragraph{Adversarial Certification}
For simplicity, we consider binary classification of predicting binary labels $y\in \{0,1\}$ given feature vectors $x\in \RR^d$. The extension to multi-class cases is straightforward, and is discussed in Appendix~\ref{sec:bilateral}. We assume $\ft \colon \RR^d \to [0,1]$ is a given binary classifier ($\sharp$ means the classifier is \emph{given}), which maps from the input space $\RR^d$ to either the positive class probability in interval $[0,1]$ or binary labels in $\{0,1\}$. 
%is a binary classifier.
In the robustness certification problem, a testing data point $\xt \in \RR^d$ is given,  
%and an \emph{adversarial set} $\B$, which is a neighorhood 
and one is asked to verify if the classifier outputs the same prediction when the input $\xt$ is perturbed arbitrarily in $\B$, a given neighborhood of $\xt$.
Specifically, let $\B$ be a set of possible perturbation vectors, \emph{e.g.}, $\B = \{\d \in \RR^d : \norm{\d}_p \leq r\}$ for $\ell_p$ norm with a radius $r$. 
If the classifier predicts $y = 1$ on $\xt$, i.e. $\ftrue(\xt) > 1/2$, we want to verify if $\ftrue(\xt + \vec\delta) > 1/2$ still holds for any $\delta \in \B$. Through this paper, we consider the most common adversarial settings: $\ell_1, \ell_2$ and $\ell_\infty$ attacks.
% two types of attacks, including the $\ell_{2}$ attack $\B_{\ell_2,r}\defeq\left\{ \d:\left\Vert \d\right\Vert _{2}\le r\right\} $, and the $\ell_{\infty}$ attack $\B_{\ell_\infty, r}\defeq\left\{ \d:\left\Vert \d\right\Vert _{\infty}\le r\right\} $. 

\vspace{-5pt}
\paragraph{Black-box Randomized Smoothing Certification}
Directly certifying $\ftrue$ heavily relies on the smooth property of $\ftrue$, which has been explored in a series of prior works \cite{ wong2017provable, lecuyer2018certified}. 
These methods typically depend on the special structure-property (\emph{e.g.}, the use of ReLU units) of $\ftrue$,
and thus can not serve as general-purpose algorithms for any type of networks. 
Instead, We are interested in \emph{black-box} verification methods that could work for \emph{arbitrary} classifiers. 
One approach to enable this, as explored in recent works \cite{cohen2019certified, lee2019stratified}, is to replace $\ftrue$ with a smoothed classifier by convolving it with Gaussian noise, and verify the \emph{smoothed} classifier.

Specifically, assume $\pizero$ is a smoothing distribution with zero mean and bounded variance, \emph{\emph{e.g.}}, $\pizero = \mathcal N(\vec 0, \sigma^2)$. 
The randomized smoothed
classifier is defined by 
\begin{align*} %\label{equ:fsmooth}
\fpizero(\xtrue) := 
\mathbb{E}_{\z\sim\pi_{\boldsymbol{0}}} \left [ \ftrue(\xtrue +\z) \right], % = \fpizero(\xtrue), 
\end{align*}
which returns the averaged probability of $\x+\z$ under the perturbation of $\z \sim \pi_{\boldsymbol{0}}$.
Assume we replace the original classifier with $\fpizero$, then the goal becomes certifying $\fpizero$ using its inherent smoothness. 
Specifically, if $\fpizero(\xtrue) > 1/2$, we want to certify that $\fpizero(\xtrue + \d) > 1/2 $ for every $\d \in \B$, that is, we want to certify that
\begin{align}
\label{equ:p02}
\min_{\d \in \B} \fpizero(\x +\d) = 
\min_{\d \in \B} \E_{\z\sim \pizero}[\ftrue(\x+\z+\d)]  > \frac{1}{2}.
\end{align}
% \vspace{-1mm}
In this case, it is sufficient to obtain a 
\emph{guaranteed lower bound} of $\min_{\d \in \B} \fpizero(\x +\d)$ and check if it is larger than $1/2$. 
When $\pizero$ is Gaussian $\normal(\vec 0, ~ \sigma^2)$ and for $\ell_2$ attack,  this problem was studied in  \cite{cohen2019certified}, which shows that a lower bound of 
\begin{align}
\label{equ:phib}
\min_{\z \in \B} 
\E_{\z\sim \pizero}[\ftrue(\x +\z)] 
\geq \Phi(\Phi^{-1}(\fpizero(\x))  - \frac{r}{\sigma}), 
\end{align}
where $\Phi(\cdot)$ is the cumulative density function (CDF) of standard Gaussian distribution.
% and $\Phi^{-1}(\cdot)$ represents its inverse cumulative distribution function. 
The proof of this result in \cite{cohen2019certified} 
uses Neyman-Pearson lemma \cite{li1998some}. In the following section, we will show that this bound is a special case of the proposed functional optimization framework for robustness certification.
\subsection{Constrained Adversarial Certification}
\label{sec:certification}

%One approach to the certification problem above is to calculate a \emph{guaranteed lower bound} of $E(\pizero, \ftrue, \B)$ and check if it is larger than $0.5$. 
%The key challenge is that the optimization on $\boldsymbol{\delta}$ is often a challenging non-convex optimization problem.
%A guaranteed lower bound must have a exact worst case $\mu$, which cannot be achieved by most of the optimization methods.
%In other words, directly solving an optimization problem $\min_{\d \in \mathcal{B}} f(\d)$ can only yield an upper bound, instead of \emph{guaranteed} lower bound. 
% with non-asymptotic confidence bound.   
%
%\textcolor{red}{TODO: add some discussion on upper - lower bound way in appendix \ref{sec:bilateral}} 
 %However, since in practice $f$ tends to be a complicated functionand the landscape of optimizing $\d$ is non-convex, there is no guarantee that $\min_{\d\in\mathcal{\B}}\pi_{\boldsymbol{0}}[f_{c}](\x+\d)$ can be exactly computed. 
%A natural idea is to find a lower bound of $\min_{\d\in\mathcal{\B}}\pi_{\boldsymbol{0}}[f_{c}](\x+\d)$  that can be exactly computed. 
%To address the problem, we consider 

We propose a \textbf{constrained adversarial certification (CAC)} framework, which yields a guaranteed lower bound for Eq.\ref{equ:p02}.
% based on constrained \functional optimization. 
The main idea is simple: 
assume $\mathcal F$ is a function class which is known to include $\ftrue$,   
then the following optimization immediately yields a guaranteed lower bound
\begin{equation}
\begin{split}
 \min_{\d\in\mathcal{\B}}\fpizero(\x+\d) 
\geq \min_{f\in\F} \min_{\d\in\mathcal{\B}}
\bigg \{  f_{\pizero}(\x + \d)
~~\mathrm{s.t.}~~
f_{\pizero}(\x) = \fpizero(\x) 
\bigg\}
,
\label{eq:constrant_opt}
\end{split}
\end{equation}

where we define $f_{\pizero}(\x) = \E_{\z\sim \pizero}[f(\x+\z)]$ for any given $f$. Then we need to search for the minimum value of $f_{\pizero}(\x+\d)$ %\E_{\pizero}[f(\x+\d)]:= \E_{\z\sim\pizero}[f(\x+\d+\z)]$
for all classifiers in $\mathcal F$ that satisfies $f_{\pizero}(\x) = \fpizero(\x)$. This obviously yields a lower bound once $\ftrue \in \mathcal F$. 
%{\color{red}$\ftrue$ vs. $f$ need to be distinguished}
%where $\ftrue$ is the base classifier (\emph{e.g.} a deep neural network)
%and $f$ is an arbitrary function in the function class $\mathscr {F}$.
%Obviously, 
If $\mathcal {F}$ includes only $\ftrue$,  
then the bound is exact, but is computationally prohibitive due to the difficulty of optimizing $\boldsymbol{\delta}$. 
%When 
The idea is then to choose $\F$ properly 
to incorporate rich information of $\ftrue$, while allowing us to 
calculate the lower bound in Eq.\ref{eq:constrant_opt} computationally tractably.
%still yield  computationally tractable lower bound. 
%be properly in order to yield bounds that are both tight and 
%where $\mathcal{F}$ can be viewed as the function class of $f_{c}$and $\mathcal{H}[\cdot]$ is some \functional that extract certain information ofthe \emph{given} classifier $f^*_{c}$ such that $f$ also holds the information.Notice that if $\mathcal{H}[\cdot]$ provides all information of $f_{c}$, thenthe solution of problem (\ref{eq:constrant_opt}) is exactly the solutionof problem (\ref{eq:certify}). $\mathcal{H}$, $\F$ and $\pi_{\boldsymbol{0}}$need to be properly chosen such that the problem \ref{eq:constrant_opt}is solvable. 
In this paper, we consider the set of all functions bounded in $[0,1]$, namely 
\begin{align}
\F_{[0,1]} & =\bigg \{f:f(\z)\in[0,1], \forall \z\in \mathbb{R}^d \bigg \}, 
\label{equ:f01}
\end{align}
which guarantees to include all $\ftrue$ by definition.
% There are other $\mathcal F$ that also yields computationally tractable bounds, including the $L_p$ functional space $\mathcal F =\{f \colon \norm{f}_{L_p} \leq v\}$, which we leave for future work.

Denote by $\V(\F, \B)$ the lower bound in Eq.\ref{eq:constrant_opt}. We can rewrite it into the following minimax form using the Lagrangian function, % of Eq.\ref{eq:constrant_opt}, we have 
\begin{equation}
\begin{split}
  \V(\F,\B) = \min_{f\in\F} \min_{\d\in\B} \max_{\lambda\in\R} L(f, \d, \lambda) \triangleq \min_{f\in\F} \min_{\d\in\B} \max_{\lambda\in\R}\bigg\{ f_{\pizero}(\x+\d) -\lambda(f_{\pizero}(\x) - \fpizero(\x)) \bigg\},    
\end{split}
\end{equation}
where $\lambda$ is the Lagrangian multiplier. 
Exchanging the $\min$ and $\max$ yields the following dual form. 
%This yields the following dual form of the bound. % which allows tractable computation. 
%{\color{red}Despite the problem of Equation(\ref{eq:constrant_opt}) is a constraint optimization problem on \functional space and vector space, we show that it can be exactly solved by using idea of Lagrange dual method.}

\begin{theorem} \label{thm:dual}
I) %Denote by $E(\mathcal F, \B)$ the lower bound in Eq.\ref{eq:constrant_opt}, and 
\textbf{(Dual Form)} Denote by $\pi_{\d}$ the distribution of $\z +\d$  when $\z \sim \pizero$.  
Assume $\F$ and $\B$ are compact set. 
We have the following lower bound of $\V(\F, \B)$: 
%Let $L(f, \d, \lambda)$ be the Lagrangian function of the constraint optimization 
%We have the following lower bound by duality: 
%we have 
\begin{equation}
\begin{split}
\V(\F, \B) \ge 
\max_{\lambda\geq0}   \min_{f\in\F}\min_{\d\in\B}  
L(f, \d, \lambda) = \max_{\lambda\ge 0}\left\{\lambda \fpizero(\x)-\max_{\d\in\mathcal{\B}}\D_{\F}\left(\text{\ensuremath{\lambda}}\pi_{\0}~\Vert~ \pi_{\d} \right)\right\},
\label{equ:mainlowerbound}
\end{split}
\end{equation}
where we define the discrepancy term $\D_{\F}\left(\text{\ensuremath{\lambda}}\pi_{\0} ~\Vert~ \pi_{\d} \right)$ as
$$
\max_{f \in \F} \Big \{ \lambda\E_{\z\sim\pizero} [f(\x+\z)] - \E_{\z\sim\pi_{\d}}[f(\x+\z)] \Big\}, 
$$
which measures the difference of $\lambda \pizero$ and $\pi_{\d}$ by seeking the maximum discrepancy of the expectation for $f\in \F$. 
As we will show later, the bound in (\ref{equ:mainlowerbound}) is computationally tractable with proper  $(\F, \B, \pizero)$.
%akin to integral probability metric (IPM). 
%In addition, when $\F$ and $\B$ are compact set,the strong duality holds and the inequality in Eq.\ref{equ:mainlowerbound} achieves equality. 

II) When $\F = \F_{[0,1]} :=\{f\colon f(x)\in [0,1], ~~x\in \R^d\}$, we have in particular 
\begin{align*}
\D_{\F_{[0,1]}}\left(\text{\ensuremath{\lambda}}\pi_{\0} ~\Vert~ \pi_{\d} \right) =  \int\left(\text{\ensuremath{\lambda}}\pi_{\0}(\z)-\pi_{\d}(\z)\right)_{+}d\z,
% = \E_{\z\sim \pizero} \left [ \left (\lambda - \frac{\pi_{\d}(\z)}{\pizero(\z)}\right)_+\right ]  
\end{align*}
where $(t)_+ = \max(0, t)$.
Furthermore, we have 
$0\leq \D_{\F_{[0,1]}}\left(\text{\ensuremath{\lambda}}\pi_{\0} ~\Vert~ \pi_{\d} \right) \leq \lambda$ for any $\pizero$, $\pi_{\d}$ and $\lambda >0$. 
Note that $\D_{\F_{[0,1]}}\left(\text{\ensuremath{\lambda}}\pi_{\0} ~\Vert~ \pi_{\d} \right)$ coincides with the total variation distance between $\pizero$ and $\pi_{\d}$ when $\lambda = 1$. 

III) (Strong duality) Suppose $\mathcal{F}=\mathcal{F}_{[0,1]}$ and suppose that for any $\lambda\ge0$, $\min_{\d\in\B}\min_{f\in\mathcal{F}_{[0,1]}}L\left(f,\d,\lambda\right)=\min_{f\in\mathcal{F}_{[0,1]}}L\left(f,\d^{*},\lambda\right)$, for some $\d^{*}\in\B$, we have 
\[
\V\left(\mathcal{F},\B\right)=\max_{\lambda\ge0}\min_{\d\in\B}\min_{f\in\mathcal{F}}L\left(f,\d,\lambda\right).
\]
\end{theorem}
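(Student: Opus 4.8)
For Part~I the plan is to apply the elementary minimax inequality $\min_{(f,\d)}\max_\lambda \ge \max_\lambda\min_{(f,\d)}$ to $L$ and then evaluate the right-hand side in closed form; compactness of $\F$ and $\B$ is used only to guarantee the extrema defining that expression are attained. First I would use the change of variables defining $\pi_{\d}$ to write $f_{\pizero}(\x+\d)=\E_{\z\sim\pi_{\d}}[f(\x+\z)]$, so that $L(f,\d,\lambda)=\lambda\fpizero(\x)+\E_{\z\sim\pi_{\d}}[f(\x+\z)]-\lambda\E_{\z\sim\pizero}[f(\x+\z)]$. Minimizing over $f\in\F$ at fixed $(\d,\lambda)$ pulls out the constant $\lambda\fpizero(\x)$ and identifies the remainder as $-\D_{\F}(\lambda\pizero\,\Vert\,\pi_{\d})$; minimizing over $\d\in\B$ replaces $\D_{\F}$ by $\max_{\d\in\B}\D_{\F}$, giving \eqref{equ:mainlowerbound}. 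For Part~II, with $\F=\F_{[0,1]}$ I would substitute $g(\z):=f(\x+\z)$, so $\D_{\F_{[0,1]}}(\lambda\pizero\,\Vert\,\pi_{\d})=\max_{0\le g\le1}\int(\lambda\pizero(\z)-\pi_{\d}(\z))g(\z)\,d\z$, which is maximized pointwise by $g=\1[\lambda\pizero>\pi_{\d}]\in\F_{[0,1]}$, yielding $\int(\lambda\pizero(\z)-\pi_{\d}(\z))_+d\z$. The two-sided bound then follows at once: $g\equiv0$ gives $\D_{\F_{[0,1]}}\ge0$, while $(\lambda\pizero-\pi_{\d})_+\le\lambda\pizero$ for $\lambda\ge0$ integrates to $\lambda$; the $\lambda=1$ case is the standard variational formula for total variation.

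\textbf{Part~III: reduction to a fixed $\d$.} The plan for strong duality is to close the gap ``one $\d$ at a time'' and then swap orders. Since $\min_{f}$ and $\min_{\d}$ commute, $\V(\F,\B)=\min_{\d\in\B}p(\d)$, where $p(\d):=\min_{f\in\F_{[0,1]}}\max_{\lambda}L(f,\d,\lambda)=\min\{\E_{\z\sim\pi_{\d}}[f(\x+\z)]:f\in\F_{[0,1]},\ \E_{\z\sim\pizero}[f(\x+\z)]=\fpizero(\x)\}$; this constrained problem is feasible because $\ftrue\in\F_{[0,1]}$ meets the constraint. Writing $h(\d,\lambda):=\min_{f\in\F_{[0,1]}}L(f,\d,\lambda)=\lambda\fpizero(\x)-\int(\lambda\pizero(\z)-\pi_{\d}(\z))_+d\z$ (Part~II), the crucial sub-claim is the fixed-$\d$ strong duality $p(\d)=\max_{\lambda\ge0}h(\d,\lambda)$. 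This is precisely the Neyman--Pearson lemma: for fixed $\d$ it is a linear program over the convex, weak-$*$ compact set $\F_{[0,1]}$ with objective and constraint affine in $f$ and $L$ affine in $\lambda$, so a minimax theorem gives no gap; moreover the value $c\mapsto\min\{\E_{\pi_{\d}}[g]:\E_{\pizero}[g]=c,\ 0\le g\le1\}$ is convex and nondecreasing in $c$, so the equality constraint can be relaxed to ``$\ge$'' without changing the value, which forces the optimal multiplier to be nonnegative.

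\textbf{Part~III: conclusion and the hard step.} With the sub-claim in hand I would chain $\V(\F,\B)=\min_{\d\in\B}p(\d)=\min_{\d\in\B}\max_{\lambda\ge0}h(\d,\lambda)\ge\max_{\lambda\ge0}\min_{\d\in\B}h(\d,\lambda)=\max_{\lambda\ge0}h(\d^{*},\lambda)=p(\d^{*})\ge\min_{\d\in\B}p(\d)=\V(\F,\B)$, where the inequality in the middle is the minimax inequality and the next equality uses the hypothesis that a single $\d^{*}\in\B$ minimizes $h(\cdot,\lambda)$ for every $\lambda\ge0$. Hence every inequality is an equality, and since $h=\min_{f}L$ this reads $\V(\F,\B)=\max_{\lambda\ge0}\min_{\d\in\B}\min_{f\in\F}L(f,\d,\lambda)$, the claim. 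I expect the main obstacle to be the fixed-$\d$ strong duality: one must check that the infinite-dimensional linear program over $\F_{[0,1]}$ has no duality gap and that the optimal multiplier may be taken $\ge0$---this is exactly where the convexity of $\F_{[0,1]}$ is used, and it is why Part~III is restricted to $\F=\F_{[0,1]}$. A secondary subtlety is the precise meaning of the hypothesis---a common minimizer $\d^{*}$ of $h(\cdot,\lambda)$ across all $\lambda\ge0$, not merely attainment of each $\min_{\d}$---since that common minimizer is exactly what licenses interchanging $\min_{\d}$ and $\max_{\lambda}$.
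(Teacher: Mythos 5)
Your proposal is correct and follows essentially the same route as the paper: weak duality via the minimax inequality plus the closed-form evaluation of $\min_f L$ for Parts I--II, and for Part III a fixed-$\d$ strong-duality lemma combined with the common-minimizer hypothesis on $\d^*$ to sandwich $\V(\F,\B)$ between the primal and dual values. The only difference is presentational: you delegate the fixed-$\d$ duality (including the nonnegativity of the multiplier, which you justify by the scaling argument showing the value is nondecreasing in the constraint level) to standard infinite-dimensional LP/minimax duality, whereas the paper proves the same lemma from scratch with a separating-hyperplane argument and handles the sign of $\lambda$ by replacing the equality constraint with an inequality.
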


\paragraph{Remark}
We will show later that the proposed methods and the cases we study satisfy the condition in part III of the theorem and thus all the lower bounds of the proposed method are tight.

% \begin{proof} 
% \end{proof}

Proof is deferred to Appendix~\ref{sec:proof_thm1}. Although the lower bound in  Eq.\ref{equ:mainlowerbound} still involves an optimization on $\vv\delta$ and $\lambda$, 
both of them are much easier than the original adversarial optimization in Eq.\ref{equ:p02}.
With proper choices of $\F$, $\B$ and $\pi_{\0}$, the optimization of $\vv\delta$ can be shown to provide simple closed-form solutions by exploiting the symmetry of $\mathcal{B}$, 
and the optimization of $\lambda$ is a very simple one-dimensional searching problem.

As corollaries of Theorem \ref{thm:dual}, we can exactly recover the bound derived by \cite{teng2020ell} and \cite{cohen2019certified} under our functional optimization framework, different from their original Neyman-Pearson lemma approaches.

\begin{corollary}
\label{thm:jiaye}
With Laplacian noise $\pizero(\cdot) = \laplace(\cdot; b)$, where $\laplace(\boldsymbol{x}; b) = \frac{1}{(2b)^d}\exp(-\frac{\Vert\boldsymbol{x}\Vert_1}{b})$, $\ell_1$ adversarial setting $\B = \{\d\colon \norm{\d}_{1}\leq r\}$ and $\F =\F_{[0,1]}$, the lower bound in Eq.\ref{equ:mainlowerbound} becomes
\begin{equation}
\begin{split}
\label{equ:resultjiaye}
 \max_{\lambda \ge 0} \left\{ \lambda \fpizero(\x) -\max_{\|\d\|_1\le r} \D_{\F_{[0,1]}}\left(\text{\ensuremath{\lambda}}\pi_{\0} \Vert \pi_{\d}\right) \right\}= \left\{  \begin{array}{cc}
             1 -  e^{r/b}(1 - \fpizero(\x)), \text{when} \fpizero(\x) \geq 1 - \frac{1}{2}e^{-r/b},\\
             &\\
             \frac{1}{2}e^{-\frac{r}{b}-\log[2(1-\fpizero(\x)]}, \text{when} \fpizero(\x) < 1 - \frac{1}{2}e^{-r/b}.
             \end{array}
\right.
\end{split}
\end{equation}
\end{corollary}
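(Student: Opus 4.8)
The plan is to specialize Theorem~\ref{thm:dual} (parts II and III) to Laplace noise on $\ell_1$ balls and then reduce everything to two elementary one–dimensional optimizations: the inner maximum over the perturbation $\d$, and the outer maximum over the multiplier $\lambda$.

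\textbf{Step 1 (reducing the discrepancy to one coordinate).} By Theorem~\ref{thm:dual}(II), $\D_{\F_{[0,1]}}(\lambda\pizero\Vert\pi_{\d})=\int(\lambda\pizero(\z)-\pi_{\d}(\z))_+\,d\z=\lambda-\int\min(\lambda\pizero(\z),\pi_{\d}(\z))\,d\z$, where $\pi_{\d}(\z)=\pizero(\z-\d)$. The Laplace density factorizes, $\pizero(\z)=\prod_{i=1}^d p(z_i)$ with $p(t)=\frac{1}{2b}e^{-|t|/b}$, so for any factorization $\lambda=\prod_i\mu_i$ the pointwise inequality $\min(\prod_i a_i,\prod_i b_i)\ge\prod_i\min(a_i,b_i)$ for nonnegative reals yields
\[
\int\min(\lambda\pizero(\z),\pi_{\d}(\z))\,d\z \;\ge\; \prod_{i=1}^d\int\min\!\big(\mu_i\,p(t),\,p(t-\delta_i)\big)\,dt .
\]
On the relevant range the one–dimensional overlap equals $\int\min(\mu p(t),p(t-s))\,dt=\sqrt\mu\,e^{-s/(2b)}$, so with $\mu_i=\lambda^{1/d}$ the right–hand side is $\sqrt\lambda\,e^{-\norm{\d}_1/(2b)}$, which is smallest when $\norm{\d}_1=r$; putting the entire $\ell_1$ budget on one coordinate, $\d^{*}=(r,0,\dots,0)$, attains it. (When $\lambda\le e^{-r/b}$ or $\lambda\ge e^{r/b}$ the likelihood ratio $\pi_{\d}(\z)/\pizero(\z)=e^{(\norm{\z}_1-\norm{\z-\d}_1)/b}\in[e^{-r/b},e^{r/b}]$ lies on one side of $\lambda$ for every $\d$, so the discrepancy is $0$ or $\lambda-1$ regardless of $\d$.) This identifies the worst perturbation and simultaneously verifies the strong–duality hypothesis of Theorem~\ref{thm:dual}(III), so the resulting bound is tight.

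\textbf{Step 2 (the inner maximum, explicitly).} Since only the first coordinate of $\d^{*}$ is shifted, $\max_{\norm{\d}_1\le r}\D_{\F_{[0,1]}}(\lambda\pizero\Vert\pi_{\d})=\int_{\R}(\lambda p(t)-p(t-r))_+\,dt=:G(\lambda)$. The integrand is positive precisely where $|t|-|t-r|<b\log\lambda$; since $|t|-|t-r|$ is continuous and increases from $-r$ to $r$, this region is the half‑line $\{t<t^{*}\}$ with $t^{*}=\tfrac12(r+b\log\lambda)\in[0,r]$ when $\lambda\in[e^{-r/b},e^{r/b}]$. Evaluating the Laplace CDF at $t^{*}$ and $t^{*}-r$ and simplifying (using $e^{-t^{*}/b}=\lambda^{-1/2}e^{-r/(2b)}$, $e^{(t^{*}-r)/b}=\lambda^{1/2}e^{-r/(2b)}$) gives
\[
G(\lambda)=
\begin{cases}
0, & \lambda\le e^{-r/b},\\
\lambda-\sqrt\lambda\,e^{-r/(2b)}, & e^{-r/b}\le\lambda\le e^{r/b},\\
\lambda-1, & \lambda\ge e^{r/b}.
\end{cases}
\]

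\textbf{Step 3 (the outer maximum).} Put $p:=\fpizero(\x)\in[\tfrac12,1)$ and maximize $h(\lambda):=\lambda p-G(\lambda)$ over $\lambda\ge0$. On $[0,e^{-r/b}]$, $h$ is increasing; on $[e^{r/b},\infty)$, $h(\lambda)=1-\lambda(1-p)$ is nonincreasing; on the middle branch $h$ is strictly concave with $h'(\lambda)=(p-1)+\tfrac12 e^{-r/(2b)}\lambda^{-1/2}$, whose unique zero is $\lambda^{\star}=e^{-r/b}/(4(1-p)^2)$, and $\lambda^{\star}\le e^{r/b}\iff p\le 1-\tfrac12 e^{-r/b}$. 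Hence for $p\ge 1-\tfrac12 e^{-r/b}$ the maximum is at $\lambda=e^{r/b}$ with value $1-e^{r/b}(1-p)$, while for $p<1-\tfrac12 e^{-r/b}$ it is at $\lambda^{\star}$ with value $\frac{e^{-r/b}}{4(1-p)}=\tfrac12 e^{-r/b-\log[2(1-p)]}$; in both regimes this value dominates $h(e^{-r/b})=p\,e^{-r/b}$ by elementary inequalities. This is exactly the claimed identity~\eqref{equ:resultjiaye}.

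The one–dimensional integral of Step~2 and the calculus of Step~3 are routine; the genuinely nontrivial ingredient is Step~1, namely proving that concentrating the whole $\ell_1$ budget on one coordinate is the worst perturbation (equivalently, that the hypothesis of Theorem~\ref{thm:dual}(III) holds). This is where the product/tensorization structure of Laplace noise and the closed form $\sqrt\mu\,e^{-s/(2b)}$ of its one–dimensional overlap are essential. Making this airtight for \emph{all} $\d$ (including near‑degenerate ones, where the simple choice $\mu_i=\lambda^{1/d}$ leaves the closed‑form range) can be done by consolidating two coordinates at a time: it suffices to show, for all $\rho>0$, that $\int\!\!\int\min(\lambda p(z_1)p(z_2),\rho\,p(z_1-\delta_1)p(z_2-\delta_2))\,dz_1dz_2\ge\int\min(\lambda p(t),\rho\,p(t-\delta_1-\delta_2))\,dt$, which in turn reduces to a second–order stochastic‑dominance comparison of $|z|-|z-(\delta_1+\delta_2)|$ against $\big(|z_1|-|z_1-\delta_1|\big)+\big(|z_2|-|z_2-\delta_2|\big)$ under the Laplace law; iterating merges all coordinates into one.
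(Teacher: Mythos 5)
Your Steps 2 and 3 are correct and coincide with the paper's own computation: the paper likewise reduces to the one--dimensional integral at $\d^*=(r,0,\dots,0)$, splits into the three regimes $b\log\lambda\le -r$, $|b\log\lambda|<r$, $b\log\lambda\ge r$, finds the same stationary point $\hat\lambda=e^{-r/b}/(4(1-p)^2)$, and uses concavity in $\lambda$ to select the branch according to whether $\fpizero(\x)\gtrless 1-\tfrac12 e^{-r/b}$. The genuine gap is in Step 1, which you correctly identify as the crux but do not actually prove. The tensorization bound $\int\min(\lambda\pizero,\pi_{\d})\ge\prod_i\int\min(\mu_i p,p(\cdot-\delta_i))$ with the uniform split $\mu_i=\lambda^{1/d}$ only yields the product $\sqrt{\lambda}\,e^{-\norm{\d}_1/(2b)}$ when every factor satisfies $\mu_i\in[e^{-|\delta_i|/b},e^{|\delta_i|/b}]$; whenever some $|\delta_i|$ is small relative to $|\log\lambda|/d$ (which happens for essentially every $\d$, including $\d^*$ itself once $d\ge2$ and $\lambda\neq1$), that factor equals $\min(\mu_i,1)$, which is \emph{strictly smaller} than $\sqrt{\mu_i}\,e^{-|\delta_i|/(2b)}$, so the product bound falls below the target and the comparison with $\d^*$ does not follow. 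Your proposed repair --- pairwise consolidation via a second--order stochastic--dominance comparison --- is exactly the hard step; the paper devotes a separate theorem to it (coordinate--wise monotonicity $\mathrm{sgn}(\delta_i)\,\partial_{\delta_i}\D_{\F_{[0,1]}}(\lambda\pizero\Vert\pi_{\d})\ge0$ plus a reflection map $(x_1,x_2)\mapsto(2\delta_1-x_1,2\delta_2'-x_2)$ that consolidates two coordinates), and leaving it as a one--sentence sketch leaves the corollary unproved.

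That said, your tensorization route can be completed without the consolidation lemma by splitting $\lambda$ \emph{adaptively}: take $\mu_i=\lambda^{|\delta_i|/\norm{\d}_1}$ (and $\mu_i=1$ when $\delta_i=0$). Then $\mu_i\in[e^{-|\delta_i|/b},e^{|\delta_i|/b}]$ for every $i$ precisely when $|\log\lambda|\le\norm{\d}_1/b$, in which case every factor equals $\sqrt{\mu_i}\,e^{-|\delta_i|/(2b)}$ and the product is $\sqrt{\lambda}\,e^{-\norm{\d}_1/(2b)}\ge\sqrt{\lambda}\,e^{-r/(2b)}$, i.e.\ $\D_{\F_{[0,1]}}(\lambda\pizero\Vert\pi_{\d})\le\lambda-\sqrt{\lambda}\,e^{-r/(2b)}=\D_{\F_{[0,1]}}(\lambda\pizero\Vert\pi_{\d^*})$ for $\lambda\in[e^{-r/b},e^{r/b}]$. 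The remaining case $|\log\lambda|>\norm{\d}_1/b$ is covered by your own likelihood--ratio observation: the ratio $\pi_{\d}/\pizero$ lies in $[e^{-\norm{\d}_1/b},e^{\norm{\d}_1/b}]$, so the discrepancy is $0$ or $\lambda-1$, both of which are dominated by $\D_{\F_{[0,1]}}(\lambda\pizero\Vert\pi_{\d^*})$ on the relevant $\lambda$-ranges. With that substitution your Step 1 becomes airtight and is arguably cleaner than the paper's reflection argument, which it owes entirely to the product structure of the Laplace density.
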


% \vspace{-4mm}
Thus, with our previous explanation, we obtain
$\V(\F,\B)\geq\frac{1}{2} \Longleftrightarrow r \le -b\log\left[2(1 - \fpizero(\x))\right]$,
which is exactly the $\ell_1$ certification radius derived by \cite{teng2020ell}. See Appendix~\ref{sec: retrievejiaye} for proof details. For Gaussian noise setting which has been frequently adopted, we have
\begin{corollary}\label{thm:cohen}
With isotropic Gaussian noise $\pizero = \normal(\vv 0, \sigma^2 I_{d\times d})$, 
$\ell_2$ attack $\B = \{\d \colon ~  \norm{\d}_{2}\leq r\}$ and $\F =\F_{[0,1]}$, the lower bound in Eq.\ref{equ:mainlowerbound} becomes
%Solving the optimization
\begin{equation}
\begin{split}
\label{equ:resultCohen}
 \max_{\lambda \ge 0} \left\{ \lambda \fpizero(\x) -\max_{\|\d\|_2\le r} \D_{\F_{[0,1]}}\left(\text{\ensuremath{\lambda}}\pi_{\0} \Vert \pi_{\d}\right) \right\} = \Phi\left(\Phi^{-1}(\fpizero(\x))  - \frac{r}{\sigma}\right).
\end{split}
\end{equation}
\end{corollary}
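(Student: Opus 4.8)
The plan is to specialize the dual lower bound of Theorem~\ref{thm:dual} to isotropic Gaussian smoothing and show that the two nested optimizations---first over the perturbation $\d$, then over the Lagrange multiplier $\lambda$---both admit closed forms that assemble into the right-hand side of (\ref{equ:resultCohen}), i.e.\ the bound (\ref{equ:phib}) of \cite{cohen2019certified}. By part~II of Theorem~\ref{thm:dual}, since $\F=\F_{[0,1]}$,
\[
\D_{\F_{[0,1]}}\!\left(\lambda\pi_{\0}\,\Vert\,\pi_{\d}\right)=\int\big(\lambda\pi_{\0}(\z)-\pi_{\d}(\z)\big)_{+}\,d\z ,
\]
and for Gaussian smoothing $\pi_{\d}(\z)=\pi_{\0}(\z-\d)$, so the likelihood ratio $\pi_{\0}(\z-\d)/\pi_{\0}(\z)=\exp\!\big(\tfrac{2\langle\z,\d\rangle-\norm{\d}_2^2}{2\sigma^2}\big)$ depends on $\z$ only through $\langle\z,\d\rangle$. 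Consequently the set $\{\z:\lambda\pi_{\0}(\z)\ge\pi_{\d}(\z)\}$ is the half-space $\{\langle\z,\d\rangle\le t\}$ for a threshold $t=t(\lambda,\norm{\d}_2)$, and the $d$-dimensional integral collapses to a one-dimensional Gaussian computation.

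First I would handle the inner maximization $\max_{\norm{\d}_2\le r}$. By rotational symmetry of $\pi_{\0}$ the discrepancy depends on $\d$ only through $\rho:=\norm{\d}_2$; writing $\d=\rho e_1$ and integrating out the coordinates orthogonal to $e_1$ gives $\D_{\F_{[0,1]}}(\lambda\pi_{\0}\,\Vert\,\pi_{\rho e_1})=\int_{\R}\big(\lambda\phi_\sigma(z)-\phi_\sigma(z-\rho)\big)_{+}\,dz$, where $\phi_\sigma$ is the $\mathcal N(0,\sigma^2)$ density. I would then argue that this is nondecreasing in $\rho$ (equivalently, the overlap $\int\min(\lambda\phi_\sigma,\phi_\sigma(\cdot-\rho))$ is nonincreasing in the separation), so the maximum over $\norm{\d}_2\le r$ is attained on the boundary $\rho=r$ and, integrating over the threshold half-space $\{z\le \tfrac{\sigma^2\log\lambda}{r}+\tfrac{r}{2}\}$ under $\mathcal N(0,\sigma^2)$ and $\mathcal N(r,\sigma^2)$ respectively, equals $\lambda\,\Phi\!\big(\tfrac{\sigma\log\lambda}{r}+\tfrac{r}{2\sigma}\big)-\Phi\!\big(\tfrac{\sigma\log\lambda}{r}-\tfrac{r}{2\sigma}\big)$.

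Next I would carry out the outer maximization over $\lambda\ge0$ of
\[
\Psi(\lambda)=\lambda\,\fpizero(\x)-\lambda\,\Phi\!\Big(\tfrac{\sigma\log\lambda}{r}+\tfrac{r}{2\sigma}\Big)+\Phi\!\Big(\tfrac{\sigma\log\lambda}{r}-\tfrac{r}{2\sigma}\Big).
\]
Substituting $u=\tfrac{\sigma\log\lambda}{r}+\tfrac{r}{2\sigma}$ (so $\lambda=e^{ru/\sigma-r^2/(2\sigma^2)}$ and the last argument becomes $u-r/\sigma$), the crucial simplification is the Gaussian identity $\phi(u-r/\sigma)=\lambda\,\phi(u)$, which makes the two density terms in $\tfrac{d\Psi}{du}$ cancel and leaves $\tfrac{d\Psi}{du}=\tfrac{r}{\sigma}\lambda\big(\fpizero(\x)-\Phi(u)\big)$. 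Hence $\Psi$ is maximized at the unique stationary point $u^{\ast}=\Phi^{-1}(\fpizero(\x))$, where $\Psi(u^{\ast})=\Phi\!\big(\Phi^{-1}(\fpizero(\x))-r/\sigma\big)$, which is exactly (\ref{equ:resultCohen}); since $u\mapsto\lambda$ is a bijection onto $(0,\infty)$ this is indeed the supremum over $\lambda\ge0$. Finally, because the worst-case $\d^{\ast}=r e_1$ does not depend on $\lambda$, the hypothesis of part~III of Theorem~\ref{thm:dual} holds, so the inequality (\ref{equ:mainlowerbound}) is an equality and the bound is tight, recovering (\ref{equ:phib}).

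I expect the main obstacle to be the dimension-reduction-plus-monotonicity step: reducing the $d$-dimensional excess-mass integral to one dimension via the likelihood-ratio half-space, and then establishing that the resulting one-dimensional functional is monotone in $\norm{\d}_2$, so that the constrained maximizer lies on the sphere $\norm{\d}_2=r$ and is $\lambda$-independent (the latter being what licenses the appeal to part~III for tightness). The subsequent one-variable calculus, once the identity $\phi(u-r/\sigma)=\lambda\phi(u)$ is noticed, is routine.
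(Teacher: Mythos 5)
Your proof is correct, and the core computation is the same as the paper's (the super-level set $\{\lambda\pizero\ge\pi_{\d}\}$ is a half-space determined by $\langle\z,\d\rangle$, the excess-mass integral collapses to $\lambda\Phi\bigl(\tfrac{\sigma\ln\lambda}{\rho}+\tfrac{\rho}{2\sigma}\bigr)-\Phi\bigl(\tfrac{\sigma\ln\lambda}{\rho}-\tfrac{\rho}{2\sigma}\bigr)$, and the stationary multiplier satisfies $\Phi(u)=\fpizero(\x)$); your substitution $u=\tfrac{\sigma\log\lambda}{r}+\tfrac{r}{2\sigma}$ and the identity $\phi(u-r/\sigma)=\lambda\phi(u)$ reproduce exactly the paper's $\hat\lambda_{\d}$. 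The genuine difference is the order of the two optimizations. The paper first invokes strong duality to rewrite the bound as $\min_{\|\d\|_2\le r}\max_{\lambda\ge0}F(\d,\lambda)$, solves the inner $\max_\lambda$ in closed form for each fixed $\d$ to get $\Phi(\Phi^{-1}(\fpizero(\x))-\|\d\|_2/\sigma)$, and then the minimization over $\d$ is trivial by monotonicity of $\Phi$ — no separate argument that the worst-case perturbation lies on the boundary is needed. You instead evaluate the dual expression literally as written: for each $\lambda$ you first show $\max_{\d}\D_{\F_{[0,1]}}(\lambda\pizero\Vert\pi_{\d})$ is attained at $\|\d\|_2=r$ (your monotonicity-in-$\rho$ step, which is a genuine lemma — it is the $k=0$ case of the paper's Theorem~\ref{thm:opt_mu_l2}/Theorem~\ref{thm:opt_mu_general}, and your envelope-type derivative argument for the overlap does establish it), and only then optimize $\lambda$. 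Your route has the advantage of computing the left-hand side of \eqref{equ:resultCohen} directly and of explicitly verifying the hypothesis of part~III (the maximizing $\d^*$ is $\lambda$-independent), whereas the paper's route buys a trivial $\d$-step at the price of invoking strong duality up front. Both are valid; no gap.
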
 

Analogously, we can retrieve the main theoretical result of \cite{cohen2019certified} :$
\V(\F,\B)\geq\frac{1}{2} \Longleftrightarrow r \le \sigma \Phi^{-1}(\fpizero(\x)).$ See Appendix~\ref{sec: retrievecohen} for proof details.

% \begin{wrapfigure}{r}{0.2\textwidth}
% % \vspace{-15pt}
% %\includegraphics[width=.25\textwidth]{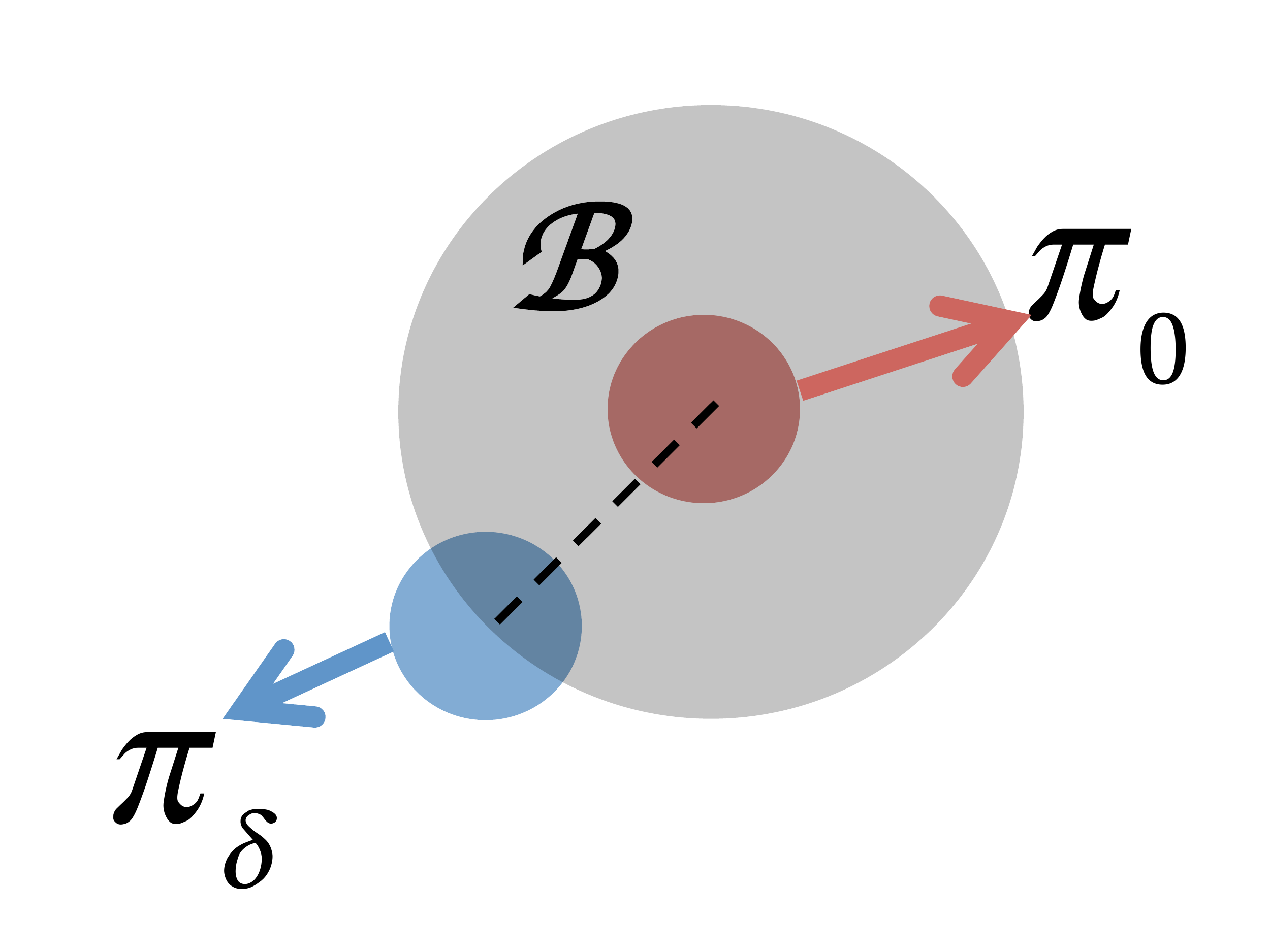} 
% \includegraphics[width=.175\textwidth]{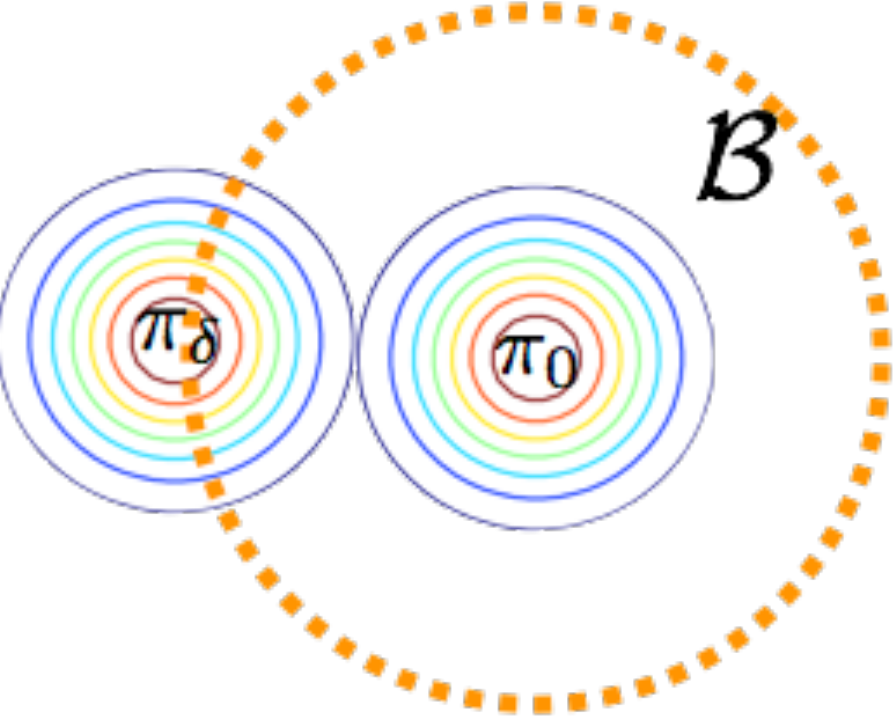} 
% % \vspace{-10pt}
% % \vspace{-20pt}
% \label{fig:l2_example}
% \end{wrapfigure}

% One key step of the proof is to show that $\max_{\|\d\|_p\le r} \D_{\F_{[0,1]}} \left(\text{\ensuremath{\lambda}}\pi_{\0} \Vert \pi_{\d}\right)$ $p=1, 2$ is achieved when $\d$ is on the boundary of the $\ell_p$ ball $\mathcal B=\{\norm{\d}_p \leq r\}$, which can be, for example, $\d = [r,0,\ldots, 0]^\top$, due to symmetry of $\B$ (see figure on the right for $\ell_2$ case).  \red{zhe duan ting duo yu de}

\subsection{Trade-off Between Accuracy and Robustness}
The lower bound in Eq.\ref{equ:mainlowerbound} reflects an intuitive  trade-off between the robustness and accuracy on the certification problem:
\begin{equation}
\label{eq:objective2}
\max_{\lambda \geq 0}\Bigg [ \lambda\underset{\mathrm{Accuracy}}{\underbrace{ \fpizero(\x)}}+\underset{\mathrm{Robustness}}{\underbrace{
\left(-\max_{\d\in\B}\D_\F\left(\text{\ensuremath{\lambda}}\pi_{\0}~\Vert~ \pi_{\d} \right)\right)}} \Bigg ], 
\end{equation}
where the first term reflects the accuracy of the smoothed classifier (assuming the true label is $y=1$), 
while the second term $-\max_{\d\in\B}\D_\F\left(\text{\ensuremath{\lambda}}\pi_{\0}~\Vert~ \pi_{\d} \right)$  measures the robustness of the smoothing method, via the negative maximum discrepancy between the original smoothing distribution $\pi_{\0}$ and perturbed distribution $\pi_{\d}$ for $\d\in \B$. The maximization of dual coefficient $\lambda$ can be viewed as searching for a best balance between these two terms to achieve the largest lower bound. 
%and hence reflect the robustness of the model; 
%Once the distance is smaller,the model can be more robust.
%on the other hand, the first term $ \lambda \fpizero(\x)}$ represents the accuracy.  %assuming the original . %Once you choose a distribution which is not concentrated to the center, the accuracy will be as poor as possible.
%However, if a concentrated distribution is chosen, the distance between the two distributions should be large.

More critically, different choices of smoothing distributions  yields a trade-off between accuracy and robustness in Eq.\ref{eq:objective2}. 
% If $\pizero$ has large variance or large kurtosis, the distance $\D_\F(\lambda \pizero~\Vert~\pi_{\d})$ will tend to be small thus the smoothed model is robust. 
% However, if the variance of $\pizero$ is too large, we may obtain a low value of $\fpizero(\xt)$ and hence a less accurate model. 
A good choice of the smoothing distribution should \textcircled{\small{1}} be centripetal enough to obtain a large $\fpizero(\x)$ and \textcircled{\small{2}} have large kurtosis or long tail to yield a small $\max_{\d\in\B}\D_\F(\lambda \pizero~\Vert~\pi_{\d})$ discrepancy term. In the next section, we'll show how to design a distribution that could improve both points.

% well balance the accuracy and robustness, by distributing its mass properly to yield small $\max_{\d\in\B}\D_\F(\lambda \pizero~\Vert~\pi_{\d})$ and large $\fpizero(\x)$ simultaneously. 

\section{Improving Certification Bounds with a New Distribution Family}%Centripetal distribution: a new distribution family for better certification}
\label{sec:filling}

% In this section, we identify a key problem of the usage of Laplacian and Gaussian noise in high dimensional space, due to the ``thin shell'' phenomenon that the probability mass of them  concentrates on a sphere far away from the center points~\cite{vershynin2018high}. Motivated by this pivotal observation, we propose a new family of smoothing distributions that alleviate this problem for $\ell_1$, $\ell_2$ and $\ell_\infty$ attack.

\subsection{\textit{``Thin Shell''} Phenomenon and New Distribution Family} 
\label{sec:L2}

We first identify a key problem of the usage of Laplacian and Gaussian noise in high dimensional space, due to the ``\emph{thin shell}'' phenomenon that the probability mass of them  concentrates on a sphere far away from the center points~\cite{vershynin2018high}. 
%
% Although the isotropic Gaussian distribution appears to be a natural choice of the smoothing distribution, it's sub-optimal for the trade-off between accuracy and robustness in Eq.\ref{eq:objective2}, especially in high dimensions. 
% The key problem is that, in high dimensional spaces, the probability mass of Gaussian distributions concentrates on a% \emph{thin shell} away from the center:
%
\begin{proposition}
[\cite{vershynin2018high}, Section~3.1]
Let $\z \sim  \normal(\vec 0, ~ I_{d\times d})$ be a $d$-dimensional standard Gaussian random variable. Then there exists a constant $c$, such that for any $\delta \in (0,1)$,  
$\mathrm{Prob}$ $\left ( \sqrt{d} - \sqrt{c\log(2/\delta)} \leq \norm{\z}_2 \leq \sqrt{d} +  \sqrt{c\log(2/\delta)} \right) \geq 1-\delta.$
%$$\mathrm{Prob}( \sqrt{d-1} - c \leq \norm{z}_2 \leq \sqrt{d-1} + c) \geq 1-4\exp(-c^2/4)/c^2. $$
See \cite{vershynin2018high} for more discussion. 
\end{proposition}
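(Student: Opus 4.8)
The plan is to reduce the two-sided bound on $\norm{\z}_2$ to a concentration statement for $\norm{\z}_2$ around its mean, and then to pin that mean down near $\sqrt d$. The starting point is that the Euclidean norm $x\mapsto\norm{x}_2$ is a $1$-Lipschitz function on $\RR^d$; by the classical Gaussian concentration inequality for Lipschitz functions, for every $t\ge 0$,
\[
\mathrm{Prob}\big(\, |\norm{\z}_2 - \E\norm{\z}_2| \ge t \,\big) \le 2\exp(-t^2/2).
\]
So the only remaining task is to show that $\E\norm{\z}_2$ differs from $\sqrt d$ by at most an absolute constant.

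First I would bound the mean. Since $\E\norm{\z}_2^2 = \sum_{i=1}^d \E z_i^2 = d$, Jensen's inequality gives $\E\norm{\z}_2 \le \sqrt d$. For the matching lower bound I would use $\Var(\norm{\z}_2)\le 1$ — which follows from the Gaussian Poincaré inequality applied to the $1$-Lipschitz map $\norm{\cdot}_2$, or from a direct moment computation via the chi distribution — so that $(\E\norm{\z}_2)^2 = d - \Var(\norm{\z}_2) \ge d-1$, hence $\sqrt d - \E\norm{\z}_2 \le \sqrt d - \sqrt{d-1} \le 1$. Plugging this in together with $|\norm{\z}_2 - \sqrt d| \le |\norm{\z}_2 - \E\norm{\z}_2| + 1$ gives, for $s\ge 2$,
\[
\mathrm{Prob}\big(\, |\norm{\z}_2 - \sqrt d| \ge s \,\big) \le 2\exp\big(-(s-1)^2/2\big) \le 2\exp(-s^2/8).
\]
Setting $s = \sqrt{c\log(2/\delta)}$ and choosing $c$ large enough — a short computation shows $c=8$ suffices, and simultaneously guarantees $s\ge 2$ for all $\delta\in(0,1)$ since $\log(2/\delta)>\log 2$ — makes the right-hand side at most $\delta$, which is exactly the claimed inequality after passing to complements.

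An equivalent route, closer to the elementary tools in Vershynin's Section~3.1, is to write $\norm{\z}_2^2 = \sum_{i=1}^d z_i^2$ as a sum of i.i.d. sub-exponential variables, apply Bernstein's inequality to obtain $\mathrm{Prob}(|\norm{\z}_2^2 - d| \ge u) \le 2\exp(-c_0\min(u^2/d,\,u))$, and transfer this to $\norm{\z}_2$ using the elementary bound $|\norm{\z}_2 - \sqrt d| \le |\norm{\z}_2^2 - d|/\sqrt d$ (valid because $\norm{\z}_2 + \sqrt d \ge \sqrt d$); taking $u = s\sqrt d$ recovers the same Gaussian-type tail for $s$ up to order $\sqrt d$. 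The main point to be careful about is exactly this passage from the square to the norm, and checking that one absolute constant $c$ works uniformly over $\delta\in(0,1)$ (equivalently, over both the moderate-deviation regime and the edge regime of $s$); everything else is a routine application of standard concentration inequalities, so I do not expect any genuine difficulty.
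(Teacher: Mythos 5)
Your proof is correct. The paper itself does not prove this proposition — it is quoted verbatim from Vershynin's book with a pointer to the source — so there is nothing internal to compare against; your argument is a complete and accurate reconstruction of the standard one. Your first route (the norm is $1$-Lipschitz, so Gaussian concentration gives sub-Gaussian fluctuation around $\E\norm{\z}_2$, and the Poincaré/variance bound $\Var(\norm{\z}_2)\le 1$ pins $\E\norm{\z}_2$ within $1$ of $\sqrt d$) is airtight, including the bookkeeping that $c=8$ works and that $s\ge 2$ holds automatically for $\delta\in(0,1)$; your second route via Bernstein on $\norm{\z}_2^2-d$ and the elementary transfer $|\norm{\z}_2-\sqrt d|\le |\norm{\z}_2^2-d|/\sqrt d$ is precisely the argument in Vershynin's Section~3.1.
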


This suggests that with high probability,
$\z$ takes values very close to the sphere of radius $\sqrt{d}$, within a constant distance from that sphere. There exists similar phenomenon for Laplacian distribution:
\begin{proposition}[Chebyshev bound]
Let $\z$ be a $d$-dimensional Laplacian random variable, $\z=(z_1, \cdots, z_d)$, where $z_i \sim \laplace(1), i = 1,\cdots, d$. Then for any $\delta \in (0,1)$, we have $\mathrm{Prob}$ $\left ( 1 - 1/\sqrt{d\delta} \leq \norm{\z}_1/d \leq 1 + 1/\sqrt{d\delta} \right) \geq 1-\delta.$
\end{proposition}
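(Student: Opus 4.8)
The plan is to reduce the statement to a one–line application of Chebyshev's inequality to the random variable $\norm{\z}_1 = \sum_{i=1}^d |z_i|$, which is a sum of $d$ i.i.d.\ nonnegative terms. First I would compute the first two moments of a single summand: if $z_i \sim \laplace(1)$ has density $\tfrac12 e^{-|z|}$, then $|z_i|$ has density $e^{-t}$ on $(0,\infty)$, i.e.\ $|z_i|$ is $\mathrm{Exp}(1)$. Hence $\E[|z_i|] = 1$ and $\Var(|z_i|) = \E[z_i^2] - (\E[|z_i|])^2 = 2 - 1 = 1$, using $\int_0^\infty t^2 e^{-t}\,dt = 2$.

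Next, by independence of the coordinates, the mean and variance add: $\E[\norm{\z}_1] = d$ and $\Var(\norm{\z}_1) = d$. Consequently the normalized quantity $S := \norm{\z}_1/d$ has $\E[S] = 1$ and $\Var(S) = 1/d$. Applying Chebyshev's inequality to $S$ gives, for every $t > 0$,
\[
\mathrm{Prob}\big( |S - 1| \ge t \big) \;\le\; \frac{\Var(S)}{t^2} \;=\; \frac{1}{d t^2}.
\]
Setting $t = 1/\sqrt{d\delta}$ makes the right-hand side equal to $\delta$; passing to the complementary event yields exactly
\[
\mathrm{Prob}\Big( 1 - \tfrac{1}{\sqrt{d\delta}} \le \norm{\z}_1/d \le 1 + \tfrac{1}{\sqrt{d\delta}} \Big) \;\ge\; 1 - \delta,
\]
which is the claim.

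There is no genuinely hard step here: the only things to be careful about are the elementary moment computation for the folded (absolute–value) Laplace variable and the fact that independence makes the variance additive. If a sharper tail were desired one could replace Chebyshev by a Bernstein/sub-exponential bound to obtain an exponentially small (in $d$) deviation probability, matching the Gaussian proposition that precedes it; but for illustrating the ``thin shell'' phenomenon the polynomial Chebyshev version stated above is sufficient, so I would keep the proof in the short form above.
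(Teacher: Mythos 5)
Your proof is correct and is exactly the argument the paper intends (the proposition is even labeled ``Chebyshev bound'' and the paper gives no further details): compute $\E[|z_i|]=1$, $\Var(|z_i|)=1$, use independence to get $\Var(\norm{\z}_1/d)=1/d$, and apply Chebyshev with $t=1/\sqrt{d\delta}$. Nothing is missing.
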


Although choosing isotropic Laplacian and Gaussian distribution appears to be natural, this ``\emph{thin shell}'' phenomenon makes it sub-optimal to use them for adversarial certification, because one would expect that the smoothing distribution should concentrate around the center (the original image) in order to make the smoothed classifier accurate enough in trade-off of Eq.\ref{eq:objective2}. 

% this paragraph was to be annotated
% To illustrate the problem, consider a simple example when the true classifier is $\ftrue(\vv x) = \mathbb{I}(\norm{\vv x - \x}_2\leq \epsilon\sqrt{d})$ for a constant $\epsilon <1$, where $\mathbb{I}$ is the indicator function. Then when the dimension $d$ is large, we would have $\ftrue(\x) = 1$ while $\fpizero(\x) \approx 0$ when $\pizero = \normal(\vec 0, I_{d\times d})$. It is of course possible to decrease the variance of $\pizero$ to improve the accuracy of the smoothed classifier $\fpizero$. However, this would significantly improve the distance term in Eq.\ref{eq:objective2} and does not yield an optimal trade-off on accuracy and robustness.

Thus it's desirable to design a distribution more \textit{concentrated} to center. To motivate our new distribution family, it's useful to examine the density function of the distributions of the radius of spherical distributions in general. 
\begin{proposition} 
 Assume $\z$ is a symmetric random variable on $\R^d$ with a probability density function (PDF) of form $\pizero(\z)\propto \phi(\norm{\z})$, where $\phi \colon [0,\infty) \to [0,\infty)$ is a univariate function, then the PDF of the norm of $\z$ is $p_{\norm{\z}}(r) \propto r^{d-1} \phi(r)$. The term $r^{d-1}$ arises due to the integration on the surface of radius $r$ norm ball in $\RR^d$. Here $\norm{\cdot}$ can be any $L_p$ norm.
\end{proposition}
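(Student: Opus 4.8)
The plan is to reduce the claim to the $d$-homogeneity of Lebesgue measure through a polar-coordinate decomposition adapted to the $L_p$ norm; no smoothness of the $L_p$ unit sphere is needed, which is what makes the argument work uniformly in $p$.

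First I would fix notation: let $\mathcal{K} = \{\z \in \R^d : \norm{\z}_p \le 1\}$ be the closed unit $L_p$ ball and $V_p := \mathrm{Leb}(\mathcal{K})$, which is finite and strictly positive since $\mathcal{K}$ is bounded with nonempty interior. Positive homogeneity of $\norm{\cdot}_p$ gives that the radius-$r$ ball is exactly $r\mathcal{K}$, and the scaling law of Lebesgue measure yields $\mathrm{Leb}(r\mathcal{K}) = r^d V_p$. Writing $c$ for the normalizing constant so that $\pizero(\z) = c\,\phi(\norm{\z}_p)$, I would then express the CDF of $R := \norm{\z}_p$ as
\[
F(r) = \mathrm{Prob}(R \le r) = c \int_{r\mathcal{K}} \phi(\norm{\z}_p)\, d\z .
\]

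Next I would differentiate $F$. For $h > 0$ one has $F(r+h) - F(r) = c \int_{(r+h)\mathcal{K} \setminus r\mathcal{K}} \phi(\norm{\z}_p)\, d\z$, where on the shell $\norm{\z}_p \in (r, r+h]$; assuming $\phi$ continuous (which covers all cases of interest) the integrand equals $\phi(r) + o(1)$ uniformly as $h \to 0$, while the shell volume is $\bigl((r+h)^d - r^d\bigr) V_p = d\, r^{d-1} V_p\, h + o(h)$ by the scaling identity above. Hence $F'(r) = c\, d\, V_p\, r^{d-1} \phi(r)$, so the density of $R$ is $p_{\norm{\z}}(r) = c\, d\, V_p\, r^{d-1}\phi(r) \propto r^{d-1}\phi(r)$, and the factor $r^{d-1}$ is precisely $V_p^{-1}\tfrac{d}{dr}\mathrm{Leb}(r\mathcal{K})$, i.e. the growth rate of the ``surface area'' of the radius-$r$ $L_p$ ball, matching the proposition's description. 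For a merely measurable $\phi$ I would instead first record the generalized polar identity $\int_{\R^d} g(\z)\, d\z = \int_0^\infty \int_{\partial \mathcal{K}} g(ru)\, r^{d-1}\, d\sigma(u)\, dr$ with $\sigma$ the cone measure on $\partial\mathcal{K} = \{\norm{u}_p = 1\}$ defined by $\sigma(A) := d\,\mathrm{Leb}(\{ru : u \in A,\ 0 < r \le 1\})$, then apply it to $g = \pizero \cdot \mathbf{1}\{a \le \norm{\cdot}_p \le b\}$ to read off $p_{\norm{\z}}$.

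The main obstacle is purely the justification of the shell-volume expansion (equivalently, the polar change of variables) for a general norm, since the $L_p$ sphere is non-smooth when $p \in \{1, \infty\}$ and one cannot invoke the usual differential-geometric surface-area machinery. The key point I want to emphasize is that this subtlety is absorbed entirely into the elementary homogeneity identity $\mathrm{Leb}(r\mathcal{K}) = r^d V_p$; once that is in hand, the rest is a one-line application of the fundamental theorem of calculus.
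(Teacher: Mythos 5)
Your proof is correct. The paper does not actually give a formal proof of this proposition: it is asserted as a standard fact, with the justification compressed into the parenthetical remark that the factor $r^{d-1}$ ``arises due to the integration on the surface of the radius-$r$ norm ball.'' Your argument is a faithful rigorization of exactly that idea, and the route you chose --- deriving the shell volume $\bigl((r+h)^d - r^d\bigr)V_p = d\,r^{d-1}V_p\,h + o(h)$ from the homogeneity identity $\mathrm{Leb}(r\mathcal{K}) = r^d V_p$ rather than from a differential-geometric surface integral --- is the right one, since it handles the non-smooth unit spheres at $p \in \{1,\infty\}$ (which the paper does use, e.g.\ in Eq.~(7) and Eq.~(9)) without any extra work. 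The continuity assumption on $\phi$ in your main argument is harmless for every distribution the paper considers, and your cone-measure polar identity correctly covers the general measurable case; the normalization $\sigma(\partial\mathcal{K}) = d\,V_p$ checks out. Nothing is missing.
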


In particular, for $\z \sim \normal(0, \sigma^2 I_{d\times d})$, 
we have $\phi(r) \propto \exp(-r^2/(2\sigma^2))$ and hence 
$p_{\norm{\z}_2}(r) \propto r^{d-1}\exp(-r^2/(2\sigma^2))$.
% , which is a scaled Chi distribution. 
We can see that the ``\emph{thin shell}'' phenomenon is caused by the  $r^{d-1}$ term, which makes the density to be highly peaked when $d$ is large. 
To alleviate the concentration phenomenon, we need to cancel out the effect of $r^{d-1}$, which motivates the following family of smoothing distributions: 
$$
\pi_{\boldsymbol{0}}(\z)
\propto\left\Vert \z\right\Vert _{n_1}^{-k}\exp\left(-\frac{\left\Vert \z\right\Vert _{n_2}^{p}}{b}\right),
$$
where parameters $k, n_1, n_2, p \in \mathbb{N}$. Next we discuss how to choose suitable parameters depending on specific perturbation region.

% \vspace{40pt}
\subsection{$\ell_1$ and $\ell_2$ Region Certification} 
\label{sec:L2}

% \vspace{-10pt}
\begin{wrapfigure}{r}{0.4\textwidth}  
\vspace{0pt}
\begin{tabular}{c}
     \raisebox{3.5em}{\rotatebox{90}{\small Variance}} 
     \includegraphics[width=0.35\textwidth]{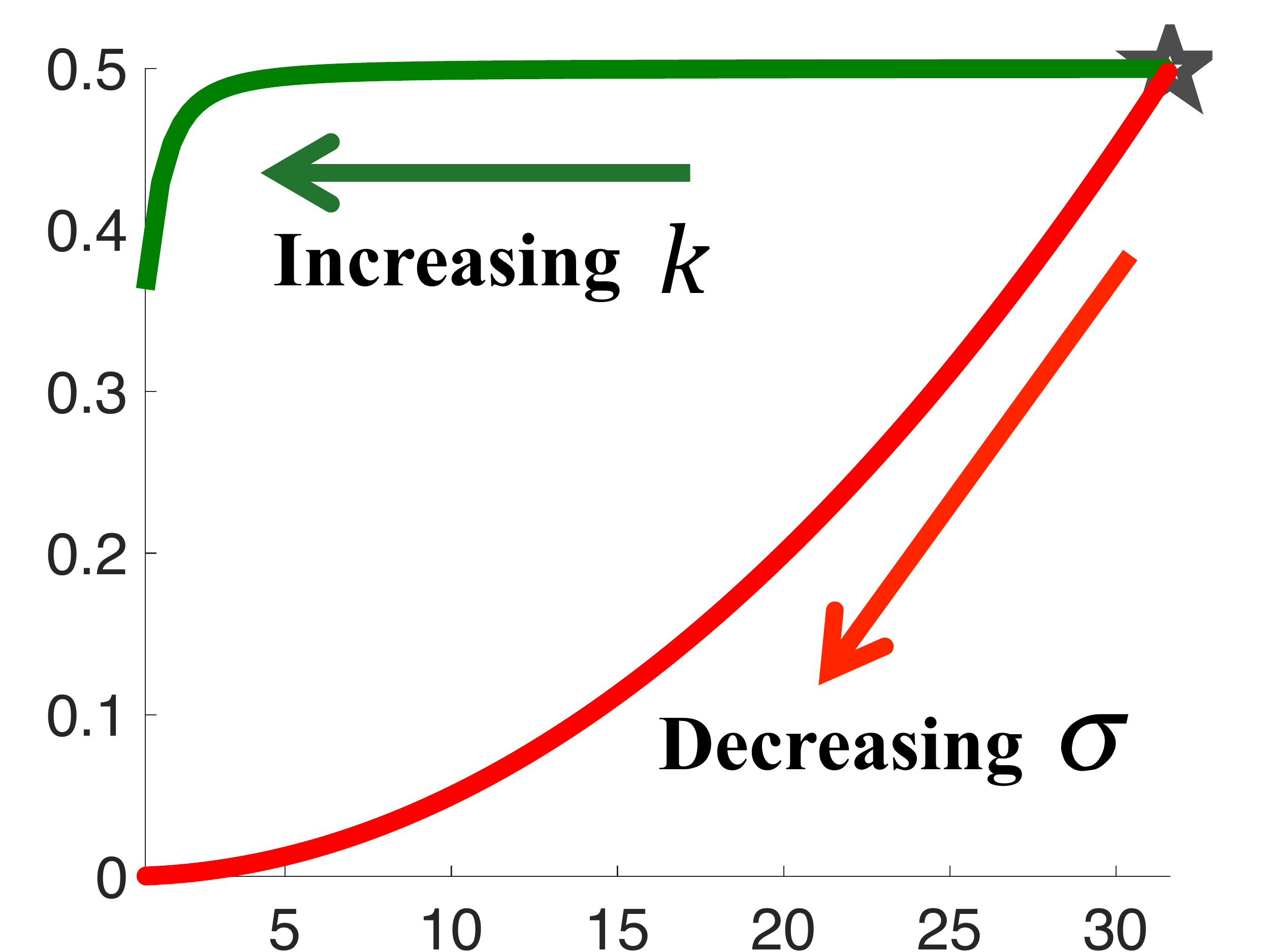} \\
     {\small Mean}\\
\end{tabular}
% \vspace{-10pt}
\caption{\small Starting from radius distribution in Eq.\ref{equ:L2smooth} with $d=100$ $\sigma=1$ and $k=0$ (black start),  
increasing $k$ (green curve) moves the mean towards zero \emph{without
significantly reducing the variance}. 
Decreasing $\sigma$ (red curve) can also decrease the mean, but with a cost of decreasing the variance quadratically.  
}
\vspace{-30pt}
\label{fig:meanvariance}
\end{wrapfigure}

Based on original Laplacian and Gaussian distributions and above intuition, we propose:
% \vspace{-2pt}
\begin{equation}
    \begin{split}
        \text{}\ell_1: \pi_{\boldsymbol{0}}(\z)
        \propto\left\Vert \z\right\Vert _{1}^{-k}\exp\left(-\frac{\left\Vert \z\right\Vert _{1}^{}}{b}\right)   
        % \quad\quad \text{}\ell_2: \pi_{\boldsymbol{0}}(\z) \propto\left\Vert \z\right\Vert _{2}^{-k}\exp\left(-\frac{\left\Vert \z\right\Vert _{2}^{2}}{2\sigma^{2}}\right), 
        % \text{and hence} \quad 
        % p_{\norm{\z}_2}(r)\propto r^{d-k-1} \exp\left (-\frac{r^2}{2\sigma^2} \right), 
    \label{equ:L1smooth}
    \end{split}
\end{equation} 

\vspace{-12pt}
\begin{equation}
    \begin{split}
        \ell_2: \pi_{\boldsymbol{0}}(\z) \propto\left\Vert \z\right\Vert _{2}^{-k}\exp\left(-\frac{\left\Vert \z\right\Vert _{2}^{2}}{2\sigma^{2}}\right)
    \label{equ:L2smooth}
    \end{split}
\end{equation} 

% \begin{equation}
%     \begin{split}
%         \pi_{\boldsymbol{0}}(\z)
%         \propto\left\Vert \z\right\Vert _{1}^{-k}\exp\left(-\frac{\left\Vert \z\right\Vert _{1}^{}}{b}\right), 
%          \text{and hence}  \quad 
%         p_{\norm{\z}_1}(r)\propto r^{d-k-1} \exp\left (-\frac{r}{b} \right).
%     \label{equ:L1smooth}
%     \end{split}
% \end{equation} 
where we introduce the $\left\Vert \z\right\Vert^{-k}$ term in $\pizero$, with $k$ a positive parameter, to make the radius distribution more concentrated when $k$ is large. 
% As for $\ell_1$ certification, we also introduce a $\norm{z}_1^{-k}$ term to force a more centralized distribution:

The radius distribution in Eq.\ref{equ:L1smooth}
and Eq.\ref{equ:L2smooth} 
is controlled by two parameters: $\sigma$ (or $b$) and $k$, who control the scale and shape of the distribution,  respectively. 
The key idea is that adjusting extra parameter $k$ allows us to control the trade-off the accuracy and robustness more precisely. 
As shown in Fig.\ref{fig:meanvariance}, adjusting $\sigma$ moves the mean close to zero (hence \textcircled{\small{1}} yielding higher accuracy), but at cost of decreasing the variance quadratically (hence \textcircled{\small{2}} less robust). In contrast, adjusting $k$ decreases the mean without significantly impacting the variance, thus yield a much better trade-off on accuracy and robustness.

% \begin{figure} % {r}{0.35\textwidth}  
% % \vspace{-10pt}
% \begin{tabular}{c}
%      \raisebox{4.5em}{\rotatebox{90}{\large Variance}} 
%      \includegraphics[width=0.4\textwidth]{fig/gamma_mean_variance_modify.pdf} \\
%      {\large ~~Mean}\\
% \end{tabular}
% % \vspace{-10pt}
% \caption{\small Starting from radius distribution in Eq.\ref{equ:L2smooth} with $d=100$, $\sigma=1$ and $k=0$ (black star),  
% increasing $k$ (green curve) allows us to move the mean towards zero \emph{without
% significantly reducing the variance}. 
% Decreasing $\sigma$ (red curve) can also decrease the mean, but with a cost of decreasing the variance quadratically.}
% % \vspace{-10pt}
% \label{fig:meanvariance}
% \end{figure}

\vspace{-5pt}
\paragraph{Computational Method}
Now we no longer have the closed-form solution of the bound like Eq.\ref{equ:resultjiaye} and Eq.\ref{equ:resultCohen}. However, efficient computational methods can still be developed for calculating the bound in  Eq.\ref{equ:mainlowerbound} with $\pizero$ in Eq.\ref{equ:L2smooth} or Eq.\ref{equ:L2smooth}. The key is that the maximum of the distance term 
 $\D_{\F_{[0,1]}}(\lambda \pizero~||~ \pi_{\d})$ over $\d\in \B$ 
 is always achieved on the boundary of $\B$: 

\begin{theorem} \label{thm:opt_mu_l2}
Consider the $\ell_{1}$ attack with $\boldsymbol{\mathcal{B}}=\left\{ \d:\left\Vert \d\right\Vert  _{1}\le r\right\}$ and smoothing distribution  $\pi_{\0}(\z)\propto\left\Vert \z\right\Vert _{1}^{-k}\exp\left(-\frac{\left\Vert \z\right\Vert _{1}^{}}{b}\right)$ with $k\ge0$ and $b>0$, or the $\ell_{2}$ attack with $\boldsymbol{\mathcal{B}}=\left\{ \d:\left\Vert \d\right\Vert  _{2}\le r\right\}$ and smoothing distribution  $\pi_{\0}(\z)\propto\left\Vert \z\right\Vert _{2}^{-k}\exp\left(-\frac{\left\Vert \z\right\Vert _{2}^{2}}{2\sigma^{2}}\right)$ with $k\ge0$ and $\sigma>0$.  
Define $\d^{*} = [r,0,...,0]^\top$, we have 
$$
\mathbb{D}_{\mathcal{F}_{[0,1]}}\left(\lambda\pi_{\0}~\Vert~\pi_{\d^*}\right) = \max_{\delta\in\B}\mathbb{D}_{\mathcal{F}_{[0,1]}}\left(\lambda\pi_{\0}~\Vert~\pi_{\d}\right) 
$$
for any positive $\lambda$.
\end{theorem}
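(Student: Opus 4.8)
By Theorem~\ref{thm:dual}(II), for $\F=\F_{[0,1]}$ we have $\D_{\F_{[0,1]}}(\lambda\pizero\Vert\pi_{\d})=\int(\lambda\pizero(\z)-\pi_{\d}(\z))_{+}\,d\z$; and since $\pi_{\d}$ is the law of $\z+\d$ with $\z\sim\pizero$, $\pi_{\d}(\z)=\pizero(\z-\d)$. Using $(a-b)_{+}=a-\min(a,b)$ and $\int\lambda\pizero=\lambda$, this equals $\lambda-M(\d)$, where $M(\d):=\int\min\!\big(\lambda\pizero(\z),\,\pizero(\z-\d)\big)\,d\z\in[0,\lambda]$. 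So the statement is equivalent to: $M$ attains its \emph{minimum} over $\B$ at $\d^{*}=[r,0,\dots,0]^{\top}$. The only structural property of $\pizero$ I will use is that in either case $\pizero(\z)=c_{0}\,h(\norm{\z}_{p})$ for a constant $c_{0}>0$ and a \emph{strictly decreasing} $h\colon(0,\infty)\to(0,\infty)$ (immediate from differentiating $h(t)=t^{-k}e^{-t^{q}/c}$, $q\in\{1,2\}$), so that every super-level set $\{\z:\pizero(\z)>s\}$ is an $\ell_{p}$-ball centered at the origin.

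First I would pass to a layer-cake representation: writing $\min(a,b)=\int_{0}^{\infty}\mathbf 1[a>s]\,\mathbf 1[b>s]\,ds$ for $a,b\ge0$ and using Tonelli,
\[
M(\d)=\int_{0}^{\infty}\mathrm{Vol}\!\big(B_{p}(0,\rho_{1}(s))\cap(\d+B_{p}(0,\rho_{2}(s)))\big)\,ds,
\]
with $\rho_{1}(s)=h^{-1}(s/(\lambda c_{0}))$, $\rho_{2}(s)=h^{-1}(s/c_{0})$ (radius $0$ when the level set is empty). It therefore suffices to show that for \emph{every} fixed pair of radii $R_{1},R_{2}\ge0$ the function $F(\d):=\mathrm{Vol}\big(B_{p}(0,R_{1})\cap(\d+B_{p}(0,R_{2}))\big)$ is minimized over $\{\d:\norm{\d}_{p}\le r\}$ at the \emph{same} point $\d^{*}=[r,0,\dots,0]^{\top}$ regardless of $R_{1},R_{2}$; then integrating $F_{s}(\d)\ge F_{s}(\d^{*})$ over $s$ yields $M(\d)\ge M(\d^{*})$ for all $\d\in\B$.

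The geometric core is the analysis of $F$. Since $\ell_{p}$-balls are symmetric convex bodies, $F=\mathbf 1_{B_{p}(0,R_{1})}\ast\mathbf 1_{B_{p}(0,R_{2})}$, a convolution of two log-concave functions (indicators of convex sets are log-concave), so $F$ is log-concave — in particular quasi-concave — by the Pr\'ekopa--Leindler inequality; and $F$ inherits the symmetry group of the $\ell_{p}$-ball, i.e.\ $F(P\d)=F(\d)$ for every signed-permutation matrix $P$ when $p=1$, and $F(Q\d)=F(\d)$ for every orthogonal $Q$ when $p=2$. Now I would split on $p$. For $p=1$: a quasi-concave function on a polytope attains its minimum at a vertex (since $F(\sum_{i}t_{i}v_{i})\ge\min_{i}F(v_{i})$ for every convex combination, by an easy induction on the number of active vertices), and the vertices of $\{\norm{\d}_{1}\le r\}$ are exactly $\{\pm re_{i}\}$, a single orbit of the signed-permutation group; hence $F(\d)\ge F(re_{1})=F(\d^{*})$ for all $\norm{\d}_{1}\le r$. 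For $p=2$: $F$ depends only on $\norm{\d}_{2}$, and its restriction to any line through the origin is an even log-concave function, hence non-increasing in $|t|$; so the minimum over $\{\norm{\d}_{2}\le r\}$ is attained on the sphere $\norm{\d}_{2}=r$, in particular at $\d^{*}$.

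Putting the pieces together, for all $s>0$ and all $\d\in\B$ we get $F_{s}(\d)\ge F_{s}(\d^{*})$, hence $M(\d)\ge M(\d^{*})$, hence $\D_{\F_{[0,1]}}(\lambda\pizero\Vert\pi_{\d})=\lambda-M(\d)\le\lambda-M(\d^{*})=\D_{\F_{[0,1]}}(\lambda\pizero\Vert\pi_{\d^{*}})$ for every $\d\in\B$ and every $\lambda>0$ — the desired identity. The main obstacle, or at least the part requiring the most care, is the log-concavity of $F$: invoking Pr\'ekopa--Leindler for (not necessarily strictly) log-concave indicators and then extracting both the minimum-at-a-vertex property in the $\ell_{1}$ case and the radial monotonicity in the $\ell_{2}$ case. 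A secondary, routine point is the bookkeeping around $\pizero$: verifying that $h$ is strictly decreasing (so the level sets are genuine balls) and that $\pizero$ is integrable (which needs $k<d$ near the singularity), neither of which affects the structure of the argument.
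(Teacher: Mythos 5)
Your proof is correct, but it takes a genuinely different route from the paper's. The paper works directly with $\D_{\F_{[0,1]}}(\lambda\pizero\Vert\pi_{\d})=\int(\lambda\pizero-\pi_{\d})_{+}$ and establishes the coordinate-wise monotonicity $\mathrm{sgn}(\delta_i)\,\partial_{\delta_i}\D\ge 0$ by differentiating under the integral sign and pairing each point $\x$ with its reflection across the hyperplane $x_i=\delta_i$ (showing the reflected region stays inside $\{\lambda\pizero\ge\pi_{\d}\}$); for $\ell_2$ this plus rotational symmetry finishes, while for $\ell_1$ the paper needs an extra two-coordinate lemma showing that transferring $\ell_1$ mass toward a vertex does not decrease $\D$, proved by a second reflection of the same type. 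You instead convert the problem via $\D=\lambda-\int\min(\lambda\pizero,\pi_{\d})$ and the layer-cake formula into a purely geometric one --- minimizing, for each level $s$, the overlap volume $F_s(\d)=\mathrm{Vol}\bigl(B_p(0,R_1)\cap(\d+B_p(0,R_2))\bigr)$ --- and then dispatch it with log-concavity of $\mathbf 1_{B_1}\conv\mathbf 1_{B_2}$ (Pr\'ekopa) together with the symmetry group of the $\ell_p$ ball: minimum of a quasi-concave function at a vertex for $p=1$, radial monotonicity for $p=2$. Each step checks out (the identity $(a-b)_+=a-\min(a,b)$, Tonelli, strict monotonicity of $h$ so the level sets are balls, the convolution identity using $-B_2=B_2$, and the even-quasi-concave-implies-radially-nonincreasing step). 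What your route buys: it avoids differentiation under the integral sign and its attendant regularity bookkeeping, treats $\ell_1$ and $\ell_2$ uniformly, sidesteps the paper's separate two-coordinate lemma (whose stated inequality in the appendix is in fact reversed relative to what its proof establishes), and generalizes verbatim to any density that is a decreasing function of a norm whose unit ball is convex and shares the symmetry group of $\B$. What the paper's route buys: the coordinate-wise monotonicity is a stronger pointwise statement that it reuses for the mixed-norm $\ell_\infty$ case of Theorem~\ref{thm:opt_mu_linf}. The only points deserving an explicit sentence in a final write-up are the ones you already flag: integrability of $\pizero$ requires $k<d$, and $F_s(\d^*)$ being a minimum (possibly $0$) is all that is needed, so degenerate levels cause no trouble.
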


% See proof in Appendix. 
With Theorem~\ref{thm:opt_mu_l2}, we can compute Eq.\ref{equ:mainlowerbound} with $\d = \d^*$. We then calculate $\mathbb{D}_{\mathcal{F}_{[0,1]}}\left(\lambda\pi_{\0}~\Vert~\pi_{\d^*}\right)= \E_{\z\sim \pizero}\left [\left (\lambda-\frac{\pi_{\d^*}(\z)}{\pizero(\z)}\right )_+\right ]$ using Monte Carlo approximation with i.i.d. samples $\{\z_{i}\}_{i=1}^n$ be i.i.d. samples from $\pizero$: $\hat D := \frac{1}{n}\sum_{i=1}^n \left (\lambda-{\pi_{\d^*}(\z_i)}/{\pizero(\z_i)}\right )_+,$ which is bounded in the following confidence interval $[\hat D - \lambda\sqrt{\log(2/\delta)/({2n})}, \hat D+ \lambda\sqrt{\log(2/\delta)/({2n})}]$ with confidence level $1-\delta$ for $\delta \in (0,1)$.
% Drawing a sufficiently large number of samples could achieve approximation with arbitrary accuracy. 
What's more, the optimization on $\lambda\geq 0$ is one-dimensional and can be solved numerically efficiently (see Appendix for details).

\subsection{$\ell_\infty$ Region Certification}
\label{sec:Linfty} 

Going further, we consider the more difficult $\ell_\infty$ attack whose attacking region is $\B_{\ell_\infty, r}=\{\d \colon \norm{\d}_\infty \leq r\}$. 
The commonly used Gaussian smoothing distribution, as well as our $\ell_2$-based smoothing distribution in Eq.\ref{equ:L2smooth}, is unsuitable for this region:

\begin{proposition}
\label{cor: transformation}
With the smoothing distribution $\pizero$ in Eq.\ref{equ:L2smooth}
for $ k\ge0, \sigma>0$,  and $\F = \F_{[0,1]}$ shown in Eq.\ref{equ:f01}, 
the bound we get for  
certifying the $\ell_{\infty}$ attack on  $\B_{\ell_\infty, r}=\{\d:\left\Vert \d\right\Vert _{\infty}\le r\}$ is equivalent 
to that for certifying the $\ell_{2}$ attack on $\B_{
\ell_2, \sqrt{d}r}=\{\d:\left\Vert  \d\right\Vert _{2}\le {\sqrt{d}}r\}$, that is, 
$$
\V(\F_{[0,1]}, ~ \B_{\ell_\infty, r}) = 
\V(\F_{[0,1]}, ~ \B_{\ell_2, \sqrt{d} r}). 
$$
\end{proposition}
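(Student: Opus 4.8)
The plan is to reduce $\V(\F_{[0,1]},\B)$ to the dual expression of Theorem~\ref{thm:dual} and then observe that the only $\B$-dependent quantity appearing there, $\max_{\d\in\B}\D_{\F_{[0,1]}}(\lambda\pizero\,\Vert\,\pi_{\d})$, takes the same value for $\B=\B_{\ell_\infty,r}$ and $\B=\B_{\ell_2,\sqrt d r}$ at every $\lambda\ge 0$. Concretely, since $\F=\F_{[0,1]}$ and strong duality holds in this setting (Theorem~\ref{thm:dual}(III) together with the Remark following it, the required maximizing $\d^*$ being exhibited below),
\[
\V(\F_{[0,1]},\B)=\max_{\lambda\ge 0}\Big\{\lambda\,\fpizero(\x)-\max_{\d\in\B}\D_{\F_{[0,1]}}\big(\lambda\pizero\,\Vert\,\pi_{\d}\big)\Big\},
\]
and $\fpizero(\x)$ does not involve $\B$; hence it suffices to prove that the inner maxima coincide for all $\lambda\ge 0$.

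The first step is to exploit the rotational invariance of $\pizero$ in Eq.\ref{equ:L2smooth}: its density depends on $\z$ only through $\norm{\z}_2$. Using the closed form $\D_{\F_{[0,1]}}(\lambda\pizero\,\Vert\,\pi_{\d})=\int(\lambda\pizero(\z)-\pizero(\z-\d))_+\,d\z$ from Theorem~\ref{thm:dual}(II) (recall $\pi_{\d}(\cdot)=\pizero(\cdot-\d)$) and the change of variables $\z\mapsto Q\z$ for an orthogonal $Q$ (unit Jacobian), one gets $\D_{\F_{[0,1]}}(\lambda\pizero\,\Vert\,\pi_{\d})=\D_{\F_{[0,1]}}(\lambda\pizero\,\Vert\,\pi_{Q\d})$. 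So for each fixed $\lambda$ the discrepancy depends on $\d$ only through $\norm{\d}_2$; write it as $g_\lambda(\norm{\d}_2)$.

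The second step invokes the $\ell_2$ case of Theorem~\ref{thm:opt_mu_l2}: for every radius $R>0$, $\max_{\norm{\d}_2\le R}\D_{\F_{[0,1]}}(\lambda\pizero\,\Vert\,\pi_{\d})=\D_{\F_{[0,1]}}\big(\lambda\pizero\,\Vert\,\pi_{[R,0,\dots,0]^\top}\big)=g_\lambda(R)$, which combined with the first step says $g_\lambda$ is non-decreasing on $[0,\infty)$. Since $\B_{\ell_\infty,r}\subseteq\B_{\ell_2,\sqrt d r}$ while $\max_{\d\in\B_{\ell_\infty,r}}\norm{\d}_2=\sqrt d r$ (attained at any corner $\d$ with $|\delta_i|=r$ for all $i$), monotonicity of $g_\lambda$ gives
\[
\max_{\d\in\B_{\ell_\infty,r}}\D_{\F_{[0,1]}}\big(\lambda\pizero\,\Vert\,\pi_{\d}\big)=g_\lambda(\sqrt d r)=\max_{\d\in\B_{\ell_2,\sqrt d r}}\D_{\F_{[0,1]}}\big(\lambda\pizero\,\Vert\,\pi_{\d}\big)
\]
for every $\lambda\ge 0$. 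Substituting into the displayed dual form yields $\V(\F_{[0,1]},\B_{\ell_\infty,r})=\V(\F_{[0,1]},\B_{\ell_2,\sqrt d r})$.

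The step I expect to need the most care is the passage to the dual form \emph{with equality}: one must verify the hypothesis of Theorem~\ref{thm:dual}(III) that for each $\lambda\ge 0$ the Lagrangian is minimized over $\d\in\B$ at some $\d^*\in\B$. For $\B_{\ell_2,\sqrt d r}$ this is precisely Theorem~\ref{thm:opt_mu_l2} with $\d^*=[\sqrt d r,0,\dots,0]^\top$; for $\B_{\ell_\infty,r}$ any hypercube corner serves, since it has $\ell_2$ norm $\sqrt d r$ and $\D_{\F_{[0,1]}}(\lambda\pizero\,\Vert\,\pi_{\d})$ is a function of $\norm{\d}_2$ alone. A route avoiding duality is also available: the inner primal value $h(\d):=\min_{f\in\F_{[0,1]}}\{f_{\pizero}(\x+\d):f_{\pizero}(\x)=\fpizero(\x)\}$ is rotationally invariant in $\d$ by the same change of variables and non-increasing in $\norm{\d}_2$ by Theorem~\ref{thm:opt_mu_l2}, so $\V(\F_{[0,1]},\B)=\min_{\d\in\B}h(\d)$ is determined by $\max_{\d\in\B}\norm{\d}_2$, which equals $\sqrt d r$ for both regions.
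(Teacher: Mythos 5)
Your proof is correct, and it uses exactly the machinery the paper sets up for this result (the dual form of Theorem~\ref{thm:dual}, the boundary-maximization of Theorem~\ref{thm:opt_mu_l2}, and the rotational invariance of the density in Eq.~\ref{equ:L2smooth}); the paper itself states Proposition~\ref{cor: transformation} without writing out a proof, so your argument is the natural — and a valid — completion of what the authors left implicit. The one step worth keeping explicit, as you do, is verifying the hypothesis of Theorem~\ref{thm:dual}(III) for the $\ell_\infty$ ball, which follows since the corner $[r,\dots,r]^\top$ attains $\max_{\d\in\B_{\ell_\infty,r}}\norm{\d}_2=\sqrt{d}\,r$ and the discrepancy is a non-decreasing function of $\norm{\d}_2$ alone.
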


\begin{wrapfigure}{r}{0.45\textwidth}  
\vspace{-10pt}
% \begin{tabular}{c}
%      \raisebox{3.5em}{\rotatebox{90}{\small Variance}} 
%      \includegraphics[width=0.3\textwidth]{fig/gamma_mean_variance_modify.pdf} \\
%      {\small Mean}\\
% \end{tabular}
\vspace{5pt}
\includegraphics[width=.45\textwidth]{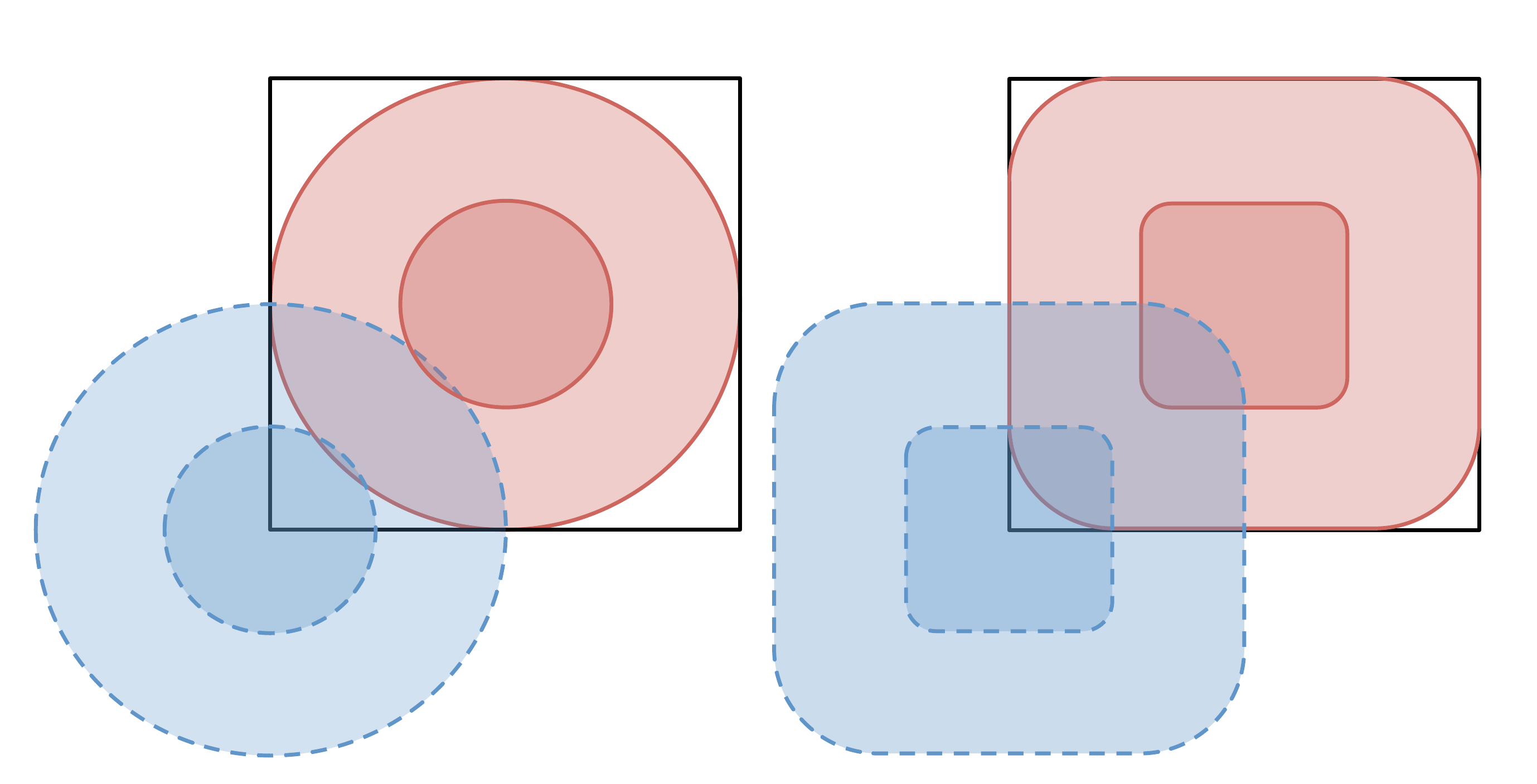}
\vspace{-5pt}
\caption{
     For $\ell_\infty$ attacking, 
     compared with the distribution in Eq.\ref{equ:L2smooth},
     the mixed norm distribution in ~Eq.\ref{equ:mixednormfamily} (right) 
     yields smaller discrepancy term (because of larger overlap areas), and hence higher robustness and better confidence bound.
     The distribution described in ~Eq.\ref{eq:inf_inf_distribution} has the same impact.}
\vspace{-30pt}
\label{fig:c}
\end{wrapfigure}

As shown in this proposition, if we use $\ell_2$ distribution in Eq.\ref{equ:L2smooth} for certification, the bound we obtain is effectively the bound we get for verifying a $\ell_2$ ball with radius $\sqrt{d} r$, which is too large to give meaningful results due to high dimension.
% The maximum distance $\max_{\d \in \B_{\ell_\infty, r}} \D_\F(\lambda \pizero~\Vert~ \pi_{\d})$ is achieved at one of these pointy points, i.e., $\d^*=[\sqrt{d}r,0,...,0]^\top$, making it equivalent to optimizing in the $\ell_2$ ball with radius $\sqrt{d}r.$ 

In order to address this problem, 
we extend our proposed distribution family with new distributions which are more suitable for $\ell_\infty$ certification setting:
% We propose the following two new smoothing distributions for $\ell_\infty$ certification:
\begin{equation}
\label{eq:inf_inf_distribution}
\pi_{\boldsymbol{0}}(\z) \propto  \left\Vert \z\right\Vert _{\infty}^{-k}\exp\left(-\frac{\left\Vert \z\right\Vert _{\infty}^{2}}{2\sigma^{2}}\right),
\end{equation}

\begin{equation}
\label{equ:mixednormfamily}
\pizero(\z) \propto \norm{\z}_{\infty}^{-k} \exp\left (- \frac{\norm{\z}_2^2}{2\sigma^2} \right ).
\end{equation}

% Though it seems a natural choice to perform certification with Eq.\ref{eq:inf_inf_distribution}, this \emph{pure $\ell_\infty$ norm} distribution family doesn't work effectively for $\ell_\infty$ attacks, partially due to the extremely large volume of the $\ell_\infty$ ball (\emph{e.g.}, compared with the volume of $\ell_2$ ball). See Section~\ref{sec:l_inf_exp} for a numerical justification, and a formalized version of this claim is defered to Appendix. Hence in Eq.\ref{equ:mixednormfamily}, we propose to use a mix of $\ell_2$ and $\ell_\infty$ norms and replace the $\norm{\z}_2^{-k}$ term in Eq.\ref{equ:L2smooth} with $\norm{\z}_\infty^{-k}$. 
The motivation is to allocate more probability mass along the ``pointy'' directions with larger $\ell_\infty$ norm, and hence decrease the maximum discrepancy term $\max_{\d \in \B_{\ell_\infty, r}} \D_\F(\lambda \pizero~\Vert~ \pi_{\d})$, see  Fig.\ref{fig:c}.

\paragraph{Computational Method}
In order to compute the lower bound with proposed distribution, we need to establish similar theoretical results as Theorem~\ref{thm:opt_mu_l2}, showing the optimal $\d$ is achieved at one vertex (the pointy points) of $\ell_\infty$ ball. 

\begin{theorem} \label{thm:opt_mu_linf}
Consider the $\ell_{\infty}$ attack with  ${\B}_{\ell_\infty,r}=\left\{ \d:\left\Vert \d\right\Vert  _{\infty}\le r\right\}$ and the mixed norm smoothing distribution in Eq.\ref{equ:mixednormfamily}
%$\pi_0$  $\pi_{\0}(\z)\propto\left\Vert \z\right\Vert _{\infty}^{-k}\exp\left(-\frac{\left\Vert \z\right\Vert _{2}^{2}}{2\sigma^{2}}\right)$ 
with $k\ge0$ and $\sigma>0$. 
Define $\d^{*} = [r,r,...,r]^\top$. 
We have for any $\lambda > 0$, 
$$
\mathbb{D}_{\mathcal{F}_{[0,1]}}\left(\lambda\pi_{\0}~\Vert~\pi_{\d^*}\right) = \max_{\delta\in\B}\mathbb{D}_{\mathcal{F}_{[0,1]}}\left(\lambda\pi_{\0}~\Vert~\pi_{\d}\right). 
$$
%where . 
%Consider the $\ell_{\infty}$ attack with $\boldsymbol{\mathcal{B}}=\left\{ \d:\left\Vert \d\right\Vert _{\infty}\le r\right\}$, given $\pi_{\0}(\z)\propto\left\Vert \z\right\Vert _{\infty}^{-k}\exp\left(-\frac{\left\Vert \z\right\Vert _{2}^{2}}{2\sigma^{2}}\right)$ for some $k\ge0$ and $\sigma>0$,
%we obtain $$\d^{*}=[r,r,...,r]^{\top}=\underset{\d\in\boldsymbol{\mathcal{B}}}{\arg\max}\ \mathbb{D}_{\mathcal{F}_{[0,1]}}\left(\lambda\pi_{\0}\parallel\pi_{\d}\right)$$for any $\lambda\ge0$. 
%Given a $d$ dimension perturbation $\|\d\|_\infty \leq r$ and a distribution $\pi_{\boldsymbol{0}}(\z)&\propto\left\Vert \z\right\Vert _\infty^{-k}\exp\left(-\frac{\left\Vert \z\right\Vert _{2}^{2}}{2\sigma_r^{2}}\right)$,
%$\d^* = \arg\max_{\d\in\mathcal{\B}}\D_+\left(\text{\ensuremath{\lambda}}\pi_{\0}(\cdot)\Vert \pi_{\d} (\cdot)\right)$ if and only if $|\d^*_i| = r$ for any $i \in \{1, \dots, d\}$.
\end{theorem}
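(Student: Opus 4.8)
The plan is to reduce the maximization over $\d \in \B_{\ell_\infty,r}$ to a coordinatewise argument, exploiting the product-like structure of the mixed-norm density $\pizero(\z) \propto \norm{\z}_\infty^{-k}\exp(-\norm{\z}_2^2/(2\sigma^2))$. By part II of Theorem~\ref{thm:dual}, the quantity to maximize is
\[
\D_{\F_{[0,1]}}(\lambda\pizero \Vert \pi_{\d}) = \int \left(\lambda\pizero(\z) - \pizero(\z-\d)\right)_+ \, d\z,
\]
after the change of variables that turns $\pi_{\d}(\z)$ (density of $\pizero$ shifted by $\d$, centered at $\x$) into $\pizero(\z-\d)$. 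First I would rewrite this as $\lambda - \int \min(\lambda\pizero(\z), \pizero(\z-\d))\, d\z$, so that maximizing the discrepancy over $\d$ is equivalent to \emph{minimizing} the overlap integral $O(\d) := \int \min(\lambda\pizero(\z),\pizero(\z-\d))\,d\z$ over the cube $\norm{\d}_\infty \le r$. The claim then becomes: the overlap is smallest when $\d$ sits at the corner $\d^* = (r,\dots,r)^\top$.

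The key structural observation is that the exponential factor $\exp(-\norm{\z}_2^2/(2\sigma^2)) = \prod_i \exp(-z_i^2/(2\sigma^2))$ is a product over coordinates, and a shift by $\d$ acts separately on each coordinate of this factor. The $\ell_\infty$ prefactor $\norm{\z}_\infty^{-k}$ does not factor, but it only depends on $\z$ and $\z-\d$ through the largest absolute coordinate, which I would handle by the symmetry/monotonicity structure it shares with the Gaussian part. Concretely, I plan a one-coordinate-at-a-time argument: fix all coordinates of $\d$ except the $j$-th, and show $O(\d)$ is nonincreasing as $|\d_j|$ increases from $0$ to $r$ (and, by the reflection symmetry $\z \mapsto -\z$ of $\pizero$, it suffices to consider $\d_j \ge 0$, pushing it up to $\d_j = r$). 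Iterating over $j = 1,\dots,d$ moves every coordinate to $r$ and yields $\d^* = (r,\dots,r)^\top$. The monotonicity in a single coordinate $\d_j$ should follow from a rearrangement/coupling argument: the Gaussian marginal in coordinate $j$ is symmetric and log-concave, so $\int \min(g(t), g(t-s))\,dt$ is nonincreasing in $s \ge 0$ for a symmetric unimodal $g$; the coupling with the shared nonnegative weight coming from the other coordinates and from $\norm{\z}_\infty^{-k}$ has to be shown to preserve this. I expect I will want to slice the integral by the value of $\max_{i\neq j}|z_i|$ and the sign configuration, reducing to a one-dimensional inequality for each slice.

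The main obstacle is the $\norm{\z}_\infty^{-k}$ prefactor, which couples all coordinates and is \emph{not} unimodal in the product sense — it blows up near the coordinate hyperplanes $z_i = 0$ and is constant along rays where a fixed coordinate dominates. So the clean ``symmetric unimodal marginal'' argument does not apply verbatim; I would need to verify that $\min(\lambda\pizero(\z), \pizero(\z-\d))$ still has enough monotone structure coordinatewise. One route: show that on the region where $\pizero(\z-\d) \le \lambda\pizero(\z)$ the integrand equals $\pizero(\z-\d)$, whose mass near $\z = \d^*$-shifted configurations is being pushed toward the low-density ``pointy'' corner, and symmetrically on the complementary region; then a direct comparison of $O(\d)$ and $O(\d')$ for $\d'$ obtained from $\d$ by increasing one coordinate, via an explicit measure-preserving reflection on the slab between the two shifted copies, should give the inequality. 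I would also double-check the degenerate cases $k=0$ (pure Gaussian-type, where the statement is classical and matches the spirit of Theorem~\ref{thm:opt_mu_l2}) and $\lambda$ large or small (where one of the two terms in the $\min$ dominates globally and the inequality is immediate) to make sure the argument's hypotheses $k\ge 0$, $\sigma>0$, $\lambda>0$ are exactly what is used. Details are deferred to the appendix.
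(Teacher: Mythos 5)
Your overall strategy---reduce to showing that the overlap $O(\d)=\int\min\left(\lambda\pizero(\z),\pizero(\z-\d)\right)d\z$ is coordinatewise nonincreasing in $|\delta_j|$, then push every coordinate to $r$---is exactly the strategy of the paper, which proves the stronger statement $\mathrm{sgn}(\delta_i)\,\frac{\partial}{\partial\delta_i}\D_{\F_{[0,1]}}(\lambda\pizero\Vert\pi_{\d})\ge 0$ for every $i$ (in fact for the more general family $\propto\norm{\z}_\infty^{-k_1}\norm{\z}_2^{-k_2}e^{-\norm{\z}_2^2/(2\sigma^2)}$). However, as written your proposal is not yet a proof: it defers precisely the step where the work lies, namely the verification that the $\norm{\z}_\infty^{-k}$ prefactor does not break the one-dimensional monotonicity. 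The good news is that the ``main obstacle'' you flag is not actually an obstacle, and your first route already closes it. Fix $z_{-j}$ and set $M=\max_{i\neq j}|z_i|$, $M'=\max_{i\neq j}|z_i-\delta_i|$; the two conditional slices are $z_j\mapsto c_0\max(|z_j|,M)^{-k}e^{-z_j^2/(2\sigma^2)}$ and $z_j\mapsto c_1\max(|z_j-\delta_j|,M')^{-k}e^{-(z_j-\delta_j)^2/(2\sigma^2)}$, with $c_0,c_1,M,M'$ independent of $z_j$ and of $\delta_j$. Each is symmetric about its center ($0$ resp.\ $\delta_j$) and \emph{nonincreasing} in the distance to its center (constant on $[0,M]$, then decreasing---nonincreasing is all you need; you do not need strict unimodality or a product structure). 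Hence by the layer-cake identity $\min(a,b)=\int_0^\infty \mathbb{I}[a>u]\,\mathbb{I}[b>u]\,du$, the slice overlap equals $\int_0^\infty \mathrm{Leb}\left([-\alpha(u),\alpha(u)]\cap[\delta_j-\beta(u),\delta_j+\beta(u)]\right)du$ with $\alpha,\beta$ independent of $\delta_j$, and the intersection length of two intervals of fixed lengths centered at $0$ and $\delta_j$ is nonincreasing in $|\delta_j|$. Integrating over $u$ and $z_{-j}$ gives the coordinatewise monotonicity, and the sign-flip symmetry of $\pizero$ lets you assume $\delta_j\ge 0$, so the maximizer is the corner $\d^*=[r,\dots,r]^\top$.

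For comparison, the paper's actual proof is your second route carried out infinitesimally: it differentiates $\int(\lambda\pizero(\x)-\pi_{\d}(\x))_+d\x$ in $\delta_j$ and pairs the region $H_1=\{\lambda\pizero\ge\pi_{\d},\,x_j>\delta_j\}$ with its reflection about the hyperplane $x_j=\delta_j$. The only properties used are that this reflection preserves both $\norm{\x-\d}_\infty$ and $\norm{\x-\d}_2$ exactly while weakly decreasing $\norm{\x}_\infty$ and $\norm{\x}_2$ (since $x_j\ge\delta_j\ge 0$ implies $|2\delta_j-x_j|\le|x_j|$), together with the monotonicity of $s\mapsto s^{-k}$ and $s\mapsto e^{-s^2/(2\sigma^2)}$; this simultaneously shows the reflected set lands inside the active region and that its derivative contribution cancels that of $H_1$. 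Either route is fine; yours, once the observation above is made explicit, is arguably more elementary because it avoids differentiating through the indicator and the Jacobian bookkeeping, but you must commit to it and spell out the conditional-slice monotonicity rather than leaving it as a conjecture.
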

The proofs of Theorem \ref{thm:opt_mu_l2} and \ref{thm:opt_mu_linf} are non-trivial and deferred to Appendix.  With the optimal $\d^*$ found above, we can calculate the bound with similar Monte Carlo approximation outlined in Section~\ref{sec:L2}. 

\section{Experiments}
We evaluate proposed certification bound and smoothing distributions for $\ell_1$, $\ell_2$ and $\ell_\infty$ attacks. We compare with the randomized smoothing method of \cite{teng2020ell} with Laplacian smoothing for $\ell_1$ region cerification.
For $\ell_2$ and $\ell_\infty$ cases, 
we regard the method derived by \cite{cohen2019certified} with Gaussian smoothing distribution as the baseline. 
%For $\ell_2$ attacks, 
%we compare to the method based on isotropic Gaussian distribution in  \cite{cohen2019certified}. 
%For $\ell_\infty$ perturbation, we also use Gaussian distribution as baseline.
% For $\ell_1$ perturbation, 
% we choose Gaussian distribution and Laplace distribution as the baselines.
For fair comparisons, we use the same model architecture and pre-trained models provided by \cite{teng2020ell}, \cite{cohen2019certified} and \cite{salman2019provably}, 
which are ResNet-110 for CIFAR-10 and ResNet-50 for ImageNet. We use the official code\footnote{\url{https://github.com/locuslab/smoothing}. Our results are slightly different with those in original paper due to the randomness of sampling.} provided by ~\cite{cohen2019certified} for all the following experiments. For all other details and parameter settings, we refer the readers to Appendix~\ref{sec: hyperparameters}.

\begin{table}[tbhp]
% \begin{table}[t]
% \vspace{-3pt}
\begin{center}
% \begin{footnotesize}
\begin{sc} % text font
\setlength\tabcolsep{2.5pt}
		\begin{tabular}{l|ccccccccc}
			\toprule
            $\ell_1$ Radius (CIFAR-10) & $0.25$ & $0.5$ & $0.75$ & $1.0$ & $1.25$ & $1.5$ & $1.75$ & $2.0$ & $2.25$ \\
            \midrule
            Baseline (\%) & 62 & 49 & 38 & 30 & 23 & 19 & 17 & 14 & 12 \\
            Ours (\%) & \bf{64} & \bf{51} & \bf{41} & \bf{34} & \bf{27} & \bf{22} & \bf{18} & \bf{17} & \bf{14} \\
            \bottomrule
		\end{tabular}
		\vspace{10pt}
		
		\setlength\tabcolsep{5pt}
		\begin{tabular}{l|ccccccc}
	\toprule
$\ell_1$ Radius (ImageNet)& $0.5$ & $1.0$ & $1.5$  & $2.0$ & $2.5$ & $3.0$ & $3.5$ \\
\midrule
Baseline (\%) & 50 & 41 & 33 & 29 & 25 & 18 & 15  \\
Ours (\%) & \bf{51} & \bf{42} & \bf{36} & \bf{30} & \bf{26} & \bf{22} & \bf{16}\\
            \bottomrule
		\end{tabular}
	\end{sc}
    % \end{footnotesize}
	\end{center}
	\caption{\label{table:l1-CIFAR10} Certified top-1 accuracy of the best classifiers with various $\ell_1$ radius.
% 	on CIFAR-10 and ImageNet.
	}%, on the models in  \cite{cohen2019certified}.}
	%and do not train any new models. }
\end{table}
% \vspace{-0.4cm}

\iffalse
\paragraph{Settings and Hyperparameters}
The details of our method are shown in the supplementary material. 
Since our method requires  Monte Carlo approximation, 
 we draw $0.1M$ samples from $\pizero$ and construct 
 $\alpha=$ 99.9\% confidence lower bounds of that in \eqref{eq:objective2}. The optimization on $\lambda$ is solved using grid search. 
 %as  \cite{cohen2019certified}. 
%the probability $p_0$ is estimated by the lower bound given by Clopper-Pearson Bernoulli confidence interval \cite{agresti1998approximate} with 0.1M samples drawn from the distribution of perturbation and the $\alpha=$ 99.9\% confidence level as  \cite{cohen2019certified}. 
% We list our algorithm and more details in Appendix~\ref{section:algorithm}.
For $\ell_2$ attacks, we set $k = 500$ for CIFAR-10 and $k = 50000$ for ImageNet in our non-Gaussian smoothing distribution \eqref{equ:L2smooth}. If the used model was trained with a Gaussian perturbation noise of  $\mathcal{N}(0, \sigma_0^2)$, then the $\sigma$ parameter of our smoothing distribution  
is set to be $\sqrt{({d - 1})/({d - 1 - k})}\sigma_0$, such that the expectation of the norm $\norm{\z}_2$ under our non-Gaussian distribution \eqref{equ:L2smooth} matches with the norm of $\normal(0, \sigma_0^2)$. 
%where $d$ denotes the input dimension,  
%in this way, the expectation of our non-
\red{For $\ell_1$ situation, XXX}
For $\ell_\infty$ situation, we set $k = 250$ and $\sigma$ also equals to $\sqrt{{(d - 1})/({d - 1 - k}})\sigma_0$ for the mixed norm smoothing distribution \eqref{equ:mixednormfamily}. \emph{In all cases, the baseline algorithm uses a Gaussian smoothing distribution $\normal(0, \sigma_0^2)$.} More ablation study about $k$ is deferred to the supplementary material.
\fi

\vspace{-7pt}
\paragraph{Evaluation Metrics}
Methods are evaluated with the certified accuracy defined in \cite{cohen2019certified}. 
Given an input image $\boldsymbol{x}$ and a perturbation region $\B$,
the smoothed classifier  
 certifies image $\boldsymbol{x}$ correctly if the prediction is correct and has a guaranteed confidence lower bound larger than $1/2$ for any $\d\in \B$.
The certified accuracy is the percentage of images that are certified correctly. Following \cite{salman2019provably}, we calculate the certified accuracy of all the classifiers in  \cite{cohen2019certified} or \cite{salman2019provably} for various radius, and report the best results over all of classifiers.

% \vspace{-10pt}
% \begin{table}[htbp]
% 	\begin{center}
% 	\begin{footnotesize}
%     \begin{sc}  
%     \setlength\tabcolsep{5pt}
% 		\begin{tabular}{l|ccccccc}
% 	\toprule
% $\ell_1$ Radius & $0.5$ & $1.0$ & $1.5$  & $2.0$ & $2.5$ & $3.0$ & $3.5$ \\
% \midrule
% Baseline (\%) & 50 & 41 & 33 & 29 & 25 & 18 & 15  \\
% Ours (\%) & \bf{51} & \bf{42} & \bf{36} & \bf{30} & \bf{26} & \bf{22} & \bf{16}\\
%             \bottomrule
% 		\end{tabular}
% 		\end{sc}
%     \end{footnotesize}
% 	\end{center}
% % 	\vspace{-10pt}
% 	\caption{\label{table:l1-imagenet} Certified top-1 accuracy of the best classifiers with various $\ell_1$ radius on ImageNet.}
%  	\vspace{-10pt}
% \end{table}

\subsection{$\ell_1$ \& $\ell_2$  Certification}
For $\ell_1$ certification, we compare our method with \cite{teng2020ell} on CIFAR-10 and ImageNet  with the type 1 trained model in \cite{teng2020ell}. As shown in Table~\ref{table:l1-CIFAR10}, our non-Laplacian centripetal distribution consistently outperforms the result of baseline for any $\ell_1$ radius.
% The ImageNet result is in Table~\ref{table:l1-imagenet}, which shows that our method outperforms the baseline's result uniformly for all $\ell_1$ radius.

\begin{table}[thbp]
% \begin{table}[t]
% \vskip 0.15in
\begin{center}
% \begin{footnotesize}
\begin{sc}
\setlength\tabcolsep{2.5pt}
		\begin{tabular}{l|ccccccccc}
			\toprule
            $\ell_2$ Radius (CIFAR-10) & $0.25$ & $0.5$ & $0.75$ & $1.0$ & $1.25$ & $1.5$ & $1.75$ & $2.0$ & $2.25$ \\
            \midrule
            Baseline (\%) & 60 & 43 & 34 & 23 & 17 & 14 & 12 & 10 & 8 \\
            Ours (\%) & \bf{61} & \bf{46} & \bf{37} & \bf{25} & \bf{19} & \bf{16} & \bf{14} & \bf{11} & \bf{9} \\
            \bottomrule
		\end{tabular}
		\vspace{10pt}
		
		\setlength\tabcolsep{5pt}
		\begin{tabular}{l|ccccccc}
			\toprule
            $\ell_2$ Radius (ImageNet)& $0.5$ & $1.0$ & $1.5$ & $2.0$ & $2.5$ & $3.0$ & $3.5$ \\
            \midrule
            Baseline (\%) & 49 & 37 & 29 & 19 & 15 & 12 & 9\\
            Ours (\%) & \bf{50} & \bf{39} & \bf{31} & \bf{21} & \bf{17} & \bf{13} & \bf{10}\\
            \bottomrule
		\end{tabular}
	\end{sc}
    % \end{footnotesize}
	\end{center}
	\caption{\label{table:l2-CIFAR-10} Certified top-1 accuracy of the best classifiers with  various $\ell_2$ radius.}% on CIFAR-10.}%, on the models in  \cite{cohen2019certified}.}
	%and do not train any new models. }
\end{table}

% \begin{table}[thbp]
% 	\begin{center}
% 	\begin{footnotesize}
%     \begin{sc}  
%     \setlength\tabcolsep{5pt}
% 		\begin{tabular}{l|ccccccc}
% 			\toprule
%             $\ell_2$ Radius & $0.5$ & $1.0$ & $1.5$ & $2.0$ & $2.5$ & $3.0$ & $3.5$ \\
%             \midrule
%             Baseline (\%) & 49 & 37 & 29 & 19 & 15 & 12 & 9\\
%             Ours (\%) & \bf{50} & \bf{39} & \bf{31} & \bf{21} & \bf{17} & \bf{13} & \bf{10}\\
%             \bottomrule
% 		\end{tabular}
% 		\end{sc}
%     \end{footnotesize}
% 	\end{center}
% 	\caption{\label{table:l2-imagenet} Certified top-1 accuracy of the best classifiers with various $\ell_2$ radius on ImageNet.}
% 	%We use the same model as \cite{cohen2019certified} and do not train any new models. }
% \end{table}

% \subsection{$\ell_2$ Certification}
\vspace{-10pt}
For $\ell_2$ certification, we compare our method with \cite{cohen2019certified} on CIFAR-10 and ImageNet. 
%when the certification region is defined as a $\ell_2$ ball.
For a fair comparison, we use the same pre-trained models as \cite{cohen2019certified}, which is trained with Gaussian noise on both CIFAR-10 and ImageNet dataset.
Table~\ref{table:l2-CIFAR-10} 
% and Table~\ref{table:l2-imagenet}
reports the certified accuracy of our method 
% with the non-Gaussian smoothing distribution in \eqref{equ:L2smooth} 
and the baseline on CIFAR-10 and ImageNet 
% respectively
. We find that our method consistently outperforms the baseline.
% For example, it enhances the 34\% accuracy to 37\% for 0.75 radius on CIFAR-10. It also imrpove the top-1 accuracy from $37\%$ to $39\%$ for 1.0 radius on ImageNet.
% Since we fix the same $k$ across all the models and all the radius, the improvement cannot be obtained by tuning $\sigma^2$ fine-grainedly. 
The readers are referred  to the Appendix~\ref{sec:details} for detailed ablation studies.

% \vspace{-5pt}
\subsection{$\ell_\infty$ Certification}
\label{sec:l_inf_exp}
\paragraph{Toy Example}
We first construct a simple toy example to verify the advantages of the distribution Eq.\ref{equ:mixednormfamily}  and Eq.\ref{eq:inf_inf_distribution} over the $\ell_2$ family in Eq.\ref{equ:L2smooth}. We set the true classifier to be $\ftrue(\vv x) = \mathbb{I}(\norm{x}_2\leq r)$  in $r=0.65$, $d=5$ case and plot in Fig.\ref{fig:linf_example} the Pareto frontier of the accuracy and robustness terms in Eq.\ref{eq:objective2} for the three families of smoothing distributions, as we search for the best combinations of parameters $(k, \sigma)$. 
 The mixed norm smoothing distribution clearly obtain the best trade-off on accuracy and robustness, and hence guarantees a tighter lower bound for certification.  
Fig.\ref{fig:linf_example} also shows that Eq.\ref{eq:inf_inf_distribution} even performs worse than Eq.\ref{equ:L2smooth}.
We further theoretically show that Eq.\ref{eq:inf_inf_distribution} is provably not suitable for $\ell_\infty$ region certification in Appendix~\ref{sec:lin_impossible}.

% \begin{figure}[htbp]
\begin{figure*}[t]
\begin{minipage}{0.5\textwidth}
\begin{tabular}{c}
     \raisebox{2.0em}{\rotatebox{90}{Accuracy}}
     \includegraphics[width=0.9\textwidth]{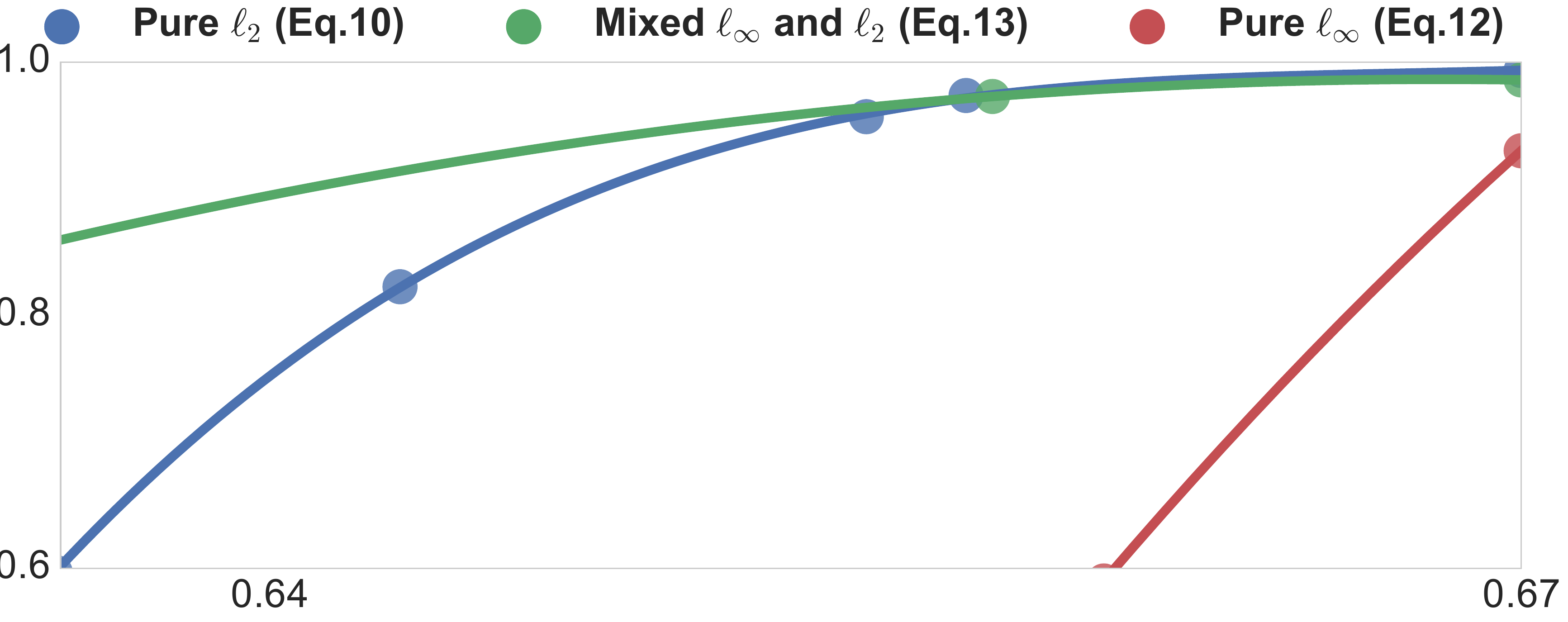} 
     \\
     { \centering Discrepancy Term (Robustness)}
    %  \\
\end{tabular}
\vspace{-5pt}
\caption{\small
The Pareto frontier of accuracy and robustness (in the sense of Eq.\ref{eq:objective2}) 
of the three smoothing families in Eq.\ref{equ:L2smooth}, Eq.\ref{equ:mixednormfamily}, and Eq.\ref{eq:inf_inf_distribution} for $\ell_{\infty}$ attacking,  
when we search for the best parameters $(k,\sigma)$ for each of them. The mixed norm family Eq.\ref{equ:mixednormfamily} yields the best trade-off than the other two. 
We assume $\ftrue(\vec x)=\mathbb{I}(\norm{\vec x}_2\leq r)$  and dimension $d=5$. The case when $\ftrue(\vec x)=\mathbb{I}(\norm{\vec x}_\infty\leq r)$ has similar result (not shown). 
}
\label{fig:linf_example}
\end{minipage}
\hfill
\begin{minipage}{0.45\textwidth}
% \begin{tabular}{c}
\small{~~~~~~$\sigma_0=1.00$~~~~~~~$\sigma_0=0.50$~~~~~~$\sigma_0=0.25$~~~~~~~~~~}\\
\raisebox{0.em}{\rotatebox{90}{\scriptsize Log [Acc (\%)] }} 
\includegraphics[width=1.\textwidth]{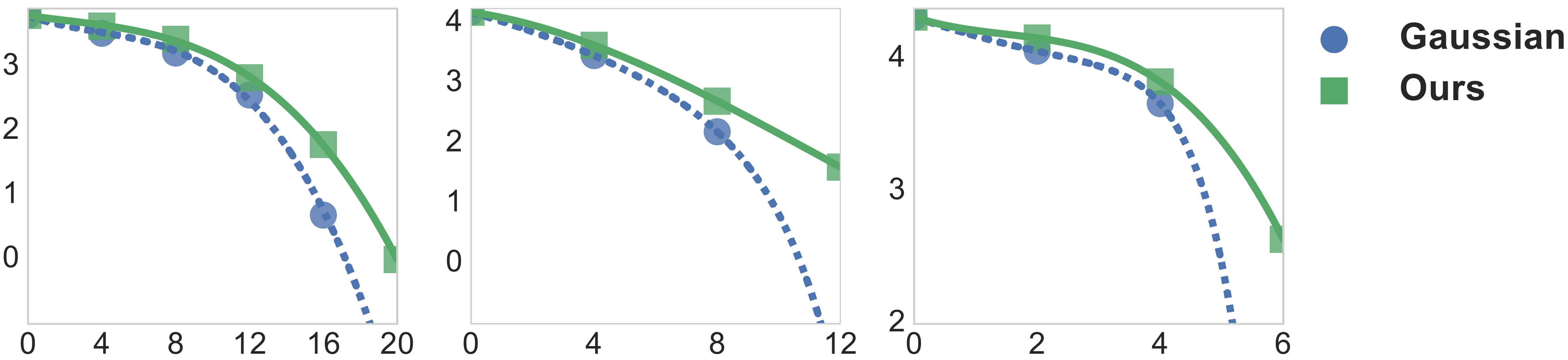} \\
\vspace{-1\baselineskip}
 \small ~~~~~~~~~~~~~~~~~~~~~~$\ell_\infty$ radius $\times$ 255~~~~~~~~~~~~\\
% \end{tabular}
\caption{
Results of $\ell_\infty$ verification on CIFAR-10, 
on models trained with Gaussian noise data augmentation with different variances $\sigma_0$. 
Our method obtains consistently better results.}
\vspace{-10pt}
\label{fig:linf}
\end{minipage}

\end{figure*}

\paragraph{CIFAR-10}
Based on above results, we only compared the method defined by Eq.\ref{equ:mixednormfamily} with \cite{salman2019provably} on CIFAR-10.
% using the models trained by \cite{salman2019provably}. 
The certified accuracy of our method and the baseline using Gaussian smoothing distribution and Proposition \ref{cor: transformation} are shown in Table~\ref{table:linf-CIFAR-10}. 
We can see that our method consistently outperforms the Gaussian baseline by a large margin. More clarification about $\ell_\infty$ experiments is in Appendix~\ref{sec:l_inf_clarification}. 
% which empirically shows our distribution is a more suitable distribution for $\ell_\infty$ perturbation.
% As shown in Figure~\ref{fig:linf_example}, 
% it can make the distance between $\pi_{\0}$ and $\pi_{\d}$, which indicates more robustness.

% \begin{figure}[htbp]
% \centering
% \begin{tabular}{c}
% \small{$\sigma_0=1.00$ ~~~~~~~~~~~~$\sigma_0=0.50$~~~~~~~~~~~~~$\sigma_0=0.25$~~~~~~~~~~~~~~~}\\
% \raisebox{0.em}{\rotatebox{90}{\small Log [Acc (\%)] }} 
% \includegraphics[width=0.475\textwidth]{fig/l_inf_case.pdf} \\
% % \vspace{-1\baselineskip}
% \small $\ell_\infty$ radius $\times$ 255~~~~~~~~~~~~~~~\\
% \end{tabular}
% \caption{
% Results of $\ell_\infty$ verification on CIFAR-10, 
% on models trained with Gaussian noise data augmentation with different variances $\sigma_0$. 
% Our method obtains consistently better results.}
% % \vspace{-10pt}
% \label{fig:linf}
% \end{figure}

\begin{table}[htbp]
	\begin{center}
% 	\begin{footnotesize}
    \begin{sc} 
    \setlength\tabcolsep{4pt}
		\begin{tabular}{l|cccccc}
			\toprule
            $l_\infty$ Radius & $2/255$ & $4/255$ & $6/255$ & $8/255$ & $10/255$ & $12/255$\\
            \midrule
            Baseline (\%) & 58 & 42 & 31 & 25 & 18 & 13\\
            Ours (\%) & \bf{60} & \bf{47} & \bf{38} & \bf{32} & \bf{23} & \bf{17}\\
            \bottomrule
		\end{tabular}
		\end{sc}
    % \end{footnotesize}
	\end{center}
	\caption{\label{table:linf-CIFAR-10} Certified top-1 accuracy of the best classifiers with various $l_\infty$ radius on CIFAR-10.}
	%We use the same model as \cite{salman2019provably} and do not train any new models. }
% 	\vspace{-10pt}
\end{table}

% \vspace{-10pt}
% \red
{To further confirm the advantage of our method, we plot in Fig.\ref{fig:linf}
the certified accuracy of our method and Gaussian baseline using models trained with Gaussian perturbation of different variances $\sigma_0$ under different $\ell_\infty$ radius.} Our approach outperforms baseline consistently, especially when the $\ell_\infty$ radius is large. 
We also experimented our method and baseline on ImageNet but did not obtain non-trivial results. This is because $\ell_\infty$ verification is extremely hard with very large dimensions \cite{kumar2020curse, blum2020random}.
Future work will investigate how to obtain non-trivial bounds for $\ell_\infty$ attacking at ImageNet scales with smoothing classifiers.

% \vspace{-10pt}
\section{Conclusion} 
% \vspace{-10pt}
We propose a general \functional optimization based framework of adversarial certification with non-Gaussian smoothing distributions. Based on the insights from our new framework and high dimensional geometry, we propose a new family of non-Gaussian smoothing distributions, which outperform the Gaussian and Laplace smoothing for certifying $\ell_1$, $\ell_2$ and $\ell_\infty$ attacking. Our work provides a basis for a variety of future directions, including improved methods for $\ell_p$ attacks, and tighter bounds based on adding additional constraints to our optimization framework.

%our bound. 
%better results for 
%We then propose a distribution family for improving the certification accuracy for perturbation defined in a $\ell_\infty$ and a $\ell_2$ region.
%We demonstrate through experiments that our proposed distributions can consistently outperform Gaussian distributions in large scale datasets for both $\ell_2$ and $\ell_\infty$ perturbation.
%For future works, 
%we plan to study 
%In the future, we will first consider perturbation defined in more general $\ell_p$ region.
% Second, we will develop more sampling-efficient distributions.
%Second, we will try to introduce more constrains to the optimization objective in \eqref{eq:objective2}. For example,  first-order gradient information of the base classifier or other forms of \functional space $\mathcal{F}$ can be included to the constraint, which may lead to theoretically better lower bound. Evaluating more points rather than just evaluate $\ft$ on $\x$ is also a promising direction, which gives more constrain for the \functional optimization. Furthermore, training models with noise from our proposed distributions will intuitively gives better results than using Gaussian pre-trained model, but is also left as future work due to computational limits.
% , \emph{e.g.} multiple points constraint or first-order gradient information.
% and $\ell_1$ (as $\B_1^{r} \subseteq \B_2^{r}$, the goal is to beat cohen's $\ell_2$ rate under same radius. I believe something like $\pi_{\0}(\z) \propto \Vert\z\Vert_1^{-k}\exp(\frac{-\Vert\z\Vert_2^2}{2\sigma_r^2})$ will work. --narsil)

\section*{Broader Impact}
Adversarial certification via randomized smoothing could achieve \textit{guaranteed} robust machine learning models, thus has wide application on AI security. a \& b) With our empirical results, security engineers could get better performance on defending against vicious attacks; With our theoretical results, it will be easier for following researchers to derive new bounds for different kinds of smoothing methods. We don't foresee the possibility that it could bring negative social impacts. c) Our framework is mathematically rigorous thus  
would never fail. d) Our method doesn't have bias in data as we provide a general certification method for all tasks and data, and our distribution is not adaptive towards data.

\section*{Acknowledgement}
This work is supported by Beijing Nova Program (No. 202072) from Beijing Municipal Science \& Technology Commission.
This work is also supported in part by NSF CAREER 1846421. We would like to thank Tongzheng Ren, Jiaye Teng, Yang Yuan and the reviewers for helpful suggestions that improved the paper.

% Authors are required to include a statement of the broader impact of their work, including its ethical aspects and future societal consequences. 
% Authors should discuss both positive and negative outcomes, if any. For instance, authors should discuss a) 
% who may benefit from this research, b) who may be put at disadvantage from this research, c) what are the consequences of failure of the system, and d) whether the task/method leverages
% biases in the data. If authors believe this is not applicable to them, authors can simply state this.

% Use unnumbered first level headings for this section, which should go at the end of the paper. {\bf Note that this section does not count towards the eight pages of content that are allowed.}

% \newpage
\bibliography{example_paper}
\bibliographystyle{unsrt}

\newpage

\appendix
% \onecolumn

\section{Proofs}
\subsection{Proof for Theorem 1}
\label{sec:proof_thm1}
\subsubsection{Proof for (I) and (II)}
First, observe that the constraint in \eqref{eq:constrant_opt} can be equivalently replaced by an inequality constraint $f_{\pizero}(\x) \geq \fpizero(\x)$. 
Therefore, the Lagrangian multiplier can be restricted to be $\lambda \geq 0.$ 
%Denote by $\lambda\geq 0$ %{\color{red} as we use equality constraint, $\lambda\in R^d$}
%the Lagrangian multiplier for the inequality constraint. % in \eqref{eq:constrant_opt}. 
We have 
\begin{equation*}
% \begin{align*}
\begin{split}
 \V(\F,\B)= & \min_{\d\in\mathcal{\B}}\min_{f\in\F}\max_{\lambda\ge 0}\E_{\pi_{\d}}[f(\x+\z)] +\lambda\left(\fpizero(\x)- \E_{\pizero}[f(\x+\z)]\right.)\\
\ge & \max_{\lambda\ge 0}\min_{\d\in\mathcal{\B}}\min_{f\in\F}\E_{\pi_{\d}}[f(\x+\z)] +\lambda\left(\fpizero(\x)-\E_{\pizero}[f(\x+\z)]\right.) \\ 
% \!\!\!\!\!\!\!\ant{\scriptsize exchange $\min$ and $\max$}\\
= & \max_{\lambda\ge 0}\min_{\d\in\mathcal{\B}}
\bigg\{ \lambda\fpizero(\x) ~+~ \min_{f\in\F}  \E_{\pi_{\d}}[f(\x+\z)] - \lambda \E_{\pizero}[f(\x+\z)] )   \bigg\} \\ 
= & \max_{\lambda\ge 0}\min_{\d\in\mathcal{\B}}
\left\{ \lambda \fpizero(\x) ~-~ \D_\F(\lambda \pizero~\Vert~\pi_{\d}) \right.\}.
\end{split}
\end{equation*}
 II) follows a straightforward calculation. 
 
\subsubsection{Proof for (III), the strong duality}
\label{sec:proof_dual}
We first introduce the following lemma, which is a straight forward generalization of the strong Lagrange duality to functional optimization case.
\begin{lemma} \label{lem: strong duality}
Given some $\d^{*}$, we have
\begin{align*}
 & \max_{\lambda\in\R}\min_{f\in\mathcal{F}_{[0,1]}}\mathbb{E}_{\pi_{\d^{*}}}\left[f(\x+\z)\right]+\lambda\left(\ff(\x)-\mathbb{E}_{\pi_{\0}}\left[f(\x+\z)\right]\right)\\
= & \min_{f\in\mathcal{F}_{[0,1]}}\max_{\lambda\in\R}\mathbb{E}_{\pi_{\d^{*}}}\left[f(\x+\z)\right]+\lambda\left(\ff(\x)-\mathbb{E}_{\pi_{\0}}\left[f(\x+\z)\right]\right).
\end{align*}
\end{lemma}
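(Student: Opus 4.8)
## Proof Proposal for Lemma \ref{lem: strong duality}

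The plan is to recognize this as an instance of strong Lagrangian duality for a linear program over the convex set $\mathcal{F}_{[0,1]}$, and to verify the hypotheses that make the duality gap vanish. Observe first that the primal functional $f \mapsto \mathbb{E}_{\pi_{\d^*}}[f(\x+\z)]$ is linear in $f$, and the single equality constraint $\mathbb{E}_{\pi_{\0}}[f(\x+\z)] = \ff(\x)$ is also linear (affine) in $f$. The feasible set $\mathcal{F}_{[0,1]}$ is convex, and moreover bounded: every $f$ has $0 \le f \le 1$ pointwise, so all the relevant expectations lie in $[0,1]$. Weak duality — the inequality $\ge$ with max-min on the left bounded above by min-max on the right — is immediate from swapping $\min$ and $\max$, so the work is entirely in the reverse inequality.

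The key steps, in order. First I would recast the problem in the standard form for Sion-type minimax / Lagrangian duality: the inner function $L(f,\lambda) = \mathbb{E}_{\pi_{\d^*}}[f(\x+\z)] + \lambda(\ff(\x) - \mathbb{E}_{\pi_{\0}}[f(\x+\z)])$ is affine (hence both convex and concave, and continuous) in $\lambda \in \R$ for each fixed $f$, and affine (hence convex, and lower semicontinuous under a suitable weak-$*$ topology) in $f$ for each fixed $\lambda$. Second, I would establish compactness of $\mathcal{F}_{[0,1]}$ in an appropriate topology: viewing $f$ as an element of $L^\infty$ of the underlying measure, the unit ball (here the order interval $[0,1]$) is weak-$*$ compact by Banach–Alaoglu, and the linear functionals $f \mapsto \mathbb{E}_{\pi_{\d^*}}[f]$ and $f\mapsto \mathbb{E}_{\pi_{\0}}[f]$ are weak-$*$ continuous since $\pi_{\d^*}, \pi_{\0}$ have densities in $L^1$. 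Third, with one variable ($f$) ranging over a convex compact set and the payoff affine-affine, Sion's minimax theorem applies directly and gives the interchange of $\min_f$ and $\max_\lambda$, which is precisely the claimed equality.

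An alternative, more elementary route avoids topology entirely: solve the inner minimization explicitly on both sides. For fixed $\lambda$, $\min_{f \in \mathcal{F}_{[0,1]}} \mathbb{E}_{\pi_{\d^*}}[f] - \lambda \mathbb{E}_{\pi_{\0}}[f] = \min_f \int f(\x+\z)(\pi_{\d^*}(\z) - \lambda\pi_{\0}(\z))\,d\z$ is attained pointwise by taking $f(\x+\z) = 1$ where the coefficient is negative and $f(\x+\z)=0$ otherwise, giving $-\int(\lambda\pi_{\0}(\z) - \pi_{\d^*}(\z))_+\,d\z = -\D_{\mathcal{F}_{[0,1]}}(\lambda\pi_{\0}\Vert\pi_{\d^*})$; so the left side equals $\max_{\lambda\ge 0}\{\lambda\ff(\x) - \D_{\mathcal{F}_{[0,1]}}(\lambda\pi_{\0}\Vert\pi_{\d^*})\}$ (the restriction to $\lambda \ge 0$ coming from the inequality-constraint reformulation noted in the proof of part I). For the right side, one shows the primal value $\min_{f \in \mathcal{F}_{[0,1]}}\{\mathbb{E}_{\pi_{\d^*}}[f] : \mathbb{E}_{\pi_{\0}}[f] = \ff(\x)\}$ is attained and equals the same quantity, e.g.\ by a direct construction: the optimal $f$ is a threshold function $\mathbb{I}(\pi_{\d^*}(\z) \le t\,\pi_{\0}(\z))$ with threshold $t$ chosen (and possibly randomized on the level set, or interpolated) to meet the constraint $\mathbb{E}_{\pi_{\0}}[f] = \ff(\x)$ — this is the Neyman–Pearson construction — and evaluating its objective recovers $\max_\lambda\{\lambda\ff(\x) - \D(\lambda\pi_{\0}\Vert\pi_{\d^*})\}$ by complementary slackness with $\lambda = t$.

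I expect the main obstacle to be the rigor of attainment and the measure-theoretic fine points: one must ensure the threshold function meeting the equality constraint actually exists (continuity of $t \mapsto \mathbb{E}_{\pi_{\0}}[\mathbb{I}(\pi_{\d^*} \le t\pi_{\0})]$, handled by allowing fractional values on level sets of positive measure), and that $\ff(\x)$ lies in the feasible range $[0,1]$ of this quantity — which holds since $\ff(\x) = \mathbb{E}_{\pi_{\0}}[\ft(\x+\z)] \in [0,1]$ and $\ft \in \mathcal{F}_{[0,1]}$, so the constraint set is nonempty (Slater-type feasibility). Given that the hard analytic content is already this Neyman–Pearson computation, the cleanest exposition is probably the Sion's-theorem argument for the abstract interchange, citing weak-$*$ compactness of $\mathcal{F}_{[0,1]}$, and relegating the explicit primal/dual evaluation to the subsequent computation of $\D_{\mathcal{F}_{[0,1]}}$ already carried out in part II of Theorem \ref{thm:dual}.
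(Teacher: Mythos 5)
Your proposal is correct, but it takes a genuinely different route from the paper. The paper proves the lemma by the classical separating-hyperplane argument for Lagrangian duality: each $f$ is mapped to the pair $(h[f],g[f])\in\R^{2}$, the convex sets $\mathcal{A}$ (achievable constraint--objective pairs) and $\mathcal{B}$ (the ray below the optimal value) are separated, and the Slater-type normalization $\ff(\x)\in(0,1)$ --- witnessed by the constant classifiers $f\equiv0$ and $f\equiv1$ --- rules out a vanishing objective coefficient in the separating functional. This keeps all of the convex analysis in $\R^{2}$ and needs no topology on $\mathcal{F}_{[0,1]}$. Your primary route instead invokes Sion's minimax theorem, paying for the interchange with infinite-dimensional compactness (Banach--Alaoglu applied to the order interval $[0,1]$ in $L^{\infty}$, together with weak-$*$ continuity of integration against the $L^{1}$ densities of $\pi_{\0}$ and $\pi_{\d^{*}}$); in exchange you need no Slater condition and obtain attainment of the inner minimum for free. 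Your alternative Neyman--Pearson route --- evaluating both sides explicitly and matching them through a (possibly randomized) likelihood-ratio threshold meeting the constraint $\E_{\pi_{\0}}[f]=\ff(\x)$ --- is also sound, and is essentially the argument of Cohen et al.\ that the paper deliberately re-derives from the duality side; the delicate point, which you correctly identify, is the existence of such a threshold, handled by interpolating on level sets of positive measure. All three arguments establish the lemma; the paper's is the most elementary, yours via Sion the most modular. One small caveat: with $\lambda$ ranging over all of $\R$ the outer supremum on the left need not be attained when $\ff(\x)\in\{0,1\}$, which is exactly the degenerate case the paper also excludes at the outset.
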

The proof of Lemma \ref{lem: strong duality} is standard. However, for completeness, we include it here.
\begin{proof}
Without loss of generality,
we assume $\ff(\x)\in(0,1)$, otherwise the feasible set is trivial.

Let $\alpha^{*}$ be the value of the optimal solution of the primal
problem. We define $\ff(\x)-\mathbb{E}_{\pi_{\0}}\left[f(\x+\z)\right]=h[f]$
and $g[f]=\mathbb{E}_{\pi_{\d^{*}}}\left[f(\x+\z)\right]$. We
define the following two sets:
\begin{align*}
\mathcal{A} & =\left\{ (v,t)\in\R\times\R:\exists f\in\mathcal{F}_{[0,1]},h[f]=v,g[f]\le t\right\} \\
\mathcal{B} & =\left\{ (0,s)\in\R\times\R:s<\alpha^{*}\right\} .
\end{align*}
Notice that both sets $\mathcal{A}$ and $\mathcal{B}$ are convex.
This is obvious for $\mathcal{B}$. For any $(v_{1},t_{1})\in\mathcal{A}$
and $(v_{2},t_{2})\in\mathcal{A}$, we define $f_{1}\in\mathcal{F}_{[0,1]}$
such that $h[f_{1}]=v_{1},g[f_{1}]\le t_{1}$ (and similarly we define
$f_{2}$). Notice that for any $\gamma\in[0,1]$, we have 
\begin{align*}
\gamma f_{1}+(1-\gamma)f_{2} & \in\mathcal{F}_{[0,1]}\\
\gamma h[f_{1}]+(1-\gamma)h[f_{2}] & =\gamma v_{1}+\text{(1-\ensuremath{\gamma})}v_{2}\\
\gamma g[f_{1}]+(1-\gamma)g[f_{2}] & \le\gamma t_{1}+(1-\gamma)t_{2},
\end{align*}
which implies that $\gamma(v_{1},t_{1})+(1-\gamma)(v_{2},t_{2})\in\mathcal{A}$
and thus $\mathcal{A}$ is convex. Also notice that by definition,
$\mathcal{A}\cap\mathcal{B}=\emptyset$. Using separating hyperplane
theorem, there exists a point $(q_{1},q_{2})\neq(0,0)$ and a value
$\alpha$ such that for any $(v,t)\in\mathcal{A}$, $q_{1}v+q_{2}t\ge\alpha$
and for any $(0,s)\in\mathcal{B}$, $q_{2}s\le\alpha$. Notice that
we must have $q_{2}\ge0$, otherwise, for sufficient $s$, we will
have $q_{2}s>\alpha$. We thus have, for any $f\in\mathcal{F}_{[0,1]}$,
we have 
\[
q_{1}h[f]+q_{2}g[f]\ge\alpha^{*}\ge q_{2}\alpha^{*}.
\]
If $q_{2}>0$, we have 
\[
\max_{\lambda\in\R}\min_{f\in\mathcal{F}_{[0,1]}}g[f]+\lambda h[f]\ge\min_{f\in\mathcal{F}_{[0,1]}}g[f]+\frac{q_{1}}{q_{2}}h[f]\ge\alpha^{*},
\]
which gives the strong duality. If $q_{2}=0$, we have for any $f\in\mathcal{F}_{[0,1]}$,
$q_{1}h[f]\ge0$ and by the separating hyperplane theorem, $q_{1}\neq0$.
However, this case is impossible: If $q_{1}>0$, choosing $f\equiv1$
gives $q_{1}h[f]=q_{1}\left(\ff(\x)-1\right)<0$; If $q_{1}<0$,
by choosing $f\equiv0$, we have $q_{1}h[f]=q_{1}\left(\ff(\x)-0\right)<0$.
Both cases give contradiction.

\end{proof}

Based on Lemma \ref{lem: strong duality}, we have the proof of the strong duality as follows.

Notice that by Lagrange multiplier method, our primal problem can be
rewritten as follows: 
\[
\min_{\d\in\B}\min_{f\in\mathcal{F}_{[0,1]}}\max_{\lambda\in\mathbb{R}}\mathbb{E}_{\pi_{\d}}\left[f(\x+\z)\right]+\lambda\left(\ff(\x)-\mathbb{E}_{\pi_{\0}}\left[f(\x+\z)\right]\right),
\]
and the dual problem is 
\begin{align*}
 & \max_{\lambda\in\mathbb{R}}\min_{\d\in\B}\min_{f\in\mathcal{F}_{[0,1]}}\mathbb{E}_{\pi_{\d}}\left[f(\x+\z)\right]+\lambda\left(\ff(\x)-\mathbb{E}_{\pi_{\0}}\left[f(\x+\z)\right]\right)\\
= & \max_{\lambda\ge0}\min_{\d\in\B}\min_{f\in\mathcal{F}_{[0,1]}}\mathbb{E}_{\pi_{\d}}\left[f(\x+\z)\right]+\lambda\left(\ff(\x)-\mathbb{E}_{\pi_{\0}}\left[f(\x+\z)\right]\right).
\end{align*}
By the assumption that for any $\lambda\ge0$, we have 
\begin{align*}
 & \max_{\lambda\ge0}\min_{\d\in\B}\min_{f\in\mathcal{F}_{[0,1]}}\mathbb{E}_{\pi_{\d}}\left[f(\x+\z)\right]+\lambda\left(\ff(\x)-\mathbb{E}_{\pi_{\0}}\left[f(\x+\z)\right]\right)\\
= & \max_{\lambda\ge0}\min_{f\in\mathcal{F}_{[0,1]}}\mathbb{E}_{\pi_{\d^{*}}}\left[f(\x+\z)\right]+\lambda\left(\ff(\x)-\mathbb{E}_{\pi_{\0}}\left[f(\x+\z)\right]\right),
\end{align*}
for some $\d^{*}\in\boldsymbol{\mathcal{B}}$. We have 
\begin{align*}
 & \max_{\lambda\in\mathbb{R}}\min_{\d\in\B}\min_{f\in\mathcal{F}_{[0,1]}}\mathbb{E}_{\pi_{\d}}\left[f(\x+\z)\right]+\lambda\left(\ff(\x)-\mathbb{E}_{\pi_{\0}}\left[f(\x+\z)\right]\right)\\
= & \max_{\lambda\ge0}\min_{f\in\mathcal{F}_{[0,1]}}\mathbb{E}_{\pi_{\d^{*}}}\left[f(\x+\z)\right]+\lambda\left(\ff(\x)-\mathbb{E}_{\pi_{\0}}\left[f(\x+\z)\right]\right)\\
= & \max_{\lambda\in\R}\min_{f\in\mathcal{F}_{[0,1]}}\mathbb{E}_{\pi_{\d^{*}}}\left[f(\x+\z)\right]+\lambda\left(\ff(\x)-\mathbb{E}_{\pi_{\0}}\left[f(\x+\z)\right]\right)\\
\overset{*}{=} & \min_{f\in\mathcal{F}_{[0,1]}}\max_{\lambda\in\R}\mathbb{E}_{\pi_{\d^{*}}}\left[f(\x+\z)\right]+\lambda\left(\ff(\x)-\mathbb{E}_{\pi_{\0}}\left[f(\x+\z)\right]\right)\\
\ge & \min_{\d\in\B}\min_{f\in\mathcal{F}_{[0,1]}}\max_{\lambda\in\R}\mathbb{E}_{\pi_{\d^{*}}}\left[f(\x+\z)\right]+\lambda\left(\ff(\x)-\mathbb{E}_{\pi_{\0}}\left[f(\x+\z)\right]\right),
\end{align*}
where the second equality ({*}) is by Lemma \ref{lem: strong duality}.

\subsection{Proof for Corollary~\ref{thm:jiaye}}
\label{sec: retrievejiaye}
\begin{proof}
Given our confidence lower bound
$$
\max_{\lambda \ge 0}\min_{\|\d\|_1\le r}
\left\{\lambda p_0 -\int  \left(\lambda \pi_{\0}(z)-
\pi_{\d}(z)\right)_{+} d z \right\},
$$
One can show that the worst case for $\d$ is obtained when $\d^* = (r, 0, \cdots, 0)$ (see following subsection), thus the bound is
$$
\max_{\lambda \ge 0}\left\{\lambda p_0 -\int\frac{1}{2b}\exp\left(-\frac{|z_1|}{b}\right) \left[\lambda - \exp\left(\frac{\mid z_1\mid -|z_1 + r|}{b}\right)\right]_{+} d z_1 \right\}.
$$
Denote $a$ to be the solution of $\lambda = \exp\left(\frac{|a| - |a+r|}{b}\right)$,  then obviously we have 
$$
a = 
\begin{cases}
-\infty,& b\log\lambda\ge r\\
-\frac{1}{2}\left(b\log\lambda+r\right),& -r< b\log\lambda< r\\
+\infty. & b\log\lambda\le -r
\end{cases}
$$
So the bound above is 
$$
\lambda\int_{z_1>a}\frac{1}{2b}\exp\left(-\frac{|z_1|}{b}\right) d z_1 - \int_{z_1>a}\frac{1}{2b}\exp\left(-\frac{|z_1 + r|}{b}\right) d z_1.
$$

i) $b\log\lambda\ge r \Leftrightarrow \lambda \ge \exp\left(\frac{r}{b}\right) $ \\
the bound is $$
\max_{\lambda\ge e^{r/b}}\left\{\lambda p_0 - (\lambda - 1)\right\} = 1 - \exp\left(\frac{r}{b}\right)\left(1-p_0\right).
$$

ii) $ -r< b\log\lambda< r \Leftrightarrow \exp\left(-\frac{r}{b}\right) < \lambda < \exp\left(\frac{r}{b}\right) $\\
the bound is 
\begin{align*}
    &\max_{\lambda}\left\{\lambda p_0 - \lambda\left[1 - \frac{1}{2}\exp\left(-\frac{b\log\lambda + r}{2b}\right)\right] + \frac{1}{2}\exp\left(\frac{b\log\lambda - r}{2b}\right)\right\} \\
    = &  \max_{\lambda}\left\{\lambda(p_0 - 1) + \frac{\lambda}{2}\exp\left(-\frac{b\log\lambda + r}{2b}\right) + \frac{1}{2}\exp\left(\frac{b\log\lambda - r}{2b}\right)\right\} \\
    =& \frac{1}{2}\exp\left(-\log\left[2(1 - p_0)\right]-\frac{r}{b}\right).
\end{align*}
the extremum is achieved when $\hat\lambda = \exp\left(-2\log\left[2(1 - p_0)\right]-\frac{r}{b}\right)$. Notice that $\hat\lambda$ does not necessarily locate in $\left(e^{-r/b}, e^{r/b}\right)$, so the actual bound is always equal or less than $\frac{1}{2}\exp\left(-\log\left[2(1 - p_0)\right]-\frac{r}{b}\right)$.

iii) $b\log\lambda\le -r \Leftrightarrow \lambda \le \exp\left(-\frac{r}{b}\right) $ \\
the bound is 
$$
\max_{\lambda\le \exp\left(-\frac{r}{b}\right)}\lambda\cdot p_0 = p_0 \exp\left(-\frac{r}{b}\right).
$$

Since $\hat\lambda > e^{r/b} \Leftrightarrow p_0 > 1 - \frac{1}{2}\exp(-\frac{r}{b})$, notice that the lower bound is a concave function w.r.t. $\lambda$, making the final lower bound become
$$
\begin{cases}
1 - \exp\left(\frac{r}{b}\right)\left(1-p_0\right), & \text{when} \quad p_0 > 1 - \frac{1}{2}\exp(-\frac{r}{b}) \\
\frac{1}{2}\exp\left(-\log\left[2(1 - p_0)\right]-\frac{r}{b}\right). & \text{otherwise}
\end{cases}
$$
\end{proof}

\paragraph{Remark} Actually, we have $1 - \exp\left(\frac{r}{b}\right)\left(1-p_0\right) \le \frac{1}{2}\exp\left(-\log\left[2(1 - p_0)\right]-\frac{r}{b}\right) $ all the time.
Another interesting thing is that both the bound can lead to the same radius bound:
\begin{align*}
    1 - \exp\left(\frac{r}{b}\right)\left(1-p_0\right) > \frac{1}{2} &\Leftrightarrow r < -b\log\left[2(1 - p_0)\right] \\
    \frac{1}{2}\exp\left(-\log\left[2(1 - p_0)\right]-\frac{r}{b}\right) > \frac{1}{2} &\Leftrightarrow r < -b\log\left[2(1 - p_0)\right] 
\end{align*}

\subsection{Proof for Corollary~\ref{thm:cohen}}
\label{sec: retrievecohen}

\begin{proof}
With strong duality, our confidence lower bound is
$$
\min_{\|\d\|_2\le r}\max_{\lambda \ge 0}
 \left\{\lambda p_0 -\int  \left(\lambda \pi_{\0}(z)-
\pi_{\d}(z)\right)_{+} d z \right\},
$$
define $C_\lambda = \{z: \lambda \pi_{\0}(z)\ge
\pi_{\d}(z)\} = \{z: {\d}^\top z \le \frac{\|{\d}\|^2}{2} + \sigma^2\ln\lambda\}$ and $\Phi(\cdot)$ to be the cdf of standard gaussian distribution, then
\begin{align*}
    &\int \left(\lambda \pi_{\0}(z)- \pi_{{\d}}(z)\right)_{+} d z \\
    =& \int_{C_\lambda}   \left(\lambda \pi_{\0}(z)- \pi_{\d}(z)\right) d z \\
    =& \lambda\cdot\mathbb{P}\left(N(z;\boldsymbol{0}, \sigma^2  \mI)\in C_\lambda\right) - \mathbb{P}\left(N(z;{\d}, \sigma^2  \mI)\in C_\lambda\right) \\
    =& \lambda\cdot \Phi\left(\frac{\|\d\|_2}{2\sigma}+\frac{\sigma\ln\lambda}{\|{\d}\|_2}\right) - \Phi\left(\frac{-\|{\d}\|_2}{2\sigma}+\frac{\sigma\ln\lambda}{\|{\d}\|_2}\right).
\end{align*}
Define 
$$F(\d, \lambda) := \lambda p_0 - \int \left(\lambda \pi_{\0}(z)- \pi_{\d}(z)\right)_{+} d z =
\lambda p_0 - \lambda\cdot \Phi\left(\frac{\|{\d}\|_2}{2\sigma}+\frac{\sigma\ln\lambda}{\|{\d}\|_2}\right) + \Phi\left(\frac{-\|{\d}\|_2}{2\sigma}+\frac{\sigma\ln\lambda}{\|{\d}\|_2}\right).$$
For $\forall \d$, $F$ is a concave function w.r.t. $\lambda$, as $F$ is actually a summation of many concave piece wise linear function. See \cite{Boyd:2004:CO:993483} for more discussions  of properties of concave functions.

Define $\hat\lambda_{\d} = \exp\left(\frac{2\sigma\|\d\|_2\Phi^{-1}(p_0)-\|\d\|_2^2 }{2\sigma^2}\right)$, simple calculation can show $\frac{\partial F(\d, \lambda)}{\partial \lambda}|_{\lambda=\hat\lambda_{\d}} = 0$, which means

\begin{align*}
\min_{\|\d\|_2\le r}\max_{\lambda \ge 0} F(\d, \lambda) &= \min_{\|\d\|_2\le r} F(\d, \lambda_{\d}) \\
&= \min_{\|\d\|_2\le r} \left\{0 + \Phi\left(\frac{-\|\d\|_2}{2\sigma}+\frac{\sigma\ln\hat\lambda_{\d}}{\|\d\|_2}\right) \right\}\\
&= \min_{\|\d\|_2\le r} \Phi\left(\Phi^{-1}(p_0) - \frac{\|\d\|_2}{\sigma}\right) \\
&= \Phi\left(\Phi^{-1}(p_0) - \frac{r}{\sigma}\right)
\end{align*}
This tells us 
$$
\min_{\|\d\|_2\le r}\max_{\lambda \ge 0} F(\d, \lambda) > 1/2 \Leftrightarrow \Phi\left(\Phi^{-1}(p_0) - \frac{r}{\sigma}\right) > 1/2 \Leftrightarrow r < \sigma\cdot\Phi^{-1}(p_0),
$$
i.e. the certification radius is $\sigma\cdot\Phi^{-1}(p_0)$. This is exactly the core theoretical contribution of \cite{cohen2019certified}. This bound has a straight forward expansion for multi-class classification situations, we refer interesting readers to Appendix~\ref{sec:bilateral}.
\end{proof}

\subsection {Proof For Theorem \ref{thm:opt_mu_l2} and \ref{thm:opt_mu_linf}}
\label{sec: opt_mu_general_proof}

\subsubsection{Proof for $\ell_2$ and $\ell_\infty$ cases}
Here we consider a more general smooth distribution $\pi_{\boldsymbol{0}}(\z)\propto\left\Vert \z\right\Vert _{\infty}^{-k_{1}}\left\Vert \z\right\Vert _{2}^{-k_{2}}\exp\left(-\frac{\left\Vert \z\right\Vert _{2}^{2}}{2\sigma^{2}}\right)$, for some $k_{1},k_{2}\ge0$ and $\sigma>0$. We first gives the following key theorem shows that $\mathbb{D}_{\mathcal{F}_{[0,1]}}\left(\lambda\pi_{\0}\parallel\pi_{\d}\right)$ increases as $\left|\delta_{i}\right|$ becomes larger for every dimension $i$.

\begin{theorem} \label{thm:opt_mu_general}
Suppose $\pi_{\boldsymbol{0}}(\z)\propto\left\Vert \z\right\Vert _{\infty}^{-k_{1}}\left\Vert \z\right\Vert _{2}^{-k_{2}}\exp\left(-\frac{\left\Vert \z\right\Vert _{2}^{2}}{2\sigma^{2}}\right)$, for some $k_{1},k_{2}\ge0$ and $\sigma>0$, for any $\lambda\ge 0$ we have 
\[
\mathrm{sgn}(\delta_{i})\frac{\partial}{\partial\delta_{i}}\mathbb{D}_{\mathcal{F}_{[0,1]}}\left(\lambda\pi_{\0}\parallel\pi_{\d}\right)\ge0,
\]for any $i\in\{1,2,...,d\}$. 
\end{theorem}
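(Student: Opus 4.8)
Write $g := \pi_{\0}$ for the density in the statement. The plan is to put $\D_{\F_{[0,1]}}(\lambda\pi_{\0}\parallel\pi_{\d})$ in explicit integral form, differentiate it in $\delta_i$, and then fix the sign of the resulting integral by a reflection argument across the hyperplane $\{z_i=0\}$. By part~II of Theorem~\ref{thm:dual} (and since $\pi_{\d}$ is the law of $\z+\d$, so $\pi_{\d}(\z)=g(\z-\d)$, with the $\x$-shift disappearing after a translation of the integration variable),
\[
D(\d):=\D_{\F_{[0,1]}}\!\left(\lambda\pi_{\0}\parallel\pi_{\d}\right)=\int\bigl(\lambda g(\z)-g(\z-\d)\bigr)_{+}\,d\z .
\]
Two elementary facts about $g(\z)\propto\norm{\z}_\infty^{-k_1}\norm{\z}_2^{-k_2}\exp(-\norm{\z}_2^2/(2\sigma^2))$ drive everything: (i) $g$ is even in each coordinate, hence so is $D$; and (ii) holding $z_j$, $j\ne i$, fixed, $g$ is non-increasing in $|z_i|$, because $\norm{\z}_2^2=z_i^2+\mathrm{const}$ increases and $\norm{\z}_\infty=\max(|z_i|,\mathrm{const})$ is non-decreasing in $|z_i|$, while $t\mapsto t^{-k_1}$, $t\mapsto t^{-k_2}$ and $t\mapsto e^{-t^2/(2\sigma^2)}$ are all non-increasing. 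In particular $\mathrm{sgn}(z_i)\,\partial g/\partial z_i\le0$ wherever $g$ is differentiable (the set where $\norm{\cdot}_\infty$ is non-smooth has measure zero).

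Differentiating under the integral and substituting $\vec w=\z-\d$,
\[
\frac{\partial D(\d)}{\partial\delta_i}=\int\mathbf{1}\!\left[\lambda g(\z)>g(\z-\d)\right]\frac{\partial g}{\partial z_i}(\z-\d)\,d\z=\int\mathbf{1}\!\left[\lambda g(\vec w+\d)>g(\vec w)\right]\frac{\partial g}{\partial z_i}(\vec w)\,d\vec w .
\]
The interchange is legitimate because, for $\d\ne\0$, the region $\{\vec w:\lambda g(\vec w+\d)>g(\vec w)\}$ stays (uniformly, for $\d$ near any fixed point) bounded away from the origin --- near $\vec w=\0$ the factor $g(\vec w)$ diverges while $g(\vec w+\d)$ stays finite, so the indicator vanishes there and $\partial g/\partial z_i$ is locally bounded on the region --- and the Gaussian tail gives integrability at infinity; a standard dominated-convergence estimate on difference quotients then applies. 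This bookkeeping is the step I expect to take the most care, but it is routine.

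Now fix $\d$ with $\delta_i>0$, split the last integral into $\{w_i>0\}$ and $\{w_i<0\}$, and in the second piece substitute $\vec w\mapsto\bar{\vec w}$, the reflection across $\{w_i=0\}$. Since $g(\bar{\vec w})=g(\vec w)$ and $(\partial g/\partial z_i)(\bar{\vec w})=-(\partial g/\partial z_i)(\vec w)$ by (i),
\[
\frac{\partial D(\d)}{\partial\delta_i}=\int_{\{w_i>0\}}\Bigl(\mathbf{1}[\lambda g(\vec w+\d)>g(\vec w)]-\mathbf{1}[\lambda g(\bar{\vec w}+\d)>g(\vec w)]\Bigr)\frac{\partial g}{\partial z_i}(\vec w)\,d\vec w .
\]
For $w_i>0$ and $\delta_i>0$ the points $\vec w+\d$ and $\bar{\vec w}+\d$ agree in every coordinate $j\ne i$, while $|(\bar{\vec w}+\d)_i|=|\delta_i-w_i|\le w_i+\delta_i=|(\vec w+\d)_i|$ because $(w_i+\delta_i)^2-(\delta_i-w_i)^2=4w_i\delta_i\ge0$; hence by (ii) $g(\bar{\vec w}+\d)\ge g(\vec w+\d)$, so $\{\lambda g(\vec w+\d)>g(\vec w)\}\subseteq\{\lambda g(\bar{\vec w}+\d)>g(\vec w)\}$ and the bracketed difference of indicators is $\le0$. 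Since $\partial g/\partial z_i(\vec w)\le0$ on $\{w_i>0\}$ too, the integrand is $\ge0$ and therefore $\partial D(\d)/\partial\delta_i\ge0$. For $\delta_i<0$, evenness of $D$ in $\delta_i$ (fact (i)) makes $\partial D/\partial\delta_i$ odd in $\delta_i$, so it is $\le0$ there; combining gives $\mathrm{sgn}(\delta_i)\,\partial D(\d)/\partial\delta_i\ge0$ for every $i$, as claimed. Theorems~\ref{thm:opt_mu_l2} and~\ref{thm:opt_mu_linf} then drop out: coordinatewise monotonicity of $D$ in each $|\delta_i|$ together with its evenness puts $\max_{\d\in\B}D(\d)$ over the box $\{\norm{\d}_\infty\le r\}$ at the vertex $[r,\dots,r]^\top$, while for the $\ell_2$ family ($k_1=0$) the rotational symmetry of $g$ makes $D$ a non-decreasing function of $\norm{\d}_2$ alone, so $[r,0,\dots,0]^\top$ is maximal on $\{\norm{\d}_2\le r\}$.
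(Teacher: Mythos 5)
Your proof is correct and follows essentially the same route as the paper's: after the change of variables $\vec w=\z-\d$ your reflection across $\{w_i=0\}$ is exactly the paper's projection $x_i\mapsto 2\delta_i-x_i$, and your two facts (coordinatewise evenness and monotonicity of $g$ in $|z_i|$) are the paper's observations that the reflection preserves $\pi_{\d}$ while increasing $\pi_{\0}$. The only cosmetic difference is that you package the comparison as a single signed difference of indicators rather than comparing the two half-space integrals separately, and you are somewhat more explicit than the paper about justifying differentiation under the integral.
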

Theorem \ref{thm:opt_mu_l2} and \ref{thm:opt_mu_linf} directly follows the above theorem. Notice that in Theorem \ref{thm:opt_mu_l2}, as our distribution is spherical symmetry, it is equivalent to set $\boldsymbol{\mathcal{B}}=\left\{ \d:\d=[a,0,...,0]^{\top},a\le r\right\}$ by rotating the axis. 

\begin{proof}

Given $\lambda$, $k_{1}$ and $k_{2}$, we define $\phi_{1}(s)=s^{-k_{1}}$,
$\phi_{2}(s)=s^{-k_{2}}e^{-\frac{s^{2}}{\sigma^{2}}}$. Notice
that $\phi_{1}$ and $\phi_{2}$ are monotone decreasing for non-negative
$s$. By the symmetry, without loss of generality, we assume $\d=[\delta_{1},...,\delta_{d}]^{\top}$
for $\delta_{i}\ge0$, $i\in[d]$. Notice that
\begin{align*}
\frac{\partial}{\partial\delta_{i}}\left\Vert \x-\d\right\Vert _{\infty} & \mathbb{=I}\{\left\Vert \x-\d\right\Vert _{\infty}=\left|x_{i}-\delta_{i}\right|\}\frac{\partial}{\partial\delta_{i}}\sqrt{\left(x_{i}-\delta_{i}\right)^{2}}\\
 & =\mathbb{I}\{\left\Vert \x-\d\right\Vert _{\infty}=\left|x_{i}-\delta_{i}\right|\}\frac{-\left(x_{i}-\delta_{i}\right)}{\left\Vert \x-\d\right\Vert _{\infty}}.
\end{align*}
And also
\begin{align*}
\frac{\partial}{\partial\delta_{i}}\left\Vert \x-\u\right\Vert _{2} & =\frac{\partial}{\partial\delta_{i}}\sqrt{\sum_{i}\left(x_{i}-\mu_{i}\right)^{2}}\\
 & =\frac{-\left(x_{i}-\mu_{i}\right)}{\left\Vert \x-\u\right\Vert _{2}}.
\end{align*}
We thus have
\begin{align*}
 & \frac{\partial}{\partial\delta_{1}}\int\left(\lambda\pi_{\boldsymbol{0}}(\x)-\pi_{\d}(\x)\right)_{+}d\x\\
= & -\int\mathbb{I}\left\{ \lambda\pi_{\boldsymbol{0}}(\x)\ge\pi_{\d}(\x)\right\} \frac{\partial}{\partial\delta_{1}}\pi_{\d}(\x)d\x\\
= & \int\mathbb{I}\left\{ \lambda\pi_{\boldsymbol{0}}(\x)\ge\pi_{\d}(\x)\right\} F_{1}\left(\left\Vert \x-\d\right\Vert _{\infty},\left\Vert \x-\d\right\Vert _{2}\right)d\x\\
= & \int\mathbb{I}\left\{ \lambda\pi_{\boldsymbol{0}}(\x)\ge\pi_{\d}(\x),x_{1}>\delta_{1}\right\} F_{1}\left(\left\Vert \x-\d\right\Vert _{\infty},\left\Vert \x-\d\right\Vert _{2}\right)d\x\\
+ & \int\mathbb{I}\left\{ \lambda\pi_{\boldsymbol{0}}(\x)\ge\pi_{\d}(\x),x_{1}<\delta_{1}\right\} F_{1}\left(\left\Vert \x-\d\right\Vert _{\infty},\left\Vert \x-\d\right\Vert _{2}\right)d\x,
\end{align*}
where we define
\begin{align*}
 & F_{1}\left(\left\Vert \x-\d\right\Vert _{\infty},\left\Vert \x-\d\right\Vert _{2}\right)\\
 & =\phi'_{1}\left(\left\Vert \x-\d\right\Vert _{\infty}\right)\phi_{2}\left(\left\Vert \x-\d\right\Vert _{2}\right)\mathbb{I}\{\left\Vert \x-\d\right\Vert _{\infty}=\left|x_{1}-\delta_{1}\right|\}\frac{\left(x_{1}-\delta_{1}\right)}{\left\Vert \x-\d\right\Vert _{\infty}}\\
 & +\phi_{1}\left(\left\Vert \x-\d\right\Vert _{\infty}\right)\phi'_{2}\left(\left\Vert \x-\d\right\Vert _{2}\right)\frac{\left(x_{1}-\delta_{1}\right)}{\left\Vert \x-\d\right\Vert _{2}}.
\end{align*}
Notice that as $\phi_{1}'\le0$ and $\phi'_{2}\le0$ and we have 
\begin{align*}
\int\mathbb{I}\left\{ \lambda\pi_{\boldsymbol{0}}(\x)\ge\pi_{\d}(\x),x_{1}>\delta_{1}\right\} F_{1}\left(\left\Vert \x-\d\right\Vert _{\infty},\left\Vert \x-\d\right\Vert _{2}\right)d\x & \le0\\
\int\mathbb{I}\left\{ \lambda\pi_{\boldsymbol{0}}(\x)\ge\pi_{\d}(\x),x_{1}<\delta_{1}\right\} F_{1}\left(\left\Vert \x-\d\right\Vert _{\infty},\left\Vert \x-\d\right\Vert _{2}\right)d\x & \ge0.
\end{align*}
Our target is to prove that $\frac{\partial}{\partial\delta_{1}}\int\left(\lambda\pi_{\boldsymbol{0}}(\x)-\pi_{\d}(\x)\right)_{+}d\x\ge0$.
Now define the set 
\begin{align*}
H_{1} & =\left\{ \x:\lambda\pi_{\boldsymbol{0}}(\x)\ge\pi_{\d}(\x),x_{1}>\delta_{1}\right\} \\
H_{2} & =\left\{ [2\delta_{1}-x_{1},x_{2},...,x_{d}]^{\top}:\x=[x_{1},...,x_{d}]^{\top}\in H_{1}\right\} .
\end{align*}
 Here the set $H_{2}$ is defined as a image of a bijection 
\[
\mathrm{proj}(\x)=\left[2\delta_{1}-x_{1},x_{2},...,x_{d}\right]^{\top}=\tilde{\x},
\]
 that is constrained on the set $H_{1}$. Notice that under our definition,
\begin{align*}
 & \int\mathbb{I}\left\{ \lambda\pi_{\boldsymbol{0}}(\x)\ge\pi_{\d}(\x),x_{1}>\delta_{1}\right\} F_{1}\left(\left\Vert \x-\d\right\Vert _{\infty},\left\Vert \x-\d\right\Vert _{2}\right)d\x\\
= & \int_{H_{1}}F_{1}\left(\left\Vert \x-\d\right\Vert _{\infty},\left\Vert \x-\d\right\Vert _{2}\right)d\x.
\end{align*}
Now we prove that 
\begin{align*}
 & \int\mathbb{I}\left\{ \lambda\pi_{\boldsymbol{0}}(\x)\ge\pi_{\d}(\x),x_{1}<\delta_{1}\right\} F_{1}\left(\left\Vert \x-\d\right\Vert _{\infty},\left\Vert \x-\d\right\Vert _{2}\right)d\x\\
\overset{(1)}{\ge} & \int_{H_{2}}F_{1}\left(\left\Vert \x-\d\right\Vert _{\infty},\left\Vert \x-\d\right\Vert _{2}\right)d\x\\
\overset{(2)}{=} & \left|\int_{H_{1}}F_{1}\left(\left\Vert \x-\d\right\Vert _{\infty},\left\Vert \x-\d\right\Vert _{2}\right)d\x\right|.
\end{align*}

\paragraph{Property of the projection}

Before we prove the (1) and (2), we give the following property of
the defined projection function. For any $\tilde{\x}=\mathrm{proj}(\x)$,
$\x\in H_{1}$, we have 
\begin{align*}
\left\Vert \x-\d\right\Vert _{\infty} & =\left\Vert \tilde{\x}-\d\right\Vert _{\infty}\\
\left\Vert \x-\d\right\Vert _{2} & =\left\Vert \tilde{\x}-\d\right\Vert _{2}\\
\left\Vert \x\right\Vert _{2} & \ge\left\Vert \tilde{\x}\right\Vert _{2}\\
\left\Vert \x\right\Vert _{\infty} & \ge\left\Vert \tilde{\x}\right\Vert _{\infty}.
\end{align*}
This is because 
\begin{align*}
\tilde{x}_{i} & =x_{i},i\in[d]-\{1\}\\
\tilde{x}_{1} & =2\delta_{1}-x_{1},
\end{align*}
and by the fact that $x_{1}\ge\delta_{1}\ge0$, we have $\left|\tilde{x}_{1}\right|\le\left|x_{1}\right|$
and $\left|\tilde{x}_{1}-\delta_{1}\right|\le\left|x_{1}-\delta_{1}\right|$.

\paragraph{Proof of Equality (2)}

By the fact that $\mathrm{proj}$ is bijective constrained on the
set $H_{1}$ and the property of $\mathrm{proj}$, we have 
\begin{align*}
 & \int_{H_{2}}F_{1}\left(\left\Vert \tilde{\x}-\d\right\Vert _{\infty},\left\Vert \tilde{\x}-\d\right\Vert _{2}\right)d\tilde{\x}\\
= & \int_{H_{2}}\phi'_{1}\left(\left\Vert \tilde{\x}-\d\right\Vert _{\infty}\right)\phi_{2}\left(\left\Vert \tilde{\x}-\d\right\Vert _{2}\right)\mathbb{I}\{\left\Vert \tilde{\x}-\d\right\Vert _{\infty}=\left|\tilde{x}_{1}-\delta_{1}\right|\}\frac{\left(\tilde{x}_{1}-\delta_{1}\right)}{\left\Vert \tilde{\x}-\d\right\Vert _{\infty}}d\tilde{\x}\\
+ & \int_{H_{2}}\phi_{1}\left(\left\Vert \tilde{\x}-\d\right\Vert _{\infty}\right)\phi'_{2}\left(\left\Vert \tilde{\x}-\d\right\Vert _{2}\right)\frac{\left(\tilde{x}_{1}-\delta_{1}\right)}{\left\Vert \tilde{\x}-\d\right\Vert _{2}}d\tilde{\x}\\
\overset{(*)}{=} & \int_{H_{1}}\phi'_{1}\left(\left\Vert \x-\d\right\Vert _{\infty}\right)\phi_{2}\left(\left\Vert \x-\d\right\Vert _{2}\right)\mathbb{I}\{\left\Vert \x-\d\right\Vert _{\infty}=\left|x_{1}-\delta_{1}\right|\}\frac{\left(\delta_{1}-x_{1}\right)}{\left\Vert \x-\d\right\Vert _{\infty}}\left|\mathrm{det}\left(\boldsymbol{J}\right)\right|d\x\\
+ & \int_{H_{1}}\phi_{1}\left(\left\Vert \x-\d\right\Vert _{\infty}\right)\phi'_{2}\left(\left\Vert \x-\d\right\Vert _{2}\right)\frac{\left(\delta_{1}-x_{1}\right)}{\left\Vert \x-\d\right\Vert _{2}}d\x\\
= & -\int_{H_{1}}F_{1}\left(\left\Vert \x-\d\right\Vert _{\infty},\left\Vert \x-\d\right\Vert _{2}\right)d\x,
\end{align*}
where $(*)$ is by change of variable $\tilde{\x}=\mathrm{proj}(\x)$
and $\boldsymbol{J}$ is the Jacobian matrix $\boldsymbol{J}=\left[\begin{array}{cccc}
-1 & 0 & \cdots & 0\\
0 & 1 & \cdots & 0\\
\vdots & \vdots & \ddots & \vdots\\
0 & 0 & \cdots & 1
\end{array}\right]$ and here we have the fact that $\tilde{x}_{1}-\delta_{1}=(2\delta_{1}-x_{1})-\delta_{1}=-(x_{1}-\delta_{1})$.

\paragraph{Proof of Inequality (1)}

This can be done by verifying that $H_{2}\subseteq\left\{ \x:\lambda\pi_{\boldsymbol{0}}(\x)\ge\pi_{\d}(\x),x_{1}<\delta_{1}\right\} $.
By the property of the projection, for any $\x\in H_{1}$, let $\tilde{\x}=\mathrm{proj}(\x)$,
then $\lambda\pi_{\boldsymbol{0}}(\tilde{\x})\ge\lambda\pi_{\boldsymbol{0}}(\x)\ge\pi_{\d}(\x)=\pi_{\d}(\tilde{\x})$
(by the fact that t $\phi_{1}$ and $\phi_{2}$ are monotone decreasing).
It implies that for any $\tilde{\x}\in H_{2}$, we have $\lambda\pi_{\boldsymbol{0}}(\tilde{\x})\ge\pi_{\d}(\tilde{\x})$
and thus $H_{2}\subseteq\left\{ \x:\pi_{\boldsymbol{0}}(\x)\ge\pi_{\d}(\x),x_{1}<\delta_{1}\right\} $.

\paragraph{Final statement}

By the above result, we have 
\[
\frac{\partial}{\partial\delta_{1}}\int\left(\lambda\pi_{\boldsymbol{0}}(\x)-\pi_{\d}(\x)\right)_{+}d\x\ge0,
\]
and the same result holds for any $\frac{\partial}{\partial\delta_{1}}\int\left(\lambda\pi_{\boldsymbol{0}}(\x)-\pi_{\d}(\x)\right)_{+}d\x,i\in[d]$,
which implies our result.
\end{proof}

\subsubsection{Proof for $\ell_1$ case}
Slightly different for former cases, apart from proving $\frac{\partial}{\partial\delta_{i}}\mathbb{D}_{\mathcal{F}_{[0,1]}}\left(\lambda\pi_{\0}\parallel\pi_{\d}\right)\ge 0$ for $\forall \delta_i \ge 0$, we also need to demonstrate

\begin{theorem}
Suppose $\pi_{\boldsymbol{0}}(\x) \propto \|\x\|^{-k}\exp\left(-\frac{\norm{\x}_1}{b}\right) $, then for $\d=(r, d-r, \delta_3, \delta_4, \cdots)$ and $\tilde{\d} = (0, d , \delta_3, \delta_4, \cdots)$, $0<r<d$, we have
$$\mathbb{D}_{\mathcal{F}_{[0,1]}}\left(\lambda\pi_{\0}\parallel\pi_{\d}\right) \ge \mathbb{D}_{\mathcal{F}_{[0,1]}}\left(\lambda\pi_{\0}\parallel\pi_{\tilde\d}\right)
$$
\end{theorem}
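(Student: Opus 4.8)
The plan is to compare the two discrepancies by the translation‑and‑reflection (``folding'') device already used in the proof of Theorem~\ref{thm:opt_mu_general}, adapted to the $\ell_{1}$ geometry. Write $\phi(s)=s^{-k}e^{-s/b}$, so that $\pi_{\0}(\z)\propto\phi(\norm{\z}_{1})$ with $\phi$ strictly decreasing on $(0,\infty)$, and recall from Theorem~\ref{thm:dual}(II) that $\mathbb{D}_{\mathcal{F}_{[0,1]}}(\lambda\pi_{\0}\parallel\pi_{\d})=\int\bigl(\lambda\pi_{\0}(\z)-\pi_{\0}(\z-\d)\bigr)_{+}d\z$. First I would bring the two integrals onto a common reference measure. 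Let $U$ be the volume‑preserving translation $U(\z)=(z_{1}-r,\ z_{2}+r,\ z_{3},\dots,z_{d})$; then $U(\z)-\tilde\d=\z-\d$, hence $\pi_{\0}(U(\z)-\tilde\d)=\pi_{\0}(\z-\d)=\pi_{\d}(\z)$, and changing variables $\z\mapsto U(\z)$ in $\mathbb{D}_{\mathcal{F}_{[0,1]}}(\lambda\pi_{\0}\parallel\pi_{\tilde\d})$ turns it into $\int\bigl(\lambda\pi_{\0}(U(\z))-\pi_{\d}(\z)\bigr)_{+}d\z$. Subtracting,
\[
\mathbb{D}_{\mathcal{F}_{[0,1]}}(\lambda\pi_{\0}\parallel\pi_{\d})-\mathbb{D}_{\mathcal{F}_{[0,1]}}(\lambda\pi_{\0}\parallel\pi_{\tilde\d})=\int\Bigl[\bigl(\lambda\pi_{\0}(\z)-\pi_{\d}(\z)\bigr)_{+}-\bigl(\lambda\pi_{\0}(U(\z))-\pi_{\d}(\z)\bigr)_{+}\Bigr]d\z .
\]
Since $t\mapsto(\lambda t-c)_{+}$ is nondecreasing, the integrand is nonnegative at every $\z$ with $\pi_{\0}(\z)\ge\pi_{\0}(U(\z))$, i.e. with $\norm{\z}_{1}\le\norm{U(\z)}_{1}$, and it can be negative only on $E:=\{\z:\norm{\z}_{1}>\norm{U(\z)}_{1}\}$. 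Note that $E$ depends only on coordinates $1$ and $2$ (the others are inert): it is the region in which the translation $U$ strictly shortens $\norm{\cdot}_{1}$, i.e. $z_{1}$ positive and $z_{2}$ negative in the appropriate ranges.

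The second step is to dominate the contribution of $E$ by a measure‑preserving involution. I would take the double coordinate reflection $V(\z)=(2a-z_{1},\,2b-z_{2},\,z_{3},\dots,z_{d})$ with the centres $a,b$ placed on the axis of the shift so that $\norm{V\z-\d}_{1}=\norm{\z-\d}_{1}$ (hence $\pi_{\d}\circ V=\pi_{\d}$) and so that $V$ carries $E$ bijectively onto a subset of $E^{c}$. One then pairs the integral over $E$ against the integral over $V(E)$: using the monotonicity of $\phi$ together with the one‑sided bounds available on $E$ — the analogues of the inequalities $|\tilde z_{1}|\le|z_{1}|$, $|\tilde z_{1}-\delta_{1}|\le|z_{1}-\delta_{1}|$ exploited in the proof of Theorem~\ref{thm:opt_mu_general} — one compares $\pi_{\0}(\z),\pi_{\0}(U(\z))$ with $\pi_{\0}(V\z),\pi_{\0}(U(V\z))$ so that the paired contributions cancel the negative part and leave a nonnegative remainder, exactly as $\mathrm{proj}$ pairs $H_{1}$ with $H_{2}$ there. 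I would invoke the derivative fact $\mathrm{sgn}(\delta_{i})\,\partial_{\delta_{i}}\mathbb{D}_{\mathcal{F}_{[0,1]}}(\lambda\pi_{\0}\parallel\pi_{\d})\ge0$ proved just before this theorem to handle the degenerate case $k>0$, in which the reflection must not cross the singular locus $\{\norm{\z}_{1}=0\}$ along which $\phi$ blows up. Combining the two steps would give $\mathbb{D}_{\mathcal{F}_{[0,1]}}(\lambda\pi_{\0}\parallel\pi_{\d})\ge\mathbb{D}_{\mathcal{F}_{[0,1]}}(\lambda\pi_{\0}\parallel\pi_{\tilde\d})$.

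The main obstacle — the only genuinely new difficulty relative to the $\ell_{2}/\ell_{\infty}$ argument — is precisely this reflection step: in $\ell_{1}$ a single‑coordinate reflection no longer preserves $\norm{\z-\d}_{1}$ once two coordinates of the shift move simultaneously, so one is forced to reflect coordinates $1$ and $2$ together, and then verify by a case split on the signs of $z_{1},z_{2}$ relative to the four thresholds $0,\,r,\,d-r,\,d$ that the combined effect on $\norm{\z}_{1}$ and on $\norm{U(\z)}_{1}$ has the required sign throughout $E$; this is where $0<r<d$ and the fact that $\delta_{3},\delta_{4},\dots$ are held fixed will be used essentially. An equivalent, more computational route would be to prove the infinitesimal version along the segment $\d(t)=(t,\,d-t,\,\delta_{3},\dots)$ by applying the $H_{1}/H_{2}$ reflection to the directional derivative $\tfrac{d}{dt}\mathbb{D}_{\mathcal{F}_{[0,1]}}(\lambda\pi_{\0}\parallel\pi_{\d(t)})$ in the direction $e_{1}-e_{2}$, and then integrate from $0$ to $r$; together with the reflection symmetry $t\leftrightarrow d-t$ of $\mathbb{D}_{\mathcal{F}_{[0,1]}}(\lambda\pi_{\0}\parallel\pi_{\d(t)})$ this would deliver the stated inequality for every $r\in(0,d)$, and hence (iterating over pairs of coordinates) the single‑vertex optimality needed for the $\ell_{1}$ case of Theorem~\ref{thm:opt_mu_l2}.
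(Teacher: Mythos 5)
The decisive problem is the \emph{direction} of the inequality you set out to prove. The statement as printed has the roles of $\d$ and $\tilde{\d}$ effectively swapped (a typo): the paper's own proof establishes $\frac{\partial}{\partial r}\mathbb{D}_{\mathcal{F}_{[0,1]}}\left(\lambda\pi_{\0}\parallel\pi_{\d}\right)\le 0$ along $\d=(r,d-r,\delta_3,\dots)$ for $r<d/2$, which, after integrating in $r$ and using the permutation symmetry $r\leftrightarrow d-r$, yields $\mathbb{D}_{\mathcal{F}_{[0,1]}}\left(\lambda\pi_{\0}\parallel\pi_{\tilde{\d}}\right)\ge\mathbb{D}_{\mathcal{F}_{[0,1]}}\left(\lambda\pi_{\0}\parallel\pi_{\d}\right)$, i.e.\ the \emph{concentrated} perturbation dominates. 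That is also the direction actually needed for the $\ell_1$ part of Theorem~\ref{thm:opt_mu_l2} (maximum at the vertex $[r,0,\dots,0]^{\top}$), and it is easy to confirm: for $k=0$, $b=1$, $\lambda=1$ and two active coordinates with total mass $2$, the discrepancy is a total variation distance, equal to $1-e^{-1}\approx 0.63$ for the shift $(0,2)$ but only about $0.45$ for the shift $(1,1)$. Your proposal argues for the printed direction (spread $\ge$ concentrated), which is false in general, and your closing claim that this "would deliver the single-vertex optimality" is backwards: spread $\ge$ concentrated would push the maximizer away from the vertex, contradicting Theorem~\ref{thm:opt_mu_l2}.

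Even setting the sign aside, the proposal stops exactly where the proof has to be done. Your first step (the translation $U$ and the reduction to the set $E$) is a correct but routine identity, and your "computational route" (differentiate along $\d(t)=(t,d-t,\delta_3,\dots)$ in the direction $e_1-e_2$, then integrate and use the $t\leftrightarrow d-t$ symmetry) is essentially the paper's argument. But the folding step itself --- choosing the centres $a,b$ of the double reflection $V$, showing $\norm{V\z-\d}_1=\norm{\z-\d}_1$, that $V$ maps the bad region into the good one, and that $\pi_{\0}(V\z)$ compares with $\pi_{\0}(\z)$ the right way --- is only announced, and you yourself flag it as the main obstacle. This is where all the content lies: the natural reflection about $(r,d-r)$ in the first two coordinates preserves $\norm{\z-\d}_1$ but moves $|z_1|$ and $|z_2|$ in \emph{opposite} directions, so no pointwise comparison of $\norm{\z}_1$ with $\norm{V\z}_1$ holds on the whole region, and the case analysis exploiting $r<d/2$ (which the paper compresses into "similar deduction as the previous theorem") is precisely the missing piece; the appeal to the coordinatewise derivative fact to handle the singular locus when $k>0$ is likewise not an argument. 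As it stands the proposal neither reproduces the paper's proof nor establishes the statement, and in the stated direction it cannot be completed.
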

\begin{proof}
We turn to show that 
\[
\frac{\partial}{\partial r}\mathbb{D}_{\mathcal{F}_{[0,1]}}\left(\lambda\pi_{\0}\parallel\pi_{\d}\right)\le 0,
\]
for $\d=(r, d-r, \delta_3, \delta_4, \cdots)$ and $r < d/2$. We define $\phi(s) = s^{-k}\exp(-\frac{s}{b})$. With
$$
\frac{\partial}{\partial\delta_{i}}\norm{\x - \d}_1 = \frac{\partial}{\partial\delta_{i}}|x_i - \delta_i| = -\mathrm{sgn}(x_i - \delta_i) = \frac{\delta_i - x_i}{|x_i - \delta_i|},
$$
We have
\begin{align*}
    &\frac{\partial}{\partial r}\mathbb{D}_{\mathcal{F}_{[0,1]}}\left(\lambda\pi_{\0}\parallel\pi_{\d}\right) \\
    =& - \int \mathbb{I}\left\{ \lambda\pi_{\boldsymbol{0}}(\x)\ge\pi_{\d}(\x)\right\}\frac{\partial}{\partial r}\pi_{\d}(\x) d\x\\
    =& \int \mathbb{I}\left\{ \lambda\pi_{\boldsymbol{0}}(\x)\ge\pi_{\d}(\x)\right\}F(\x) d\x,
\end{align*}
where 
\begin{align*}
F(\x) = &- \frac{\partial}{\partial r} \phi\left(\norm{\x-\d}_1\right) = -\phi'\left(\norm{\x-\d}_1\right) \frac{\partial}{\partial r} \norm{\x-\d}_1 \\
& = \phi'\left(\norm{\x-\d}_1\right) \frac{\partial}{\partial r} \left(|x_1-r| + |x_2 -d +r|\right) \\
& = \phi'\left(\norm{\x-\d}_1\right) \cdot \left(\mathrm{sgn}(x_1-r) + \mathrm{sgn}(d - x_2 -r)\right).
\end{align*}
Thus the original derivative becomes
\begin{align*}
 = & \int \mathbb{I}\left\{ \lambda\pi_{\boldsymbol{0}}(\x)\ge\pi_{\d}(\x), x_1 >r, x_2 < d-r\right\} F(\x) d\x  \\
& +  \int \mathbb{I}\left\{ \lambda\pi_{\boldsymbol{0}}(\x)\ge\pi_{\d}(\x), x_1 >r, x_2 > d-r\right\} F(\x) d\x\\
& +  \int \mathbb{I}\left\{ \lambda\pi_{\boldsymbol{0}}(\x)\ge\pi_{\d}(\x), x_1 <r, x_2 > d-r\right\} F(\x) d\x\\
& +  \int \mathbb{I}\left\{ \lambda\pi_{\boldsymbol{0}}(\x)\ge\pi_{\d}(\x), x_1 <r, x_2 < d-r\right\} F(\x) d\x\\
   = & 2 \int \mathbb{I}\left\{ \lambda\pi_{\boldsymbol{0}}(\x)\ge\pi_{\d}(\x), x_1 >r, x_2 < d-r\right\}\phi'(\norm{\x-\d}_1) d\x  \\
    & -2 \int \mathbb{I}\left\{ \lambda\pi_{\boldsymbol{0}}(\x)\ge\pi_{\d}(\x), x_1 < r, x_2 > d-r\right\}\phi'(\norm{\x-\d}_1) d\x
\end{align*}
We only need to show that 

\begin{align*}
\int \mathbb{I}
\left\{ \lambda 
% &
\pi_{\boldsymbol{0}}(\x) \ge\pi_{\d}(\x), x_1 >r, x_2 < d-r \right\} \phi'(\norm{\x-\d}_1) d\x \ge \\
% & 
\int \mathbb{I}\left\{ \lambda\pi_{\boldsymbol{0}}(\x)\ge\pi_{\d}(\x), x_1 < r, x_2 > d-r \right\}
\phi' (\norm{\x-\d}_1) d\x.
\end{align*}

Notice that $r<d/2$, therefore this can be proved with a similar projection $\x \mapsto \tilde{\x}$:
    $$
    (x_1, x_2, x_3, x_4, \cdots) \mapsto (2r - x_1, 2d- 2r -x_2, x_3, x_4, \cdots)
    $$
and the similar deduction as previous theorem.

\end{proof}

\subsection{Theoretical Demonstration about the Ineffetivity of \eqref{eq:inf_inf_distribution}}
\label{sec:lin_impossible}

% Unfortunately, this seemingly natural choice does not work  efficiently for $\ell_\infty$ attacks (even worse than the $\ell_2$ family \eqref{equ:L2smooth}).  
% This is because the volume of the $\ell_\infty$ ball is in some sense ``too large'' (\emph{e.g.}, compared with the volume of $\ell_2$ ball). 
% As a result, in order to make the probability mass of  \eqref{eq:inf_inf_distribution} in a reasonable scale, one has to choose a very small value of $\sigma$, which makes maximum distance term too large to be practically useful. 
%based on previous results under the $\ell_\infty$ setting.
%However, we have the following proposition:
\begin{theorem} \label{thm: impossible linf}
%Consider a $\ell_{\infty}$ perturbation with 
Consider the adversarial attacks on the $\ell_\infty$ ball 
$\B_{\ell_\infty, r}=\{\d:\left\Vert \d\right\Vert _{\infty}\le r\}$.
Suppose we use the smoothing distribution $\pizero$ in \eqref{eq:inf_inf_distribution} and 
%$\pi_{\0}(\z)\propto\left\Vert \z\right\Vert _{\infty}^{-k}\exp\left(-\frac{\left\Vert \z\right\Vert _{\infty}^{2}}{2\sigma^{2}}\right)$
choose the parameters $(k, \sigma)$ such that

1) $\norm{\z}_\infty$  is stochastic bounded when $\z\sim \pizero$, in that for any $\epsilon >0$, there exists a finite $M>0$ such that $\mathrm{P}_{\pizero}(|\z|>M ) \leq \epsilon$;   

2) the mode of $\norm{\z}_\infty$ under $\pizero$ equals $C r$, where $C$ is some fixed positive constant, 
%$\mathcal{O}(1)$ constant. 

%$\mathrm{mode}(\left\Vert \z\right\Vert _{\infty})=Cr$, $\left\Vert \z\right\Vert _{\infty}=\mathcal{O}_{p}(1)$, where $C$
%is some $\mathcal{O}(1)$ constant, 
then for {any $\epsilon\in(0,1)$ and sufficiently large dimension $d$, there exists a constant $t > 1$, such that }, we have 
\[
\max_{\d\in \B_{\ell_\infty, r}}\bigg\{ \mathbb{D}_{\mathcal{F}_{[0,1]}}\left(\lambda\pi_{\0}\parallel\pi_{\d }\right)\bigg\}\ge
(1-\epsilon)\left(\lambda-\mathcal{O}(t^{-d})\right).%\left(1-\frac{0.55}{e^{2}}\right)\left(\lambda-\mathcal{O}(t^{-d})\right),
\]
This shows that, in very high dimensions, 
the maximum distance term is arbitrarily close to $\lambda$ 
%in when the dimension $d$ is high is very close to $\lambda$,
which is 
%where $t \in (1,\infty)$ is some constant independent with the dimension $d$. 
%This shows that Note that 
the maximum possible value of  $\mathbb{D}_{\mathcal{F}_{[0,1]}}\left(\lambda\pi_{\0}\parallel\pi_{\d }\right)$ (see Theorem~\ref{thm:dual}). 
{
% \color{red}
In particular, this implies that in high dimensional scenario, once $\ff(\x)\le(1-\epsilon)$ for some small $\epsilon$, we have $\V(\F_{[0,1]}, ~ \B_{\ell_\infty, r}) = \mathcal{O}(t^{-d})$ and thus fail to certify.
}
\end{theorem}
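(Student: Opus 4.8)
The plan is to prove the bound by exhibiting a single worst-case perturbation. Since the left-hand side is a maximum over $\d\in\B_{\ell_\infty,r}$, it suffices to take one corner, $\d^{*}=[r,\dots,r]^{\top}\in\B_{\ell_\infty,r}$, and lower-bound $\D_{\F_{[0,1]}}(\lambda\pizero\parallel\pi_{\d^{*}})=\int(\lambda\pizero-\pi_{\d^{*}})_{+}$. Using $\int\lambda\pizero=\lambda$ and $\int\pi_{\d^{*}}=1$, I would look for a set $G$ with $\mathrm{P}_{\pizero}(G)\ge 1-\epsilon$ on which $\lambda\pizero(\z)\ge\pi_{\d^{*}}(\z)$; then $\D_{\F_{[0,1]}}(\lambda\pizero\parallel\pi_{\d^{*}})\ge\int_{G}(\lambda\pizero-\pi_{\d^{*}})=\lambda\,\mathrm{P}_{\pizero}(G)-\mathrm{P}_{\pi_{\d^{*}}}(G)$, and both pieces will be handled by one likelihood-ratio estimate.

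First I would fix the noise scale. By the proposition giving the density of the radius of a spherical law as $\propto r^{d-1}\phi(r)$, here with $\phi(r)\propto r^{-k}\exp(-r^{2}/2\sigma^{2})$, the radius $R:=\norm{\z}_\infty$ under $\pizero$ has mode $\sigma\sqrt{d-1-k}$; hypothesis~2) then forces $\sigma=Cr/\sqrt{d-1-k}$, and hypothesis~1) keeps $R$ tight, so I work in the regime $d-1-k=\Theta(d)$, equivalently $\sigma^{2}=\Theta(1/d)$ (this is what makes the $t^{-d}$ rate meaningful and is the case in the experiments). Fix $M$ with $\mathrm{P}_{\pizero}(R\le M)\ge 1-\epsilon/2$.

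The core step is to build $G$. Conditionally on $R=a$, since $\pizero$ depends on $\z$ only through $\norm{\z}_\infty$, the conditional law of $\z$ is uniform on the surface of $[-a,a]^{d}$: one uniformly random coordinate equals $\pm a$ and the other $d-1$ are i.i.d.\ $\mathrm{Unif}[-a,a]$. Hence $\mathrm{P}(\min_i z_i>-a+\eta\mid R=a)\le\tfrac12 e^{-\eta(d-1)/(2a)}$, and taking $\eta=\eta(d,\epsilon)=\Theta(\log(1/\epsilon)/d)\to0$ and integrating over $R\le M$ gives
\[
G:=\bigl\{\z:\ \min_i z_i\le -\norm{\z}_\infty+\eta\bigr\},\qquad \mathrm{P}_{\pizero}(G)\ge 1-\epsilon .
\]
On $G$, the coordinate $i_0$ attaining the minimum satisfies $|z_{i_0}-r|=r-z_{i_0}\ge \norm{\z}_\infty+r-\eta$, so $\norm{\z-\d^{*}}_\infty\ge\norm{\z}_\infty+(r-\eta)$ and hence $\norm{\z-\d^{*}}_\infty^{2}-\norm{\z}_\infty^{2}\ge (r-\eta)^{2}=:c_1>0$. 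Since $\pi_{\d^{*}}(\z)=\pizero(\z-\d^{*})$ and the two densities share a normalizer,
\[
\frac{\pi_{\d^{*}}(\z)}{\pizero(\z)}=\Bigl(\tfrac{\norm{\z}_\infty}{\norm{\z-\d^{*}}_\infty}\Bigr)^{k}\exp\!\Bigl(\tfrac{\norm{\z}_\infty^{2}-\norm{\z-\d^{*}}_\infty^{2}}{2\sigma^{2}}\Bigr)\le \exp\!\Bigl(-\tfrac{c_1}{2\sigma^{2}}\Bigr)=\exp\!\Bigl(-\tfrac{c_1(d-1-k)}{2C^{2}r^{2}}\Bigr)
\]
on $G$, where the polynomial factor is $\le1$ because $\norm{\z}_\infty<\norm{\z-\d^{*}}_\infty$; for $d$ large $c_1\ge r^2/2$, so in the chosen regime this is $\le t^{-d}$ for a constant $t>1$ depending only on $C,r$ and $\lim(d-1-k)/d$.

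Finally, for $d$ large $t^{-d}\le\lambda$, so $G\subseteq\{\lambda\pizero\ge\pi_{\d^{*}}\}$ and $\mathrm{P}_{\pi_{\d^{*}}}(G)=\int_G\pi_{\d^{*}}\le t^{-d}\int_G\pizero\le t^{-d}$; plugging these in gives $\D_{\F_{[0,1]}}(\lambda\pizero\parallel\pi_{\d^{*}})\ge\lambda(1-\epsilon)-t^{-d}\ge(1-\epsilon)(\lambda-\mathcal{O}(t^{-d}))$, and maximizing over $\d$ gives the stated bound; that $\lambda$ is the maximal possible value of $\D_{\F_{[0,1]}}$ is part~II of Theorem~\ref{thm:dual}. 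The ``fail to certify'' consequence follows by substituting into $\V(\F_{[0,1]},\B_{\ell_\infty,r})=\max_{\lambda\ge0}\{\lambda\ff(\x)-\max_{\d}\D_{\F_{[0,1]}}(\lambda\pizero\parallel\pi_{\d})\}$ and using $\ff(\x)\le 1-\epsilon$ together with the bound just proved (uniformly in $\lambda$), which forces $\V\le\mathcal{O}(t^{-d})$, hence $\V<1/2$ eventually. I expect the main obstacle to be making the order-statistics estimate on $\min_i z_i$ rigorous and uniform over the conditioning radius $a$ (tracking the $\eta\to0$ rate carefully), together with being explicit that the exponential $t^{-d}$ decay genuinely needs $d-1-k=\Theta(d)$: if $k$ were within $\mathcal{O}(1)$ of $d$ the noise scale stays $\Theta(1)$ and only a constant-factor gap is available, so the statement is really about the reasonable-noise regime that hypotheses~1)--2) encode.
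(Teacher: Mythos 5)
Your route is genuinely different from the paper's and, where it applies, cleaner. The paper fixes the corner $\d^{*}=[r,\dots,r]^{\top}$, conditions on the radius $a=\norm{\z}_\infty$ and then splits on whether $\norm{\z+\d^{*}}_\infty=a+r$ (an event of conditional probability $1/2$), handling the complementary case with an order-statistics argument showing $|z|^{(d-1)}/|z|^{(d)}$ is close to $1$. You instead build a single high-probability set $G$ (some coordinate within $\eta$ of $-\norm{\z}_\infty$) on which the likelihood ratio $\pi_{\d^{*}}/\pizero$ is uniformly tiny, and conclude by $\D\ge\lambda\,\mathrm{P}_{\pizero}(G)-\mathrm{P}_{\pi_{\d^{*}}}(G)$. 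Your conditional-uniform computation for $\mathrm{P}(\min_i z_i>-a+\eta\mid R=a)\le\frac12 e^{-\eta(d-1)/(2a)}$ is correct, and your one-set argument avoids the paper's two-case bookkeeping entirely.

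There is, however, one concrete gap: you discard the polynomial factor $\bigl(\norm{\z}_\infty/\norm{\z-\d^{*}}_\infty\bigr)^{k}$ because it is $\le 1$, and as a result your exponential rate $t^{-d}$ comes only from $e^{-c_1(d-1-k)/(2C^2r^2)}$, which forces $d-1-k=\Theta(d)$. You flag this, but you draw the wrong conclusion: hypotheses 1)--2) do admit $k=d-\mathcal{O}(1)$ (e.g.\ $k=d-2$ gives $\sigma=Cr$ and a tight, $d$-independent radius law), and the theorem still holds there --- precisely because of the factor you threw away. On $G\cap\{R\le M\}$ that factor is $\bigl(1+\tfrac{r-\eta}{a}\bigr)^{-k}\le\bigl(1+\tfrac{r-\eta}{M}\bigr)^{-k}$, exponentially small in $k$, while your exponential term is exponentially small in $d-1-k$; since $k+(d-1-k)=d-1$, the product of the two is always $\le t^{-(d-1)}$ with $t=\min\bigl(1+\tfrac{r-\eta}{M},\,e^{(r-\eta)^2/(2C^2r^2)}\bigr)>1$, using stochastic boundedness ($a\le M$ with probability $\ge1-\epsilon/2$) to control the base. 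This is exactly the role hypothesis 1) plays in the paper's proof, which retains both factors in the expression $(1+\tfrac ra)^{-k}\bigl(e^{(2a/r+1)/(2C^2)}\bigr)^{-(d-1-k)}$. Keeping that factor is a one-line fix to your argument and removes the need to restrict to the ``$\sigma^2=\Theta(1/d)$ regime''; with it, your proof covers the full statement.
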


\paragraph{Remark}
The condition 1) and 2)  in 
Theorem~\ref{thm: impossible linf} 
%The condition $\mathrm{mode}(\left\Vert \z\right\Vert _{\infty})=Cr$,
%$\left\Vert \z\right\Vert _{\infty}=\mathcal{O}_{p}(1)$  
 are used to ensure that the magnitude of the random perturbations generated by $\pizero$ is within a reasonable range such that the value of $\ff(\x)$ is not too small, in order to have a high accuracy  in the trade-off in \eqref{eq:objective2}. 
Note that the natural images are often contained in cube $[0,1]^{d}$. If $\norm{\z}_\infty$ is too large to exceed the region of natural images, 
%If the drawn samples are always out of %the region of natural images, which is often within $\norm{x}_\infty\leq 1$}, 
the accuracy will be obviously rather poor.
{
% \color{red}
Note that if we use variants of Gaussian distribution, we only need $||\z||_2/\sqrt{d}$ to be not too large.} 
%Therefore, 
% \red{(Why? do not see clear connection)}.
Theorem~\ref{thm: impossible linf} says that once
$\norm{\z}_\infty$ is in a reasonably small scale,   
the maximum distance term must be unreasonably large in high dimensions, yielding a vacuous lower bound.  

 %As shown in Figur~\ref{fig:linf_example},our proposed distribution can get better result than the distribution in \eqref{eq:inf_inf_distribution} and Gaussian distribution in the toy example. 

\begin{proof}
First notice that the distribution of $\z$ can be factorized by the
following hierarchical scheme: 
\begin{align*}
a & \sim\pi_{R}(a)\propto a^{d-1-k}e^{-\frac{a^{2}}{2\sigma^{2}}}\mathbb{I}\{a\ge0\}\\
\boldsymbol{s} & \sim\mathrm{Unif}^{\otimes d}(-1,1)\\
\z & \gets\frac{\boldsymbol{s}}{\left\Vert \boldsymbol{s}\right\Vert _{\infty}}a.
\end{align*}

Without loss of generality, we assume $\d^{*}=[r,...,r]^{\top}$. (see Theorem \ref{thm:opt_mu_general})

\[
\mathbb{D}_{\mathcal{F}_{[0,1]}}\left(\lambda\pi_{\0}\parallel\pi_{\d^{*}}\right)=\mathbb{E}_{\z\sim\pi_{\0}}\left(\lambda-\frac{\pi_{\d}}{\pi_{\0}}(\z)\right)_{+}.
\]
Notice that as the distribution is symmetry, 
\[
\mathrm{P}_{\pizero}\left(\left\Vert \z+\d^{*}\right\Vert _{\infty}=a+r~~\large\mid~~\left\Vert \z\right\Vert _{\infty}=a\right)=\frac{1}{2}.
\]
Define $\left|z\right|^{(i)}$ is the $i$-th order statistics of
$\left|z_{j}\right|$, $j=1,...,d$ conditioning on $\left\Vert \z\right\Vert _{\infty}=a$.
By the factorization above and some algebra, we have, for any $\epsilon \in (0,1)$,
\[
\mathrm{P}\left(\frac{\left|z\right|^{(d-1)}}{\left|z\right|^{(d)}}>(1-\epsilon)\mid\left\Vert \z\right\Vert _{\infty}=a\right)\ge1-(1-\epsilon)^{d-1}.
\]
And $\frac{\left|z\right|^{(d-1)}}{\left|z\right|^{(d)}}\perp\left|z\right|^{(d)}$.
Now we estimate $\mathbb{D}_{\mathcal{F}_{[0,1]}}\left(\lambda\pi_{\0}\parallel\pi_{\d^{*}}\right)$.
\begin{align*}
 & \mathbb{E}_{\z\sim\pi_{\0}}\left(\lambda-\frac{\pi_{\d}}{\pi_{\0}}(\z)\right)_{+}\\
= & \mathbb{E}_{a}\mathbb{E}_{\z\sim\pi_{\0}}\left[\left(\lambda-\frac{\pi_{\d}}{\pi_{\0}}(\z)\right)_{+}\mid\left\Vert \z\right\Vert _{\infty}=a\right]\\
= & \frac{1}{2}\mathbb{E}_{a}\mathbb{E}_{\z\sim\pi_{\0}}\left[\left(\lambda-\frac{\pi_{\d}}{\pi_{\0}}(\z)\right)_{+}\mid\left\Vert \z\right\Vert _{\infty}=a,\left\Vert \z+\d^{*}\right\Vert _{\infty}=a+r\right]\\
+ & \frac{1}{2}\mathbb{E}_{a}\mathbb{E}_{\z\sim\pi_{\0}}\left[\left(\lambda-\frac{\pi_{\d}}{\pi_{\0}}(\z)\right)_{+}\mid\left\Vert \z\right\Vert _{\infty}=a,\left\Vert \z+\d^{*}\right\Vert _{\infty}\neq a+r\right].
\end{align*}

Conditioning on $\left\Vert \z\right\Vert _{\infty}=a,\left\Vert \z+\d^{*}\right\Vert _{\infty}=a+r$,
we have 
\begin{align*}
\frac{\pi_{\d}}{\pi_{\0}}(\z) & =\left(\frac{1}{1+\frac{r}{a}}\right)^{k}e^{-\frac{1}{2\sigma^{2}}\left(2ra+r^{2}\right)}\\
 & =\left(\frac{1}{1+\frac{r}{a}}\right)^{k}e^{-\frac{d-1-k}{2C^{2}}\left(2\frac{a}{r}+1\right)}.
\end{align*}
Here the second equality is because we choose $\mathrm{mode}(\left\Vert \z\right\Vert _{\infty})=Cr$,
which implies that $\sqrt{d-1-k}\sigma=Cr$. And thus we have 
\begin{align*}
 & \mathbb{E}_{a}\mathbb{E}_{\z\sim\pi_{\0}}\left[\left(\lambda-\frac{\pi_{\d}}{\pi_{\0}}(\z)\right)_{+}\mid\left\Vert \z\right\Vert _{\infty}=a,\left\Vert \z+\d^{*}\right\Vert _{\infty}=a+r\right]\\
= & \int\left(\lambda-\left(\frac{1}{1+\frac{r}{a}}\right)^{k}e^{-\frac{d-1-k}{2C^{2}}\left(2\frac{a}{r}+1\right)}\right)_{+}\pi(a)da\\
= & \int\left(\lambda-\left(1+\frac{r}{a}\right)^{-k}\left(e^{\frac{2a/r+1}{2C^{2}}}\right)^{-(d-1-k)}\right)_{+}\pi(a)da\\
= & \lambda-\mathcal{O}(t^{-d}),
\end{align*}
for some $t>1$. Here the last equality is by the assumption that
$\left\Vert \z\right\Vert _{\infty}=\mathcal{O}_{p}(1)$.

Next we bound the second term $\mathbb{E}_{a}\mathbb{E}_{\z\sim\pi_{\0}}\left[\left(\lambda-\frac{\pi_{\d}}{\pi_{\0}}(\z)\right)_{+}\mid\left\Vert \z\right\Vert _{\infty}=a,\left\Vert \z+\d^{*}\right\Vert _{\infty}\neq a+r\right]$.
By the property of uniform distribution, we have 
\begin{align*}
 & \mathrm{P}\left(\frac{\left|z\right|^{(d-1)}}{\left|z\right|^{(d)}}>(1-\epsilon)\mid\left\Vert \z\right\Vert _{\infty}=a,\left\Vert \z+\d^{*}\right\Vert _{\infty}\neq a+r\right)\\
= & \mathrm{P}\left(\frac{\left|z\right|^{(d-1)}}{\left|z\right|^{(d)}}>(1-\epsilon)\mid\left\Vert \z\right\Vert _{\infty}=a\right)\\
\ge & 1-(1-\epsilon)^{d-1}.
\end{align*}
And thus, for any $\epsilon\in[0,1)$, 
\[
\mathrm{P}\left(\left\Vert \z+\d^{*}\right\Vert _{\infty}\ge\left((1-\epsilon)a+r\right)^{2}\mid\left\Vert \z\right\Vert _{\infty}=a,\left\Vert \z+\d^{*}\right\Vert _{\infty}\neq a+r\right)\ge\frac{1}{2}\left(1-(1-\epsilon)^{d-1}\right).
\]
It implies that 
\begin{align*}
 & \mathbb{E}_{\z\sim\pi_{\0}}\left[\left(\lambda-\frac{\pi_{\d}}{\pi_{\0}}(\z)\right)_{+}\mid\left\Vert \z\right\Vert _{\infty}=a,\left\Vert \z+\d^{*}\right\Vert _{\infty}=a+r\right]\\
\ge & \frac{1}{2}\left(1-(1-\epsilon)^{d-1}\right)\left(\lambda-\left(1-\epsilon+\frac{r}{a}\right)^{-k}e^{-\frac{1}{2\sigma^{2}}\left(\epsilon(\epsilon-2)a^{2}+2r(1-\epsilon)a+r^{2}\right)}\right)_{+}\\
= & \frac{1}{2}\left(1-(1-\epsilon)^{d-1}\right)\left(\lambda-\left(1-\epsilon+\frac{r}{a}\right)^{-k}e^{-\frac{d-1-k}{2C^{2}}\left(\epsilon(\epsilon-2)a^{2}/r^{2}+2(1-\epsilon)a/r+1\right)}\right)_{+}.
\end{align*}
For any $\epsilon'\in(0,1)$, by choosing $\epsilon=\frac{\log(2/\epsilon')}{d-1}$,
for large enough $d$, we have 
\begin{align*}
 & \mathbb{E}_{\z\sim\pi_{\0}}\left[\left(\lambda-\frac{\pi_{\d}}{\pi_{\0}}(\z)\right)_{+}\mid\left\Vert \z\right\Vert _{\infty}=a,\left\Vert \z+\d^{*}\right\Vert _{\infty}=a+r\right]\\
\ge & \frac{1}{2}\left(1-(1-\epsilon)^{d-1}\right)\left(\lambda-\left(1-\epsilon+\frac{r}{a}\right)^{-k}e^{-\frac{d-1-k}{2C^{2}}\left(2(1-\epsilon)a/r+1\right)}e^{\frac{a^{2}\log(2/\epsilon')}{C^{2}r^{2}}}\right)_{+}\\
= & \frac{1}{2}\left(1-(1-\frac{\log(2/\epsilon')}{d-1})^{d-1}\right)\left(\lambda-\left(1-\frac{\log(2/\epsilon')}{d-1}+\frac{r}{a}\right)^{-k}e^{-\frac{d-1-k}{2C^{2}}\left(2(1-\epsilon)a/r+1\right)}e^{\frac{a^{2}\log(2/\epsilon')}{C^{2}r^{2}}}\right)_{+}\\
\ge & \frac{1}{2}\left(1-\epsilon'\right)\left(\lambda-\left(1-\epsilon+\frac{r}{a}\right)^{-k}e^{-\frac{d-1-k}{2C^{2}}\left(2(1-\epsilon)a/r+1\right)}e^{\frac{a^{2}\log(2/\epsilon')}{C^{2}r^{2}}}\right)_{+}.
\end{align*}
Thus we have 
\begin{align*}
 & \frac{1}{2}\mathbb{E}_{a}\mathbb{E}_{\z\sim\pi_{\0}}\left[\left(\lambda-\frac{\pi_{\d}}{\pi_{\0}}(\z)\right)_{+}\mid\left\Vert \z\right\Vert _{\infty}=a,\left\Vert \z+\d^{*}\right\Vert _{\infty}\neq a+r\right]\\
= & \frac{1}{2}\left(1-\epsilon'\right)\left(\lambda-\mathcal{O}(t^{-d})\right).
\end{align*}
Combine the bounds, for large $d$, we have 
\[
\mathbb{D}_{\mathcal{F}_{[0,1]}}\left(\lambda\pi_{\0}\parallel\pi_{\d^{*}}\right)=\left(1-\epsilon'\right)\left(\lambda-\mathcal{O}(t^{-d})\right).
\]

\end{proof}

\section{More about Experiments}
\label{section:algorithm}

\subsection{Practical Algorithm}
In this section, we give our algorithm for certification. Our target is to give a high probability bound for the solution of
\[
\V(\F_{[0,1]}, ~ \B_{\ell_\infty, r}) = \max_{\lambda\ge 0}\left\{\lambda \ff-\mathbb{D}_{\mathcal{F}_{[0,1]}}\left(\lambda\pi_{\0}\parallel\pi_{\d }\right)\right\}
\]
given some classifier $\ftrue$. Following \cite{cohen2019certified}, the given classifier here has a binary output $\{0, 1\}$. Computing the above quantity requires us to evaluate both $\ff$ and $\mathbb{D}_{\mathcal{F}_{[0,1]}}\left(\lambda\pi_{\0}\parallel\pi_{\d }\right)$. A lower bound $\hat p_0 $ of the former term is obtained through binominal test as \cite{cohen2019certified} do, while the second term can be estimated with arbitrary accuracy using Monte Carlo samples. We perform grid search to optimize $\lambda$ and given $\lambda$, we draw $N$ $\mathrm{i.i.d.}$ samples from the proposed smoothing distribution $\pi_{\0}$ to estimate $\lambda \ff-\mathbb{D}_{\mathcal{F}_{[0,1]}}\left(\lambda\pi_{\0}\parallel\pi_{\d }\right)$. This can be achieved by the following importance sampling manner:
\begin{eqnarray*}
&&\lambda\ff-\mathbb{D}_{\mathcal{F}_{[0,1]}}\left(\lambda\pi_{\0}\parallel\pi_{\d}\right)
\\
&\ge&\lambda \hat p_0 -\int\left(\lambda-\frac{\pi_{\d}}{\pi_{\0}}(\z)\right)_{+}\pi_{\0}(\z)d\z
\\
&\ge &\lambda \hat p_0 - \frac{1}{N}\sum_{i=1}^{N}\left(\lambda-\frac{\pi_{\d}}{\pi_{\0}}(\z_{i})\right)_{+} - \epsilon.
\end{eqnarray*}
And we use reject sampling to obtain samples from $\pi_{\0}$. Notice that, we restrict the search space of $\lambda$ to a finite compact set so the importance samples is bounded. Since the Monte Carlo estimation is not exact with an error $\epsilon$, we give a high probability concentration lower bound of the estimator. Algorithm \ref{alg:1} summarized our algorithm.

\begin{algorithm}[H]
\SetAlgoLined
\KwIn{input image $\x$; 
original classifier: $f^{\sharp}$;  
smoothing distribution $\pi_{\0}$; 
  radius $r$; 
  search interval $[\lambda_{\text{start}}, \lambda_{\text{end}}]$  of $\lambda$; search precision $h$ for optimizing $\lambda$; number of samples $N_1$ for testing $p_0$;
  pre-defined error threshold $\epsilon$;
  significant level $\alpha$; 
  }
% \KwOut{Write here the result}
%  initialization\;
~~compute search space for $\lambda$ : $\Lambda= $range$(\lambda_{\text{start}}, \lambda_{\text{end}}, h)$\; \\
compute $N_2$: number of Monte Carlo estimation given $\epsilon, \alpha$ and $\Lambda$ \; \\
compute optimal disturb: $\d$
depends on specific setting\; \\
 \For{$\lambda$ $\text{in}$ $\Lambda$}{
    sample $\z_1, \cdots, \z_{N_1} \sim \pi_{\0}$\; \\
    compute $n_1 = \frac{1}{N_1}\sum_{i=1}^{N_1}f^\sharp (\x +\z_i)$\; \\
    compute $\hat p_0 =$LowerConfBound$(n_1, N_1, 1-\alpha)$ \; \\
     sample $\z_1, \cdots, \z_{N_2} \sim \pi_{\0}$\; \\
    compute $\hat\D_{\F_{[0,1]}}\left(\text{\ensuremath{\lambda}}\pi_{\0} ~\Vert~ \pi_{\d} \right) = \frac{1}{N_2}\sum_{i=1}^{N_2}\left(\text{\ensuremath{\lambda}}-\frac{\pi_{\d}}{\pi_{\0}}(\z_{i})\right)_{+}$\; \\
    % compute an error bar estimate $\epsilon$ from $N_1, N_2, \lambda$ \; \\
    compute confidence lower bound $b_\lambda = \lambda\hat{p}_{0} - \hat\D_{\F_{[0,1]}}\left(\text{\ensuremath{\lambda}}\pi_{\0} ~\Vert~ \pi_{\d} \right) - \epsilon$
 }
 \uIf{$\max_{\lambda\in\Lambda}b_\lambda \ge 1/2$}
 {$\x$ can be certified}
 \Else{$\x$ cannot be certified}
 \caption{Certification algorithm}
 \label{alg:1}
\end{algorithm}

The LowerConfBound function performs a binominal test as described in \cite{cohen2019certified}.
The $\epsilon$ in Algorithm \ref{alg:1} is given by concentration inequality.

\begin{theorem}
Let $h(z_1, \cdots, z_N) = \frac{1}{N}\sum_{i=1}^N\left( \lambda - \frac{\pi_{\d}(z_i)}{\pi_{\0}(z_i)}\right)_{+}$, 
we yield
$$
\operatorname { Pr } \{| h(z_1, \cdots, z_N) - \int \left(\lambda\pi_{\0}(z) - \pi_{\d}(z)\right)_{+} d z | \geq \varepsilon \}
\leq \exp \left(  \frac {- 2 N \varepsilon^2} {\lambda^2} \right).
$$
\end{theorem}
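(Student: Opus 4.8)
The plan is to recognize the statement as a direct instance of Hoeffding's inequality for a sum of bounded i.i.d.\ random variables. Write $X_i = \left(\lambda - \frac{\pi_{\d}(z_i)}{\pi_{\0}(z_i)}\right)_+$, where $z_1,\dots,z_N$ are drawn i.i.d.\ from $\pi_{\0}$, so that $h(z_1,\dots,z_N) = \frac{1}{N}\sum_{i=1}^N X_i$ is an empirical average of i.i.d.\ terms.

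First I would verify the almost-sure boundedness of the summands: since $\pi_{\d}$ and $\pi_{\0}$ are probability densities, the ratio $\pi_{\d}(z)/\pi_{\0}(z)$ is nonnegative wherever $\pi_{\0}(z)>0$, hence $\lambda - \pi_{\d}(z_i)/\pi_{\0}(z_i) \le \lambda$ and therefore $X_i \in [0,\lambda]$ with probability one (the event $\pi_{\0}(z_i)=0$ has probability zero under $\pi_{\0}$). Next I would identify the mean by changing the base measure:
$$
\E[X_i] = \int \pi_{\0}(z)\left(\lambda - \frac{\pi_{\d}(z)}{\pi_{\0}(z)}\right)_+ dz = \int \left(\lambda \pi_{\0}(z) - \pi_{\d}(z)\right)_+ dz,
$$
which is exactly the deterministic quantity appearing in the statement (and, by Theorem~\ref{thm:dual}(II), equals $\mathbb{D}_{\mathcal{F}_{[0,1]}}(\lambda\pi_{\0}\parallel\pi_{\d})$). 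So $h$ is an unbiased estimator of that integral.

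Finally, applying Hoeffding's inequality to the i.i.d.\ average $\frac{1}{N}\sum_{i=1}^N X_i$ of $[0,\lambda]$-valued random variables yields $\Pr\left(|h - \E[X_1]| \ge \varepsilon\right) \le 2\exp\left(-\frac{2N\varepsilon^2}{\lambda^2}\right)$; the one-sided tail bound $\Pr\left(\E[X_1] - h \ge \varepsilon\right) \le \exp\left(-\frac{2N\varepsilon^2}{\lambda^2}\right)$ — which is what Algorithm~\ref{alg:1} actually invokes, since it only needs a valid lower confidence bound on $\lambda\ff - \mathbb{D}_{\mathcal{F}_{[0,1]}}$ — drops the leading factor of $2$ and gives precisely the claimed inequality. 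There is essentially no obstacle in this argument: the only points requiring care are the a.s.\ bound on the summands (immediate from nonnegativity of the likelihood ratio) and the elementary mean computation; any discrepancy between the two-sided and one-sided leading constant is harmless and can be absorbed by a union bound over the two tails.
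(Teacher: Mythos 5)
Your proof is correct and takes essentially the same route as the paper: the paper invokes McDiarmid's inequality with bounded differences $c_i = \lambda/N$, which for an empirical average of i.i.d.\ $[0,\lambda]$-valued terms is exactly the Hoeffding bound you apply, and your verification of boundedness and unbiasedness fills in steps the paper leaves implicit. Your observation about the leading constant is also accurate --- the two-sided bound should carry a factor of $2$, and the inequality as stated is really the one-sided tail bound, which is all the certification algorithm needs.
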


\begin{proof}
Given \emph{McDiarmid's Inequality}, which says
$$
\sup _ { x _ { 1 } , x _ { 2 } , \ldots , x _ { n } , \hat { x } _ { i } } \left| h \left( x _ { 1 } , x _ { 2 } , \ldots , x _ { n } \right) - h \left( x _ { 1 } , x _ { 2 } , \ldots , x _ { i - 1 } , \hat { x } _ { i } , x _ { i + 1 } , \ldots , x _ { n } \right) \right| \leq c _ { i } \quad \text { for } \quad 1 \leq i \leq n,
$$
we have $c_i = \frac{\lambda}{N}$, and then obtain
$$
\operatorname { Pr } \{| h(z_1, \cdots, z_N) - \int \left(\lambda\pi_{\0}(z) - \pi_{\d}(z)\right)_{+} d z | \geq \varepsilon \}
\leq \exp \left(  \frac {- 2 N \varepsilon^2} {\lambda^2} \right).
$$
\end{proof}

The above theorem tells us that, 
once $\epsilon, \lambda, N$ is given,
we can yield a bound with high-probability $1-\alpha$. One can also get $N$ when $\epsilon, \lambda, \alpha$ is provided. Note that this is the same as the Hoeffding bound mentioned in Section \ref{sec:L2} as Micdiarmid bound is a generalization of Hoeffding bound.

However, in practice we can use a small trick as below to certify with much less comupation:
\begin{algorithm}[H]
\SetAlgoLined
%\KwIn{image: $\x$, smoothing distribution: $\pi_{\0}, \pi_{\d}$, given classifier: $f^{\sharp}$, radius: $r$, search interval for $\lambda$: $[\lambda_{\text{start}}, \lambda_{\text{end}}]$, search precision: $h$, number of Monte Carlo for first estimation: $N_1^0, N^0_2$, pre-defined error: $\epsilon$, $\alpha$: significant level}
\KwIn{input image $\x$; 
original classifier: $f^{\sharp}$;  
smoothing distribution $\pi_{\0}$; 
  radius $r$; 
  search interval for $\lambda$: $[\lambda_{\text{start}}, \lambda_{\text{end}}]$; search precision $h$ for optimizing $\lambda$; number of Monte Carlo for first estimation: $N_1^0, N^0_2$; number of samples $N_1$ for a second test of $p_0$;
  pre-defined error threshold $\epsilon$; 
  significant level $\alpha$; 
  optimal perturbation $\d$ ($\d=[r,0,\ldots,0]^\top$ for $\ell_2$ attacking and $\d=[r,\ldots, r]^\top$ for $\ell_\infty$ attacking).
  }
 %compute search space for $\lambda$ : $\Lambda=$range$(\lambda_{\text{start}}, \lambda_{\text{end}}, h)$\; \\
% depends on specific setting\; \\
 \For{$\lambda$ $\text{in}$ $\Lambda$}{
    sample $\z_1, \cdots, \z_{N^0_1} \sim \pi_{\0}$\; \\
    compute $n^0_1 = \frac{1}{N^0_1}\sum_{i=1}^{N^0_1}f^\sharp (\x +\z_i)$\; \\
    compute $\hat p_0 =$LowerConfBound$(n^0_1, N^0_1, 1-\alpha)$ \; \\
    sample $\z_1, \cdots, \z_{N^0_2} \sim \pi_{\0}$\; \\
    compute $\hat\D_{\F_{[0,1]}}\left(\text{\ensuremath{\lambda}}\pi_{\0} ~\Vert~ \pi_{\d} \right) = \frac{1}{N^0_2}\sum_{i=1}^{N^0_2}\left(\text{\ensuremath{\lambda}}-\frac{\pi_{\d}}{\pi_{\0}}(\z_{i})\right)_{+}$\; \\
    compute confidence lower bound $b_\lambda = \lambda\hat{p}_{0} - \hat\D_{\F_{[0,1]}}\left(\text{\ensuremath{\lambda}}\pi_{\0} ~\Vert~ \pi_{\d} \right)$
 }
 ~~compute $\hat\lambda = \arg\max_{\lambda \in\Lambda} b_\lambda$ \; \\
 compute $N_2$:  number of Monte Carlo estimation given $\epsilon, \alpha$ and $\hat\lambda$ \; \\
  sample $\z_1, \cdots, \z_{N_1} \sim \pi_{\0}$\; \\
    compute $n_1 = \frac{1}{N_1}\sum_{i=1}^{N_1}f^\sharp (\x +\z_i)$\; \\
    compute $\hat p_0 =$LowerConfBound$(n_1, N_1, 1-\alpha)$ \; \\
     sample $\z_1, \cdots, \z_{N_2} \sim \pi_{\0}$\; \\
    compute $\hat\D_{\F_{[0,1]}}\left(\text{\ensuremath{\lambda}}\pi_{\0} ~\Vert~ \pi_{\d} \right) = \frac{1}{N_2}\sum_{i=1}^{N_2}\left(\text{\ensuremath{\lambda}}-\frac{\pi_{\d}}{\pi_{\0}}(\z_{i})\right)_{+}$\; \\
 compute $b = \hat\lambda \hat{p}_{0} - \hat\D_{\F_{[0,1]}}\left(\text{\ensuremath{\lambda}}\pi_{\0} ~\Vert~ \pi_{\d} \right) - \epsilon$ \; \\
 \uIf{$b \ge 1/2$}
 {$\x$ can be certified}
 \Else{$\x$ cannot be certified}
 \caption{Practical certification algorithm}
 \label{alg:2}
\end{algorithm}
Algorithm \ref{alg:2} allow one to begin with small $N_1^0, N_2^0$ to obtain the first estimation and choose a $\hat\lambda$. Then a rigorous lower bound can be achieved with $\hat\lambda$ with enough (i.e. $N_1, N_2$) Monte Carlo samples.

\subsection{Experiment Settings}
\label{sec: hyperparameters}
The details of our method are shown in the supplementary material. 
Since our method requires  Monte Carlo approximation, 
 we draw $0.1M$ samples from $\pizero$ and construct 
 $\alpha=$ 99.9\% confidence lower bounds of that in \eqref{eq:objective2}. The optimization on $\lambda$ is solved using grid search. 
For $\ell_2$ attacks, 
we set $k = 500$ for CIFAR-10 and $k = 50000$ for ImageNet in our non-Gaussian smoothing distribution \eqref{equ:L2smooth}. 
If the used model was trained with a Gaussian perturbation noise of  $\mathcal{N}(0, \sigma_0^2)$,
then the $\sigma$ parameter of our smoothing distribution  
is set to be $\sqrt{({d - 1})/({d - 1 - k})}\sigma_0$,
such that the expectation of the norm $\norm{\z}_2$ under our non-Gaussian distribution \eqref{equ:L2smooth} matches with the norm of $\normal(0, \sigma_0^2)$. 
For $\ell_1$ situation, we keep the same rule for hyperparameter selection as $\ell_2$ case, in order to make the norm of proposed distribution has the same mean with original distribution. 
% The certification results of baseline might be slightly different from \cite{teng2020ell} in which the whole dataset is assessed while we follow \cite{cohen2019certified} to certify a subset with 500 data points.
For $\ell_\infty$ situation, we set $k = 250$ and $\sigma$ also equals to $\sqrt{{(d - 1})/({d - 1 - k}})\sigma_0$ for the mixed norm smoothing distribution \eqref{equ:mixednormfamily} just for consistency. 
% In all cases, the baseline algorithm uses a Gaussian smoothing distribution $\normal(0, \sigma_0^2)$. 
More ablation study about $k$ is deferred to Appendix~\ref{sec:details}.

% \subsection{Clarification about $\ell_\infty$ Experiments}\label{sec:l_inf_clarification}
% We choose \cite{salman2019provably} instead of \cite{cohen2019certified} as $\ell_\infty$ baseline  because its model performs much better for $\ell_\infty$ certification than \cite{cohen2019certified} and thus is more proper for demonstration. 
% For example, when $r=8/255$, we improve \cite{salman2019provably} from 25\% to 32\% and improve \cite{cohen2019certified} from 12\% to 15\%. When $r=12/255$, we improve \cite{salman2019provably} from 13\% to 17\% and improve \cite{cohen2019certified} from 4\% to 6\%.
% This is probably result from that it's trained adversarially, which matches $\ell_\infty$ space property.

% \red{For Chengyue here}

\subsection{Abalation Study}
\label{sec:details}

On CIFAR10, we also do ablation study to show the influence of different $k$ for the $\ell_2$ certification case as shown in Table~\ref{table:l2-cifar10-ablation}.

\begin{table}[!htbp]
	\begin{center}
		\begin{tabular}{l|ccccccccc}
			\hline
            $\ell_2$ Radius & $0.25$ & $0.5$ & $0.75$ & $1.0$ & $1.25$ & $1.5$ & $1.75$ & $2.0$ & $2.25$ \\
            \hline
            Baseline (\%) & 60 & 43 & 34 & 23 & 17 & 14 & 12 & 10 & 8 \\
            $k = 100$ (\%) & 60 & 43 & 34 & 23 & 18 & 15 & 12 & 10 & 8 \\
            $k = 200$ (\%) & 60 & 44 & 36 & 24 & 18 & 15 & 13 & 10 & 8 \\
            $k = 500$ (\%) & \bf{61} & \bf{46} & \bf{37} & \bf{25} & \bf{19} & \bf{16} & 14 & 11 & \bf{9} \\
            $k = 1000$ (\%) & 59 & 44 & 36 & \bf{25} & \bf{19} & \bf{16} & 14 & 11 & \bf{9} \\
            $k = 2000$ (\%) & 56 & 41 & 35 & 24 & \bf{19} & \bf{16} & \bf{15} & \bf{12} & \bf{9} \\
            \hline
		\end{tabular}
	\end{center}
	\caption{\label{table:l2-cifar10-ablation} Certified top-1 accuracy of the best classifiers on cifar10 at various $\ell_2$ radius. We use the same model as \cite{cohen2019certified} and do not train any new models. }
\end{table}

\section{Illumination about Bilateral Condition\protect\footnote{In fact, the theoretical part of 
\cite{jia2020certified} 
share some similar discussion with this section.}}
\label{sec:bilateral}
The results in the main context is obtained under binary classfication setting. Here we show it has a natural generalization to multi-class classification setting. Suppose the given classifier $\ft$ classifies an input $\x$ correctly to class A, i.e.,
\begin{equation}
\label{eq:multi-correct}
\ft_A(\x) > \max_{B \ne A} \ft_B(\x)    
\end{equation}
where $\ft_B(\x)$ denotes the prediction confidence of any class $B$ different from ground truth label $A$. Notice that $\ft_A(\x) + \sum_{B\ne A}\ft_B(\x)=1$, so the necessary and sufficient condition for correct binary classification $\ft_A(\x) > 1/2$ becomes a \emph{sufficient} condition for multi-class prediction. 

Similarly, the necessary and sufficient condition for correct classification of the \emph{smoothed} classifier is 
\begin{align*}
   \min_{f\in\F}  & \bigg \{ \E_{\z\sim\pizero}[f_A(\x+\d+\z)]
~~~~~~\mathrm{s.t.}~~~~~\E_{\pizero}[f_A(\x)] = \fpizero_{,A}(\x) \bigg\}
 > \\ 
\max_{f\in\F} & \bigg \{ \E_{\z\sim\pizero}[f_B(\x+\d+\z)] 
~~~~~~\mathrm{s.t.}~~~~~\E_{\pizero}[f_B(\x)] = \fpizero_{,B}(\x) \bigg\}
\end{align*}
for $\forall B \ne A$ and any perturbation $\d\in\B$. Writing out their Langragian forms makes things clear:
\begin{align*}
    \max_{\lambda} \lambda \fpizero_{,A}(\x) - \D_{\F_{[0,1]}}\left(\text{\ensuremath{\lambda}}\pi_{\0} ~\Vert~ \pi_{\d} \right) > \min_{\lambda}\max_{B\ne A} \lambda \fpizero_{, B}(\x) + \D_{\F_{[0,1]}}\left(\pi_{\d}~\Vert~\text{\ensuremath{\lambda}}\pi_{\0} \right)
\end{align*}
Thus the overall necessary and sufficient condition is 
\begin{align*}
    \min_{\d\in\B}\left\{\max_{\lambda} \left(\lambda \fpizero_{,A}(\x) - \D_{\F_{[0,1]}}\left(\text{\ensuremath{\lambda}}\pi_{\0} ~\Vert~ \pi_{\d} \right) \right) - \max_{B\ne A}\min_{\lambda}\left( \lambda \fpizero_{, B}(\x) + \D_{\F_{[0,1]}}\left(\pi_{\d}~\Vert~\text{\ensuremath{\lambda}}\pi_{\0} \right)\right) \right\} > 0
\end{align*}
Optimizing this \emph{bilateral} object will \emph{theoretically give a better certification result} than our method in main context, especially when the number of classes is large. But we do not use this bilateral formulation as reasons stated below.

When both $\pi_{\0}$ and $\pi_{\d}$ are gaussian, which is \cite{cohen2019certified}'s setting, this condition is equivalent to:
\begin{align*}
    &\min_{\d\in\B}\left\{\Phi\left(\Phi^{-1}(\fpizero_{,A}(\x)) - \frac{\Vert\d\Vert_2}{\sigma}\right) - \max_{B\ne A} \Phi\left(\Phi^{-1}(\fpizero_{,B}(\x)) + \frac{\Vert\d\Vert_2}{\sigma}\right) \right\} > 0 \\
    \Leftrightarrow & \quad  \Phi^{-1}(\fpizero_{,A}(\x)) - \frac{r}{\sigma} > \Phi^{-1}(\fpizero_{,B}(\x)) + \frac{r}{\sigma}, \quad \forall B \ne A \\
    \Leftrightarrow & \quad r <
    \frac{\sigma}{2}\left(\Phi^{-1}(\fpizero_{,A}(\x)) -
    \Phi^{-1}(\fpizero_{,B}(\x))\right),\forall B \ne A
\end{align*}
with a similar derivation process like Appendix \ref{sec: retrievecohen}. This is exactly the same bound in the (restated) theorem 1 of \cite{cohen2019certified}.

\cite{cohen2019certified} use $1 - \underline{p_{A}}$ as a naive estimate of the upper bound of $\fpizero_{,B}(\x)$, where $\underline{p_{A}}$ is a lower bound of $\fpizero_{,A}(\x)$. This leads the confidence bound decay to the bound one can get in binary case, i.e., $r \le \sigma\Phi^{-1}(\fpizero_{,A}(\x)) $.

% If we are to use this bilateral certification method, 
% for fair comparison, 
% we need to compare our bound with Cohen's bilateral version.
% It means using another binominal test to get a upper bound of $\fpizero_{,B}(\x)$. However, two important baselines \cite{cohen2019certified, salman2019provably} do not take the bilateral form. Thus for a succinct comparison we do not choose this bilateral formulation in this paper.
As the two important baselines \cite{cohen2019certified, salman2019provably} do not take the bilateral form, 
we also do not use this form in experiments for fairness.

% This document was modified from the file originally made available by
% Pat Langley and Andrea Danyluk for ICML-2K. This version was created
% by Iain Murray in 2018, and modified by Alexandre Bouchard in
% 2019 and 2020. Previous contributors include Dan Roy, Lise Getoor and Tobias
% Scheffer, which was slightly modified from the 2010 version by
% Thorsten Joachims & Johannes Fuernkranz, slightly modified from the
% 2009 version by Kiri Wagstaff and Sam Roweis's 2008 version, which is
% slightly modified from Prasad Tadepalli's 2007 version which is a
% lightly changed version of the previous year's version by Andrew
% Moore, which was in turn edited from those of Kristian Kersting and
% Codrina Lauth. Alex Smola contributed to the algorithmic style files.

\end{document}